\def\eqref#1{equation~\ref{#1}}
\def\1{\bm{1}}
\def\eps{{\epsilon}}
\newcommand{\R}{\mathbb{R}}
\DeclareMathOperator*{\argmax}{arg\,max}
\DeclareMathOperator*{\argmin}{arg\,min}
\newcommand{\loss} {\mathcal{L}}
\newcommand{\eat}[1]{}
\newcommand{\out}{\eat}
\newtheorem{theorem}{Theorem}[section]
\newtheorem{proposition}[theorem]{Proposition}
\newtheorem{definition}[theorem]{Definition}
\newtheorem{lemma}[theorem]{Lemma}
\newcommand{\bz}[1]{{\color{blue} BZ: #1}}
\newcommand{\Rmnum}[1]{\expandafter\@slowromancap\romannumeral #1@}
\def\vv{{\bm{v}}}
\def\vw{{\bm{w}}}
\def\vx{{\bm{x}}}
\def\vz{{\bm{z}}}
\def\eps{{\varepsilon}}
\newcommand{\ro}{\partial}
\renewcommand{\L}{\mathcal{L}}
\newcommand{\pa}[1]{\left(#1\right)}
\newcommand{\br}[1]{\left[#1\right]}
\newcommand{\bk}[1]{\left<#1\right>}
\newcommand{\grad}{\nabla}
\DeclareMathOperator{\Lie}{Lie}
\def\vv{{\bm{v}}}
\def\vw{{\bm{w}}}
\def\vx{{\bm{x}}}
\def\eps{{\varepsilon}}
\newcommand{\cL}{\mathcal{L}}
\newcommand{\eqdef}{\overset{\text{def}}{=}}
\newcommand{\EE}[2]{\mathbb{E}_{#1}\left[#2\right] }
\newcommand{\E}[1]{\mathbb{E}\left[#1\right] }
\newcommand{\norm}[1]{\lVert#1\rVert}
\newcommand{\dotprod}[1]{\left< #1\right>}
\theoremstyle{plain}
\newenvironment{manualtheorem}[1]{%
  \manualtheoreminner
}{\endmanualtheoreminner}
\newenvironment{manualproposition}[1]{%
  \manualpropositioninner
}{\endmanualpropositioninner}
\theoremstyle{definition}
\theoremstyle{remark}
\title{Improving Convergence and Generalization Using Parameter Symmetries}
\author{%
  Bo Zhao \\
  University of California San Diego \\
  \texttt{bozhao@ucsd.edu} \\
  \And
  Robert M. Gower \\
  Flatiron Institute \\
  \texttt{rgower@flatironinstitute.org} \\
  \And
  Robin Walters \\
  Northeastern University \\
  \texttt{r.walters@northeastern.edu} \\
  \And
  \hspace{21.5pt}Rose Yu \\
  \hspace{21.5pt}University of California San Diego \\
  \hspace{21.5pt}\texttt{roseyu@ucsd.edu} \\
}
\begin{document}

\maketitle

\begin{abstract}
In many neural networks, different values of the parameters may result in the same loss value.
Parameter space symmetries are loss-invariant transformations that change the model parameters. 
Teleportation applies such transformations to accelerate optimization. However, the exact mechanism behind this algorithm's success is not well understood.
In this paper, we show that teleportation not only speeds up optimization in the short-term, but gives overall faster time to convergence.
Additionally, teleporting to minima with different curvatures improves generalization, which suggests a connection between the curvature of the minimum and generalization ability.
Finally, we show that integrating teleportation into a wide range of optimization algorithms and
optimization-based meta-learning improves convergence. 
Our results showcase the versatility of teleportation and demonstrate the potential of incorporating symmetry in optimization.
\end{abstract}

\section{Introduction}

Given a deep neural network architecture and a dataset, there may be multiple points in the parameter space that correspond to the same loss value. Despite having the same loss, the gradients and learning dynamics originating from these points can be very different \citep{kunin2021neural, van2017l2, grigsby2022functional}.
Parameter space symmetries, which are transformations of the parameters that leave the loss function invariant, allow us to \emph{teleport} between points in the parameter space on the same level set of the loss function \citep{armenta2023neural}.
In particular, teleporting to a steeper point in the loss landscape leads to faster optimization. 

Despite the empirical evidence, the exact mechanism of how teleportation improves convergence in optimizing non-convex objectives remains elusive. 
Previous work shows that gradient increases momentarily after a teleportation, but could not show that this results in overall faster convergence \citep{zhao2022symmetry}. 
In this paper, we provide theoretical guarantees on the convergence rate. In particular, we show that stochastic gradient descent (SGD) with teleportation converges to a basin of stationary points, where every point reachable by teleportation is also stationary. 
We also provide conditions under which one teleportation guarantees optimality of the entire gradient flow trajectory.

Previous applications of teleportation are limited to accelerating optimization. The second part of this paper explores a different objective -- improving generalization. 
We relate properties of minima to their generalization ability and optimize them using teleportation. 
We empirically verify that certain sharpness metrics are correlated with generalization \citep{keskar2017large}, although teleporting towards flatter regions has negligible effects on the validation loss. 
Additionally, we hypothesize that generalization also depends on the curvature of minima.
For fully connected networks, we derive an explicit expression for estimating curvatures and show that teleporting towards larger curvatures improves the model's generalizability.


To demonstrate the wide applicability of parameter space symmetry, we expand teleportation to standard optimization algorithms
beyond SGD, including  momentum, AdaGrad, RMSProp, and Adam.
Experimentally, teleportation improves the convergence speed for these algorithms.
Inspired by conditional programming and optimization-based meta-learning \citep{andrychowicz2016learning}, we also propose a meta-optimizer to learn where to move parameters in a loss level set. This approach avoids the computation cost of optimization on group manifolds and improves upon existing meta-learning methods that are restricted to local updates. 

The convergence speedup, applications in improving generalization, and the ability to integrate with different optimizers demonstrate the potential of improving optimization using symmetry. In summary, our main contributions are:
\begin{itemize}
    \item  theoretical guarantees that teleportation accelerates the convergence rate of SGD;
    \item quantifying the curvature of a minimum and evidence of its correlation with generalization; 
    \item a teleportation-based algorithm to improve generalization; 
    \item various optimization algorithms with integrated teleportation including momentum, AdaGrad, and optimization-based meta-learning.
\end{itemize}
\section{Related Work}
\paragraph{Parameter space symmetry.}
Continuous symmetries have been identified in the parameter space of various architectures, including homogeneous activations \citep{badrinarayanan2015symmetry, du2018algorithmic}, radial rescaling activations \citep{ganev2021universal}, and softmax and batchnorm functions \citep{kunin2021neural}.
Permutation symmetry has been linked to the structure of minima \citep{simsek2021geometry, entezari2022role}. 
Quiver representation theory provides a more general framework for symmetries in neural networks with pointwise \citep{armenta2021representation} and rescaling activations \citep{ganev2022quiver}. 
A new class of nonlinear and data-dependent symmetries are identified in \citep{zhao2022symmetries}.
Since symmetry defines transformations of parameters within a level set of the loss function, these works are the basis of the teleportation method discussed in our paper.

Knowledge of parameter space symmetry motivates new optimization methods.
One line of work seeks algorithms that are invariant to symmetry transformations \citep{neyshabur2015path-sgd, meng2019mathcal}.
Others search in the orbit for parameters that can be optimized faster \citep{armenta2023neural, zhao2022symmetry}. We build on the latter by providing theoretical analysis on the improvement of the convergence rate and by augmenting the teleportation objective to improve generalization. 

\paragraph{Initializations and restarts.}
Teleportation before training changes the initialization of parameters, which is known to affect the training dynamics.
For example, imbalance between layers at initialization affects the convergence of gradient flows in two-layer models \citep{tarmoun2021understanding}.
Different initializations, among other sources of variance, also lead to different model performance after convergence \citep{dodge2020fine,bouthillier2021accounting, ramasinghe2022you}.
In addition to initialization, teleportation allows changes in landscape multiple times throughout the training. 

Teleportation during training re-initializes the parameters to a point with the same loss. Its effect can resemble warm restart \citep{loshchilov2016sgdr}, which encourages parameters to move to more stable regions by periodically increasing the learning rate.
Compared to restarts, teleportation leads to smaller temporary increase in loss and provides more control of where to move the parameters.


\paragraph{Sharpness of minima and generalization.}
The sharpness of minima has been linked to the generalization ability of models both empirically and theoretically \citep{hochreiter1997flat, keskar2017large, petzka2021relative, ding2022flat, zhou2020towards}, which motivates optimization methods that find flatter minima \citep{chaudhari2017entropy, foret2021sharpness, kwon2021asam, kim2022fisher}. 
We employ teleportation to search for flatter points along the loss level sets.
The sharpness of a minimum is often defined using properties of the Hessian of the loss function, such as the number of small eigenvalues \citep{keskar2017large,chaudhari2017entropy, sagun2017empirical} or the product of the top $k$ eigenvalues \citep{wu2017towards}. 
Alternatively, sharpness can be characterized by the maximum loss within a neighborhood of a minimum \citep{keskar2017large,foret2021sharpness,kim2022fisher} or approximated by the growth in the loss curve averaged over random directions \citep{izmailov2018averaging}. 
The sharpness of minima does not always capture generalization \citep{dinh2017sharp} \citep{andriushchenko2023modern}. Some reparametrizations do not affect generalization but can lead to minima with different sharpness.








\section{Theoretical Guarantees for Improving Optimization}
\label{sec:optimization}
In this section, we provide a theoretical analysis of teleportation. We show that with teleportation, SGD converges to a basin of stationary points. Building on its relation to Newton's method, teleportation leads to a mixture of linear and quadratic convergence. Lastly, in certain loss functions, one teleportation guarantees optimality of the entire gradient flow trajectory. 

\paragraph{Symmetry Teleportation.} We briefly review the symmetry teleportation algorithm  \citep{zhao2022symmetry}, which searches for steeper points in a loss level set to accelerate gradient descent.
Consider the optimization problem 
\begin{equation*}
\vw^* = \underset{\vw \in \R^d}{\argmin}~ \cL(\vw), \quad \cL(\vw) \eqdef \EE{\xi\sim \cal  D}{\cL(\vw,\xi)}
\end{equation*}
where $\mathcal{D}$ is the data distribution, $\xi$ is data sampled from $\mathcal{D},$ $\cL$ the loss, $\vw$ the parameters of the model, and $\R^d$ the parameter space.
Let $ G$ be a group acting on the
parameter space, such that
\begin{align*}
\cL(\vw) \; =\; \cL(g \cdot \vw), \quad \forall g \in  G, \; \forall \vw \in \R^d.
\end{align*}
Symmetry teleportation uses gradient ascent to find the group element $g$ that maximizes the magnitude of the gradient, and applies $g$ to the parameters while leaving the loss value unchanged:
\begin{align*}
    \vw' =& g \cdot \vw, \quad g = \underset{g\in G}{\text{argmax}}\| \grad \loss (g \cdot \vw)\|^2.
\end{align*}


\subsection{Teleportation and SGD}
At each iteration $t \in \mathbb{N}^+$ in SGD, we choose a group element  $g^t \in G$ and 
use teleportation  before each gradient step as follows
\begin{equation}\label{eq:stochgradg}
\vw^{t+1}=g^t\cdot \vw^t - \eta \nabla \cL(g^t\cdot \vw^t,\xi^t).
\end{equation}
Here $ \eta$ is a learning rate,  $ \nabla \cL(\vw^t,\xi^t)$ is the gradient of  $\cL(\vw^t,\xi^t) $ with respect to the parameters $\vw$,
 and $\xi^t \sim \mathcal{D}$ is a mini-batch of data sampled i.i.d from the data distribution at each iteration. 
   
 By choosing the group element that maximizes the gradient norm, we show in the following theorem that the iterates in~\eqref{eq:stochgradg} converge to a basin of stationary points, where all points that can be reached via teleportation are also stationary points (visualized in Figure \ref{fig:thm1}).

 \begin{minipage}{0.61\textwidth}
\begin{theorem} \label{theo:telesgdconv}
(Smooth non-convex)
Let $\cL(\vw, \xi)$ be $\beta$--smooth and let
\[\sigma^2 \eqdef  \cL(\vw^*) -\E{\inf_\vw \cL(\vw,\xi)}.\]
Consider the iterates $\vw^t$ given by~\eqref{eq:stochgradg} where
\begin{equation*}
g^t \in \arg\max_{g\in G} \norm{ \nabla\cL(g\cdot \vw^t)}^2,
\end{equation*}
  which we assume exists.
  ~\footnotemark[1]
If  $\eta = \frac{1}{\beta \sqrt{ T-1}}$ then 
\begin{align}\label{eq:telesgdconv}
\min_{t=0,\ldots, T-1} &\E{ \max_{g\in G}  \norm{ \nabla\cL(g \cdot \vw^t)}^2  } \cr
\leq& \frac{2\beta}{ \sqrt{ T-1}}    \E{\cL(\vw^{0})-\cL(\vw^*)} + \frac{ \beta \sigma^2}{ \sqrt{ T-1}},~~
\end{align}
where the expectation is the total expectation with respect to the data $\xi^t$ for $t=0, \ldots, T-1.$
\end{theorem}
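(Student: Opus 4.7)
The plan is to run a one-step descent lemma using the $\beta$-smoothness of $\cL$, then control the stochastic gradient's second moment via the noise bound $\sigma^{2}$, and finally telescope over $t=0,\dots,T-1$ to extract the $\min_{t}\E[\cdot]$ bound. Write $\tilde w^{t}\eqdef g^{t}\cdot w^{t}$ and note the two identities that make teleportation pain-free: $\cL(\tilde w^{t})=\cL(w^{t})$ (by $\mathcal{G}$-invariance) and $\|\nabla\cL(\tilde w^{t})\|^{2}=\max_{g\in\mathcal{G}}\|\nabla\cL(g\cdot w^{t})\|^{2}$ (by the choice of $g^{t}$). Smoothness applied at $\tilde w^{t}$ gives
\begin{equation*}
\cL(w^{t+1})\;\leq\;\cL(w^{t})-\eta\,\bigl\langle \nabla\cL(\tilde w^{t}),\,\nabla\cL(\tilde w^{t},\xi^{t})\bigr\rangle+\tfrac{\beta\eta^{2}}{2}\,\|\nabla\cL(\tilde w^{t},\xi^{t})\|^{2},
\end{equation*}
and taking the conditional expectation over $\xi^{t}$ eliminates the inner product's cross term into $-\eta\,\|\nabla\cL(\tilde w^{t})\|^{2}$.

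Next I would bound $\E_{\xi}\|\nabla\cL(v,\xi)\|^{2}$ using that each $\cL(\cdot,\xi)$ is $\beta$-smooth. A single gradient step from $v$ on the sample loss yields
\begin{equation*}
\tfrac{1}{2\beta}\|\nabla\cL(v,\xi)\|^{2}\;\leq\;\cL(v,\xi)-\inf_{u}\cL(u,\xi),
\end{equation*}
so after taking expectation over $\xi$ and using the definition of $\sigma^{2}$,
\begin{equation*}
\E\,\|\nabla\cL(v,\xi)\|^{2}\;\leq\;2\beta\bigl(\cL(v)-\cL(w^{*})\bigr)+2\beta\sigma^{2}.
\end{equation*}
Setting $v=\tilde w^{t}$ (and using $\cL(\tilde w^{t})=\cL(w^{t})$) gives the key one-step recursion
\begin{equation*}
\E_{t}\cL(w^{t+1})\;\leq\;\cL(w^{t})-\eta\,\|\nabla\cL(\tilde w^{t})\|^{2}+\beta^{2}\eta^{2}\bigl(\cL(w^{t})-\cL(w^{*})\bigr)+\beta^{2}\eta^{2}\sigma^{2}.
\end{equation*}

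Now I would take total expectation, rearrange, and sum from $t=0$ to $T-1$. Writing $\delta^{t}\eqdef\E[\cL(w^{t})-\cL(w^{*})]$, this becomes
\begin{equation*}
\eta\,\E\|\nabla\cL(\tilde w^{t})\|^{2}\;\leq\;\delta^{t}-\delta^{t+1}+\beta^{2}\eta^{2}\delta^{t}+\beta^{2}\eta^{2}\sigma^{2},
\end{equation*}
which after summation telescopes the first two terms into $\delta^{0}-\delta^{T}\leq\delta^{0}$, while the $\beta^{2}\eta^{2}\delta^{t}$ terms are the bookkeeping nuisance. Using $\min_{t}\E\|\nabla\cL(\tilde w^{t})\|^{2}\leq\tfrac{1}{T}\sum_{t}\E\|\nabla\cL(\tilde w^{t})\|^{2}$ and dividing by $\eta T$, the choice $\eta=1/(\beta\sqrt{T-1})$ makes $\beta^{2}\eta^{2}T=T/(T-1)=O(1)$, so each sum contributes on the order of $\beta/\sqrt{T-1}$ times $\delta^{0}$ or $\sigma^{2}$, matching the stated bound.

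The main obstacle is handling the $\beta^{2}\eta^{2}\sum_{t}\delta^{t}$ term, since $\delta^{t}$ is not a priori bounded and may grow with $t$. The clean way around this is to observe that the recursion also gives $\delta^{t+1}\leq(1+\beta^{2}\eta^{2})\delta^{t}+\beta^{2}\eta^{2}\sigma^{2}$, so by discrete Gronwall with $\beta^{2}\eta^{2}=1/(T-1)$ and $t\leq T-1$, $(1+1/(T-1))^{T-1}\leq e$, yielding $\delta^{t}\leq e(\delta^{0}+\sigma^{2})$ uniformly. Plugging this a priori bound into the telescoped inequality absorbs the extra sum into constant multiples of $\delta^{0}$ and $\sigma^{2}$, and a final division by $\eta T$ produces the advertised $\tfrac{2\beta}{\sqrt{T-1}}\,\E[\cL(w^{0})-\cL(w^{*})]+\tfrac{\beta\sigma^{2}}{\sqrt{T-1}}$ rate. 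The only thing left is to verify that whichever tightening gives the stated constants $2$ and $1$ rather than slightly larger universal constants (the argument above yields the correct $\beta/\sqrt{T-1}$ scaling in both terms).
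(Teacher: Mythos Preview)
Your derivation up to the one-step recursion
\[
\eta\,\E\|\nabla\cL(\tilde w^{t})\|^{2}\;\leq\;(1+\beta^{2}\eta^{2})\,\delta^{t}-\delta^{t+1}+\beta^{2}\eta^{2}\sigma^{2}
\]
is exactly what the paper does (same smoothness descent step, same second-moment bound via the sample-wise descent lemma, same use of $\cL(\tilde w^{t})=\cL(w^{t})$). The divergence is in how the $(1+\beta^{2}\eta^{2})\delta^{t}$ term is absorbed. You sum directly and then invoke a discrete Gronwall estimate $\delta^{t}\leq e(\delta^{0}+\sigma^{2})$ to control $\sum_{t}\delta^{t}$; this is correct and yields the right $\beta/\sqrt{T-1}$ scaling, but as you note it produces constants of order $1+e$ rather than the stated $2$ and $1$. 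The paper instead applies a reweighting trick (due to Stich): multiply the recursion by $\alpha_{t}=(1+\beta^{2}\eta^{2})^{-(t+1)}$, chosen so that $\alpha_{t}(1+\beta^{2}\eta^{2})=\alpha_{t-1}$, which makes the right-hand side telescope exactly. After summing and dividing by $A=\sum_{t}\alpha_{t}\eta$, one gets $\min_{t}\E[\cdot]\leq \E[\delta^{0}]/A+\beta^{2}\eta\,\sigma^{2}$, and a lower bound $A\geq 1/(2\beta^{2}\eta)$ (valid once $T\geq(1+\beta^{2}\eta^{2})/(\beta^{2}\eta^{2})$, which the stepsize choice enforces) delivers the precise constants $2$ and $1$. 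So your approach is sound and more elementary, but the reweighting is what closes the constant gap you flagged at the end.
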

\end{minipage} \hspace{0.07\textwidth}
\begin{minipage}{0.31\textwidth}
\centering
\includegraphics[width=1.0\textwidth]{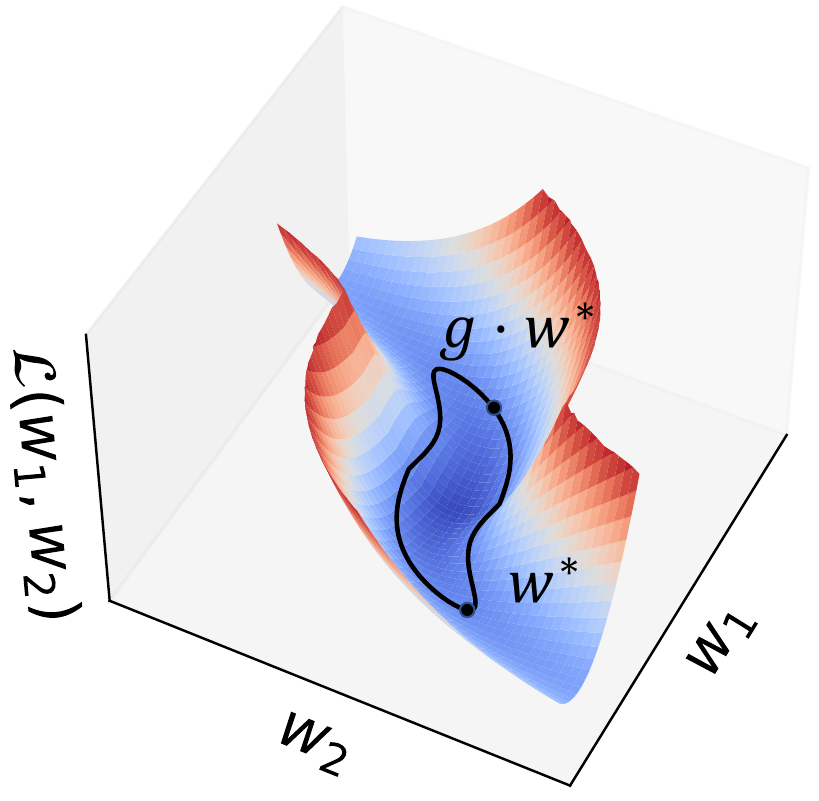}
\captionof{figure}{{{With teleportation, SGD converges to a basin where all points on the level set are stationary points.}} }
\label{fig:thm1}
\vskip -40pt
\end{minipage}
\footnotetext[1]{For instance when $G$ is compact and $\norm{ \nabla\cL(g\cdot \vw^t)}$ is continuous over $G$, or when the gradient is a coercive function and $G$ is bounded.
 }

This theorem is an improvement over vanilla SGD, for which we would have instead that
\begin{align*}
\min_{t=0,\ldots, T-1}  &\E{\norm{ \nabla\cL(\vw^t)}^2  } 
\leq  \frac{2\beta}{ \sqrt{ T-1}}    \E{\cL(\vw^{0})-\cL(\vw^*)} + \frac{ \beta \sigma^2}{ \sqrt{ T-1}} .
\end{align*}
The above only guarantees that there exists a single point $\vw^t$ for which the gradient norm will eventually be small.  In contrast,  our result in~\eqref{eq:telesgdconv} guarantees that for all points over the orbit $\{ g \cdot \vw^t \; : \; \forall g \in G \}$, the gradient norm will be small.
For strictly convex loss functions, $\max_{g\in G} \norm{ \nabla\cL(g\cdot \vw)}^2$ is non-decreasing with $\loss(\vw)$. In this case, the value of $\loss$ is smaller after $T$ steps of SGD with teleportation, compared to vanilla SGD (Proposition \ref{prop:convex-loss}).

\out{
For non-convex loss functions that are smooth and are $\mu$--PL (Polyak-\L ojasiewicz), the following theorem shows that the expected loss converges to the optimal loss for every point on the orbit of $w^t$. 
\begin{theorem}{(Error Bound)}
\label{theo:error-bound}
    Consider the setting of Theorem~\ref{theo:telesgdconv}. If in addition to $\beta$--smoothness we also assume that the $\mu$-PL inequality holds, that is  
    \begin{eqnarray}\label{eq:PL}
      \loss(w)-\inf \loss \leq \frac{1}{2 \mu} \|\nabla \loss(w)\|^2, \quad \forall w \in \R^s,
    \end{eqnarray}
   then the  $w^t$ iterates given by~\eqref{eq:stochgradg} with  learning rate $\eta = \frac{1}{\beta \sqrt{ T-1}}$ 
converge according to
\begin{align}\label{eq:PLconv}
\min_{t=0,\ldots, T-1} &\E{  \max_{g\in G} \norm{g\circ w^t- w^*(g\circ w^t)}  } 
\leq \frac{2\beta}{ \mu}\frac{1}{\sqrt{ T-1}}    \E{\cL(w^{0})-\cL(w^*)} + \frac{\beta}{ \mu}\frac{  \sigma^2}{ \sqrt{ T-1}},
\end{align}
where $w^*(g\circ w^t)$ is the projection of $g\circ w^t$ onto to set of minimizers of $\cL(w).$
\end{theorem}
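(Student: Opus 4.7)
The plan is to lift Theorem~\ref{theo:telesgdconv} through the PL condition, turning its gradient-norm bound into a distance-to-optimum bound, by combining PL with $\mathcal{G}$-invariance on every point of the orbit.

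First I would invoke the quadratic growth property implied by $\mu$-PL, namely $\tfrac{\mu}{2}\|w-w^*\|^2 \leq \cL(w)-\cL(w^*)$ for all $w\in\R^d$, which is a standard consequence of integrating the PL inequality along the gradient flow $\dot v=-\nabla\cL(v)$. Applying this to a teleported iterate $g\cdot w^t$ and using $\cL(g\cdot w^t)=\cL(w^t)$ from \eqref{eq:invariant} gives a uniform-in-$g$ bound
\begin{equation*}
\max_{g\in \mathcal{G}}\|g\cdot w^t-w^*\|^2 \;\leq\; \tfrac{2}{\mu}\bigl(\cL(w^t)-\cL(w^*)\bigr).
\end{equation*}

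Next I would use the PL inequality~\eqref{eq:PL} at $w^t$ itself, together with the trivial bound $\|\nabla\cL(w^t)\|^2\leq \max_{g\in\mathcal{G}}\|\nabla\cL(g\cdot w^t)\|^2$ (attained at $g=e$), to obtain $\cL(w^t)-\cL(w^*)\leq \tfrac{1}{2\mu}\max_{g}\|\nabla\cL(g\cdot w^t)\|^2$. Chaining with the previous display produces the key deterministic estimate
\begin{equation*}
\max_{g\in \mathcal{G}}\|g\cdot w^t-w^*\|^2 \;\leq\; \tfrac{1}{\mu^2}\,\max_{g\in \mathcal{G}}\|\nabla\cL(g\cdot w^t)\|^2,
\end{equation*}
which transports the orbit-wide gradient guarantee of Theorem~\ref{theo:telesgdconv} into an orbit-wide distance guarantee.

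Finally I would take the total expectation over $\xi^0,\dots,\xi^{T-1}$, take the minimum over $t$, and plug in Theorem~\ref{theo:telesgdconv} on the right-hand side; substituting $\eta=\tfrac{1}{\beta\sqrt{T-1}}$ then reads off the $1/\sqrt{T-1}$ rate with the stated $\mu^{-1}$ scaling of the coefficients. The main obstacle I anticipate is the final cosmetic step of reconciling the squared-norm bound that this chain produces with the norm bound written in~\eqref{eq:PLconv}: Jensen's inequality $\E{X}\leq\sqrt{\E{X^2}}$ converts between the two, and a little care is needed to match the exact constants $\tfrac{2\beta}{\mu\sqrt{T-1}}$ and $\tfrac{\beta\sigma^2}{\mu\sqrt{T-1}}$ on the right-hand side.
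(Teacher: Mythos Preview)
Your proof is correct in substance but takes a slightly different route from the paper's. The paper's argument is a one-liner: it cites the equivalence between the $\mu$-PL inequality and the \emph{error bound} condition $\mu\|w-w^*\|\le\|\nabla\cL(w)\|$ (Appendix~A of~\cite{PLschmidt}), applies this pointwise at $w=g\cdot w^t$, takes the maximum over $g$, and substitutes into~\eqref{eq:telesgdconv} ``dividing through by~$\mu$.'' You instead go via quadratic growth $\tfrac{\mu}{2}\|w-w^*\|^2\le\cL(w)-\cL(w^*)$ combined with $\mathcal{G}$-invariance of the loss, and then re-apply PL to close the chain; this is more self-contained (no external citation) and makes explicit use of the invariance~\eqref{eq:invariant}, whereas the paper's route uses the error bound directly at each orbit point without needing the loss to be constant on the orbit. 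Both paths land on the same deterministic inequality $\max_g\|g\cdot w^t-w^*\|^2\le\mu^{-2}\max_g\|\nabla\cL(g\cdot w^t)\|^2$.

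You are right to flag the cosmetic mismatch between a squared-norm bound with factor $\mu^{-2}$ and the unsquared norm with factor $\mu^{-1}$ appearing in~\eqref{eq:PLconv}; the paper's proof does not address this either, simply asserting the division by $\mu$. So your caution there is warranted rather than an obstacle you introduced.
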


Assuming the loss function is $\mu$--PL can be reasonable for neural networks that are overparametrized.
Indeed, in~\cite{LIU202285} the authors showed that, for overparametrized neural networks, when the task is regression, the loss function can often be $\mu$--PL. }

\subsection{Teleportation and Newton's method}
Intuitively,  teleportation can speed up optimization as it behaves similarly to Newton's method.
After a teleportation that takes parameters to a critical point on a level set, the gradient descent direction is the same as the Newton direction \citep{zhao2022symmetry}. As a result, we can leverage the convergence of Newton's method to derive the convergence rate of teleportation for the deterministic setting. 

\begin{proposition}[Quadratic term in convergence rate]\label{lem:newtondir}
Let $\loss$ be strictly convex and let $\vw_0\in \R^d$.  Let 
\begin{align*}
 &\vw' \in \argmax_{\vw\in \R^d} \frac{1}{2}\|\nabla \loss (\vw)\|^2 , \quad \mathrm{s.t.}\quad \loss(\vw) =\loss(\vw_0). 
  \end{align*}
Let $\nabla^2 \loss$ be the Hessian of $\loss$, and $\lambda_{\max}(\nabla^2 \loss(\vw))$ be the largest eigenvalue of $\nabla^2 \loss(\vw)$.
If $ \nabla \loss(\vw') \neq 0$, then there exists $\lambda_0$ such that
$ 0\leq  \lambda_0 \leq \lambda_{\max} ( \nabla^2 \loss(\vw_0)),$
and one step of gradient descent after teleportation with learning rate $\gamma >0$ gives
\begin{align}
 \vw_1&= \vw' -\gamma  \nabla \loss(\vw') 
 = \vw' -\gamma \lambda_0 \nabla^2 \loss(\vw')^{-1} \nabla \loss(\vw').\label{eq:teleisnewton}
\end{align}
Let $\vw' = g_0 \cdot \vw_0$. If  $\gamma \leq \frac{1}{\lambda_0}$, $\cL$ is a $\mu$--strongly convex $L$--smooth function, and the Hessian is $G$--Lipschitz,
then we have that
\[ \|\vw_1-\vw^*\| \leq    \frac{G}{2\mu}\|g_0 \cdot \vw_0-\vw^*\|^2 + |1-\gamma\lambda_0|\frac{L}{2\mu} \|g_0 \cdot \vw_0-\vw^*\|.\]
\end{proposition}
More details about the assumptions and the proof are in Appendix~\ref{sec:newtons-proof}.  Note that due to unknown step size $\lambda_0$, extra care is needed in establishing this convergence rate.  

The above proposition shows that taking one step of teleportation and one gradient step, the result is equal to taking a dampened Newton step~(\eqref{eq:teleisnewton}). Hence, the  convergence rate has a quadratically contracting term 
 $\|g_0 \cdot \vw_0-\vw^*\|^2 $, which is typical of second order methods. In particular, setting $\gamma =1/\lambda_0$ we would have local quadratic convergence. In contrast, without the teleportation step and under the same assumptions, we would have the following linear convergence 
\[ \|\vw_1-\vw^*\| \leq \left(1-\mu \gamma\right) \|\vw_0-\vw^*\| \]
for $\gamma \leq \frac{1}{L}$ using gradient descent. Thus there would be no quadratically contracting term.

\subsection{When is one teleportation enough}
\label{sec:one-teleportation-enough}
Despite the guaranteed improvement in convergence, teleporting before every gradient descent step is computationally expensive. 
Hence we teleport only occasionally. In fact, for certain optimization objectives, every point on the gradient flow has the largest gradient norm in its loss level set after one teleportation \citep{zhao2022symmetry}. 
In past work, this result is limited to convex quadratic functions.
In this section, we give a sufficient condition for when one teleportation results in an optimal trajectory for general loss functions.
Full proofs can be found in Appendix \ref{appendix:one-teleportation-enough}.

Let $V: \mathcal{M} \xrightarrow[]{} T\mathcal{M}$ be a vector field on the manifold $\mathcal{M}$, where $T\mathcal{M}$ denotes the associated tangent bundle. Here we consider the parameter space $\mathcal{M} = \R^n$, although results in this section can be extended to optimization on other manifolds. In this case, we may write $V = v^i \frac{\ro }{\ro w^i}$ using the component functions $v^i: \R^n \xrightarrow[]{} \R$ and coordinates $w^i$. 

Consider a smooth loss function $\loss: \mathcal{M} \xrightarrow[]{} \R$. 
Let $G$ be a symmetry group of $\loss$, i.e. $\loss(g \cdot \vw) = \loss(\vw)$ for all $\vw \in \mathcal{M}$ and $g \in G$.
Let $\mathfrak{X}$ be the set of all vector fields on $\mathcal{M}$. 
Let $R = r^i \frac{\ro }{\ro w^i}$, where $r^i = -\frac{\ro \loss}{\ro w_i}$, be the reverse gradient vector field. 
Let $\mathfrak{X}_{\perp} = \{A=a^i \frac{\ro }{\ro w^i} \in \mathfrak{X}|~ a^i \in C^\infty(\mathcal{M}) \text{ and } \sum_i a^i (\vw) 
r^i(\vw) = 0, \forall \vw \in \mathcal{M}\}$ be the set of vector fields orthogonal to $R$. 
If $G$ is a Lie group, the infinitesimal action of its Lie algebra $\mathfrak{g}$ defines a set of vector fields $\mathfrak{X}_{\mathfrak{g}} \subseteq \mathfrak{X}_{\perp}$. 

A gradient flow is a curve $\gamma: \R \xrightarrow[]{} \mathcal{M}$ where the velocity is given by the value of $R$, i.e. $\gamma'(t) = R_{\gamma(t)}$ for all $t \in \R$. 
The Lie bracket $[A, R]$ defines the derivative of $R$ with respect to $A$. 
Flows of $A$ and $R$ commute if and only if $[A, R] = 0$ (Theorem 9.44, \cite{Lee2013}).
That is, teleportation can affect the convergence rate only if $[A, R]\loss \neq 0$ for some $A \in \mathfrak{X}_{\mathfrak{g}}$. 
To simplify notation, we write $([W, R]\loss)(\vw)=0$ for a set of vector fields $W \subseteq \mathfrak{X}$ when $([A, R]\loss)(\vw)=0$ for all  $A \in W$.

We consider a gradient flow optimal if every point on the flow is a critical point of the magnitude of gradient in its loss level set. 
Note that this definition does not exclude the case where points on the flow are minimizers of the magnitude of gradient.
\begin{definition}
\label{def:optimal-point-flow}
    Let $f: \mathcal{M} \xrightarrow[]{} \R,  \vw \mapsto \left\|  \frac{\ro \loss}{\ro \vw} \right\|_2^2 $. A point $\vw \in M$ is optimal with respect to a set of vector fields $W \subseteq \mathfrak{X}_{\perp}$ if $Af(\vw) = 0$ for all $A \in W$. 
    A gradient flow $\gamma: \R \xrightarrow{} \mathcal{M}$ is optimal with respect to $W$ if $\gamma(t)$ is optimal with respect to $W$ for all $t \in \R$.
\end{definition}

\begin{proposition}
    A point $\vw \in \mathcal{M}$ is optimal with respect to a set of vector fields $W$ if and only if $([W, R]\loss)(\vw) = 0$.
\label{prop:Af-ARf}
\end{proposition}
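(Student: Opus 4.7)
The plan is to compute both sides explicitly in coordinates and show that, for any $A \in \mathfrak{X}_\perp$, the identity $([A,R]\loss)(\vw) = -(Af)(\vw)$ holds, from which the biconditional follows immediately.

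First I would unfold $R\loss$. Writing $R = r^i \tfrac{\ro}{\ro w^i}$ with $r^i = -\tfrac{\ro \loss}{\ro w^i}$, we get $R\loss = \sum_i r^i \tfrac{\ro \loss}{\ro w^i} = -\sum_i \left(\tfrac{\ro \loss}{\ro w^i}\right)^2 = -f$, so $A(R\loss) = -Af$ for any vector field $A$. The second piece of the Lie-bracket expansion $[A,R]\loss = A(R\loss) - R(A\loss)$ is $R(A\loss)$. Here I would use the hypothesis $A \in \mathfrak{X}_\perp$ in a crucial way: by definition of $\mathfrak{X}_\perp$, the smooth function $\sum_i a^i r^i$ vanishes identically on $\mathcal{M}$, not merely at the point $\vw$. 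Since $A\loss = \sum_i a^i \tfrac{\ro \loss}{\ro w^i} = -\sum_i a^i r^i \equiv 0$ as a function, applying $R$ gives $R(A\loss) \equiv 0$. Therefore $[A,R]\loss = -Af$ everywhere on $\mathcal{M}$.

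Putting these together, $([A,R]\loss)(\vw) = 0$ if and only if $(Af)(\vw) = 0$. Quantifying over all $A \in W \subseteq \mathfrak{X}_\perp$ gives exactly the statement: $\vw$ is optimal with respect to $W$ (i.e.\ $Af(\vw) = 0$ for every $A \in W$) iff $([W,R]\loss)(\vw) = 0$ in the notational convention set just above the proposition.

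The only mildly delicate point is the step $R(A\loss) = 0$: one must remember that $\mathfrak{X}_\perp$ is defined by the pointwise-everywhere orthogonality relation $\sum_i a^i(\vw) r^i(\vw) = 0$ for all $\vw \in \mathcal{M}$, so that $A\loss$ is the zero function and can legitimately be differentiated again. If the orthogonality were assumed only at the single point $\vw$, this argument would fail, and one would only recover $(Af)(\vw) = -([A,R]\loss)(\vw) + (R(A\loss))(\vw)$ with the last term generally nonzero. Everything else is a routine coordinate computation.
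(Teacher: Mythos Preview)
Your proposal is correct and follows essentially the same approach as the paper: both arguments compute $[A,R]\loss = A(R\loss) - R(A\loss) = -Af - 0 = -Af$ using that $A\loss \equiv 0$ for $A \in \mathfrak{X}_\perp$, then invoke the definition of optimality. Your explicit remark that the orthogonality in $\mathfrak{X}_\perp$ must hold \emph{everywhere} (so that $A\loss$ is the zero function and $R(A\loss)=0$ is legitimate) is a worthwhile clarification that the paper leaves implicit.
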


A sufficient condition for one teleportation to result in an optimal trajectory is that whenever the function $[A, R]\loss$ vanishes at $\vw \in \mathcal{M}$, it vanishes along the entire gradient flow starting at $\vw$.
\begin{proposition}
    Let $W \subseteq \mathfrak{X}_{\perp}$ be a set of vector fields that are orthogonal to $\frac{\partial\loss}{\partial \vw}$. 
    Assume that for all $\vw \in \mathcal{M}$ such that $([W, R]\loss)(\vw)=0$, we have that $(R[W, R]\loss)(\vw)=0$.
    Then the gradient flow starting at any optimal point with respect to $W$ is optimal with respect to $W$.
\label{prop:one-teleportation}
\end{proposition}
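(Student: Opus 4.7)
I would reduce the statement, via Proposition \ref{prop:Af-ARf}, to showing that the zero set
\[ S \;\eqdef\; \{\vw \in \mathcal{M} : ([A,R]\loss)(\vw) = 0 \ \forall A \in W\} \]
is invariant under the gradient flow of $R$. Concretely, let $\gamma$ be a maximal integral curve of $R$ with $\gamma(0) = \vw_0 \in S$ (the flow exists at least locally since $R$ is smooth), and for each $A \in W$ define $h_A(t) \eqdef ([A,R]\loss)(\gamma(t))$. By Proposition \ref{prop:Af-ARf}, optimality of $\vw_0$ is equivalent to $h_A(0) = 0$ for every $A \in W$, and optimality of the flow is equivalent to $h_A(t) \equiv 0$ for every $A \in W$ and every $t$ in the domain of $\gamma$. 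So the entire problem is to propagate the zero from $t=0$ to all of $\R$.

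\textbf{Linking the derivative to the hypothesis.} Since $\gamma'(t) = R_{\gamma(t)}$, the chain rule gives
\[ h_A'(t) \;=\; R_{\gamma(t)}\bigl([A,R]\loss\bigr) \;=\; \bigl(R[A,R]\loss\bigr)(\gamma(t)). \]
The hypothesis, applied at $\vw = \gamma(t_0)$, then reads: if $h_B(t_0) = 0$ for every $B \in W$ (i.e. $\gamma(t_0) \in S$), then $h_A'(t_0) = 0$ for every $A \in W$. This is the infinitesimal statement that $R$ is tangent to $S$ along $S$.

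\textbf{Closing via an open--closed argument.} To finish I would consider $T^{*} \eqdef \sup\{\tau \geq 0 : \gamma([0,\tau]) \subseteq S\}$ and show $T^{*} = +\infty$, with the analogous argument for negative times. Closedness of the set $\{t : \gamma(t) \in S\}$ follows because $S$ is the common zero set of the continuous functions $[A,R]\loss$; this gives $\gamma(T^{*}) \in S$ whenever $T^{*}$ is finite. Openness is exactly the infinitesimal tangency statement proved above, pushed forward one instant by the flow.

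\textbf{Main obstacle.} The delicate point is converting the pointwise implication ``$h_A(t_0) = 0 \Rightarrow h_A'(t_0) = 0$'' into ``$h_A$ cannot escape from zero,'' since a smooth function with $h_A(t_0) = h_A'(t_0) = 0$ can still become nonzero via higher-order terms. I would remove this obstacle in one of two natural ways. Either (i) assume that on a neighborhood of $\gamma$ the set $S$ is an embedded submanifold, in which case the infinitesimal condition makes $R|_S$ a genuine vector field on $S$ and standard flow-invariance results (e.g.\ Lee, Thm.~9.34) apply directly; or (ii) apply the hypothesis iteratively---each $R^{k}[A,R]\loss$ vanishes on $S$ because $S$ is preserved to infinitesimal order $k{-}1$---to obtain a local factorization $R[A,R]\loss = \sum_{B \in W} \Phi_{A,B}\,[B,R]\loss$ with smooth $\Phi_{A,B}$, after which $h_A'(t) = \sum_B \Phi_{A,B}(\gamma(t))\, h_B(t)$ is a linear homogeneous ODE system in $(h_A)_{A\in W}$ with zero initial data, so Gr\"onwall's inequality forces $h_A \equiv 0$.
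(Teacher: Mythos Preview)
Your reduction via Proposition~\ref{prop:Af-ARf} and the computation $h_A'(t) = (R[A,R]\loss)(\gamma(t))$ match the paper's proof exactly. The paper, however, stops at your ``infinitesimal tangency'' step: it notes that $h_A(0)=0$ and that the hypothesis forces $h_A'(t_0)=0$ whenever all $h_B(t_0)=0$, and then simply asserts ``therefore, both the value and the derivative of $[A,R]\loss$ stay $0$ along $\gamma$.'' It does not address the obstacle you flag. Your observation that $h_A(t_0)=h_A'(t_0)=0$ alone does not prevent $h_A$ from leaving zero (e.g.\ $h_A(t)=(t-t_0)^2$) points to a real gap in the paper's argument, so you are being more careful than the source, not less.

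Regarding your two fixes: route (i), assuming $S$ is an embedded submanifold so that $R$ restricts to a vector field on $S$, is the clean way to make the argument rigorous. Route (ii) does not go through as written. The hypothesis controls only $R[A,R]\loss$ on $S$, not $R^{k}[A,R]\loss$ for $k\ge 2$, so the inductive claim that all higher $R$-derivatives vanish on $S$ is unjustified; and even if it held, obtaining a smooth factorization $R[A,R]\loss=\sum_B \Phi_{A,B}\,[B,R]\loss$ is itself a regularity statement about $S$ (a Hadamard-type lemma), which effectively puts you back in the territory of (i). The Gr\"onwall step is fine once such a factorization is in hand.
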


To help check when the assumption in Proposition \ref{prop:one-teleportation} is satisfied, we provide an alternative form of $R[W, R]\loss(\vw)$ when $[W, R]\loss(\vw)=0$. 
\begin{proposition}
    If at all optimal points in $S = \{(M\frac{\partial \loss}{\partial \vw})^i \frac{\ro }{\ro w^i} \in \mathfrak{X}|~ M \in \R^{n \times n}, M^T = -M\}$ ,  
    \begin{align*}
        M_\alpha^j \frac{\ro \loss}{\ro w_k} \frac{\ro \loss}{\ro w_\alpha} \frac{\ro^3 \loss}{\ro w^k \ro w_i \ro w^j} \frac{\ro \loss}{\ro w^i} = 0
    \end{align*}
    for all anti-symmetric matrices $M \in \R^{n \times n}$, then the gradient flow starting at an optimal point in $S$ is optimal in $S$.
\label{prop:RARL2}
\end{proposition}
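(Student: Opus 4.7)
The plan is to apply Proposition~\ref{prop:one-teleportation} with $W = S$, which reduces the claim to showing that $(R[A, R]\loss)(\vw) = 0$ for every $A \in S$ at every point $\vw$ optimal with respect to $S$. Writing $g_i \eqdef \ro\loss/\ro w^i$ and $H_{ij} \eqdef \ro^2\loss/\ro w^i\,\ro w^j$, I first observe that each $A = (Mg)^i\,\ro/\ro w^i \in S$ satisfies $A\loss = M^i_j g_j g_i = 0$ by antisymmetry of $M$, so $[A, R]\loss = A(R\loss) = -A\|g\|^2$, which in coordinates is $-2\,M^j_\alpha g_\alpha H_{ji} g_i$. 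Requiring this to vanish for every antisymmetric $M$ forces $g_\alpha (Hg)_j$ to be symmetric in $(j,\alpha)$, equivalently $Hg = \mu g$ for some scalar $\mu$; this ``Hessian eigenvector'' characterization of $S$-optimality is the key structural fact I will use.

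Next I would apply $R = -g_k\,\ro/\ro w^k$ to $[A, R]\loss$ and expand via the Leibniz rule. Differentiating the three factors $g_\alpha$, $H_{ji}$, and $g_i$ in turn yields
\[
R[A, R]\loss \;=\; 2\,M^j_\alpha\Bigl[(Hg)_\alpha (Hg)_j \;+\; g_k g_\alpha g_i\,\tfrac{\ro^3\loss}{\ro w^k\,\ro w^j\,\ro w^i} \;+\; g_\alpha (H^2 g)_j\Bigr].
\]
The first bracketed term vanishes unconditionally because $(Hg)_\alpha (Hg)_j$ is symmetric in $(j,\alpha)$ and is contracted with antisymmetric $M^j_\alpha$. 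At an optimal point the identity $Hg = \mu g$ gives $H^2 g = \mu^2 g$, so the third term reduces to $2\mu^2 M^j_\alpha g_\alpha g_j = 0$ by the same symmetry argument. The middle term is precisely the expression assumed to vanish in the hypothesis. Hence $(R[A, R]\loss)(\vw) = 0$ at every optimal point in $S$, and Proposition~\ref{prop:one-teleportation} delivers the conclusion.

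The main obstacle is the careful coordinate bookkeeping in the Leibniz expansion and recognizing which of the three resulting pieces can be killed by antisymmetry of $M$ alone (the $(Hg)(Hg)$ piece), which require invoking the optimality condition $Hg \parallel g$ (the $H^2 g$ piece), and which is the irreducible ``third-order'' obstruction that must be imposed as a hypothesis. Once the three-term decomposition is written down, each cancellation is a one-line argument, and the only substantive input beyond bookkeeping is deriving $Hg = \mu g$ from the defining condition of $S$-optimality.
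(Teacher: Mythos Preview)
Your proposal is correct and follows essentially the same route as the paper: reduce to Proposition~\ref{prop:one-teleportation}, expand $R[A,R]\loss$ by the Leibniz rule into three pieces, and kill two of them using the eigenvector characterization $Hg=\mu g$ of $S$-optimality (which the paper cites as Lemma~C.4 of \cite{zhao2022symmetry}), leaving the third-order term as the hypothesis. Your observation that the $(Hg)_\alpha(Hg)_j$ piece already vanishes by antisymmetry of $M$ alone is a small sharpening over the paper, which invokes the eigenvector condition for that term as well.
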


From Proposition \ref{prop:RARL2}, we see that $R[W, R]\loss(\vw)$ is not automatically 0 when $[W, R]\loss(\vw)=0$. Therefore, even if the group is big enough to have its infinitesimal actions cover the tangent space of the level set ($\mathfrak{X}_{\mathfrak{g}} = \mathfrak{X}_{\perp}$), one teleportation does not guarantee that the gradient flow intersects all future level sets at optimal points. However, for loss functions that satisfy the condition in Proposition \ref{prop:one-teleportation}, teleporting once optimizes the entire trajectory.   This is the case, for example, when $\frac{\ro^3 \loss}{\ro w^k \ro w^i \ro w^j} \frac{\ro \loss}{\ro w^\alpha} = \frac{\ro^3 \loss}{\ro w^k \ro w^i \ro w^\alpha} \frac{\ro \loss}{\ro w^j}$ for all $i, k, j, \alpha$ (Proposition \ref{proposition:anti-sym-equality}). In particular, all quadratic functions meet this condition. 


\section{Teleportation for Improving Generalization}

Teleportation was originally proposed to speedup optimization. 
In this section, we explore the suitability of teleportation for improving generalization, which is another important aspect of deep learning. 
We first  review definitions of the sharpness of minima. Then, we introduce a novel notion of the curvature of minima and discuss its implications on generalization. 
By observing how sharpness and curvature of minima are correlated with generalization, we improve generalization by incorporating sharpness and curvature into the objective for teleportation.

\subsection{Sharpness of minima}
Flat minima tend to generalize well \citep{hochreiter1997flat}, typically characterized by numerous small Hessian eigenvalues. 
Although Hessian-based sharpness metrics are known to correlate well with generalization, they are expensive to compute and differentiate through. 
To use sharpness as an objective in teleportation, we consider changes in the loss averaged over random directions. Let $D$ be a set of vectors drawn randomly from the unit sphere $d_i \sim \{d \in \R^n: ||d||=1\}$, and $T$ a list of displacements $t_j \in \R$. Then, we have the following metric \citep{izmailov2018averaging}:
\begin{align}
    \text{Sharpness: } \quad \phi(\vw, T, D) = \frac{1}{|T| |D|}\sum_{t\in T} \sum_{d \in D} \cL(\vw + t d).\quad 
\end{align}

\begin{wrapfigure}{R}{0.4\textwidth}
\vskip -25pt
\centering
\includegraphics[width=0.4\textwidth]{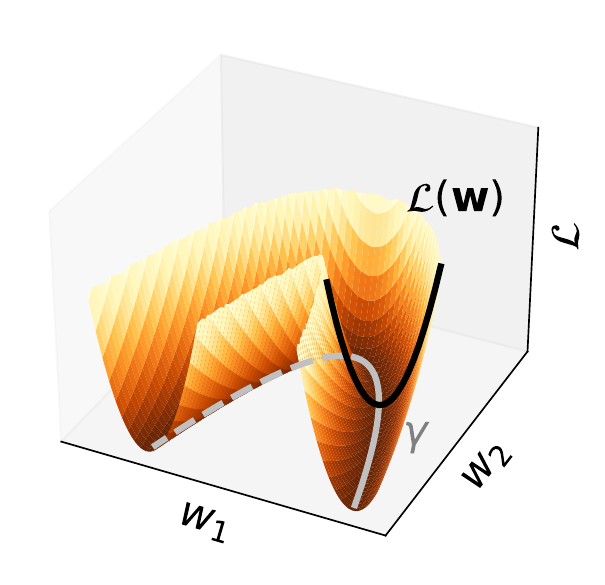}
\caption{Gradient flow ($\loss(\vw)$) and a curve on the minimum ($\gamma$). The curvature of both curves may affect generalization.}
\label{fig:curvatures}
\vskip -20pt
\end{wrapfigure}

\subsection{Curvature of minima}
At a minimum, the loss-invariant or flat directions are zero eigenvectors of the Hessian. 
The curvature along these directions does not directly affect Hessian-based sharpness metrics. 
However, these curvatures may affect generalization, by themselves or by correlating to the curvature along non-flat directions. 
Unlike the curvature of the loss (curve $\loss(\vw)$ in Figure \ref{fig:curvatures}), the curvature of the minima (curve $\gamma$) is less well studied. We provide a novel method to quantify the curvature of the minima below.

Assume that the loss function $\loss$ has a $G$ symmetry. Consider the curve $\gamma_M: \R \times \R^n \xrightarrow[]{} \R^n$ where $M \in \Lie(G)$ and $\gamma_M(t, \vw) = \exp{(tM)} \cdot \vw$. Then $\gamma(0, \vw) = \vw$, and every point on $\gamma_M$ is in the minimum if $\vw$ is a minimum. 
Let $\gamma'=\frac{d\gamma}{dt}$ be the derivative of a curve $\gamma$. 
The curvature of $\gamma$ is $\kappa(\gamma, t) = \frac{\|T'(t)\|}{\|\gamma'(t)\|}$, where $T(t) = \frac{\gamma'(t)}{\|\gamma'(t)\|}$ is the unit tangent vector.
We assume that the action map is smooth, 
since calculating the curvature requires second derivatives and optimizing the curvature via gradient descent requires third derivatives. For multi-layer network with element-wise activations, we derive the group action, $\gamma$, and $\kappa$ in Appendix \ref{sec:appendix-group}.

Since the minimum can have more than one dimension, we measure the curvature of a point $\vw$ on the minimum by averaging the curvature of $k$ curves with randomly selected Lie algebra elements $M_i \in \Lie(G)$.
The resulting new metric is
\begin{align}
    \text{Curvature: } \quad  \psi(\vw, k) = \frac{1}{k} \sum_{i=1}^k \left. \kappa(\gamma_{M_i}(0, \vw), 0)\right..
\end{align}
There are different ways to measure the curvature of a higher-dimensional manifold, such as using the Gaussian curvature of 2D subspaces of the tangent space. However, our method of approximating the mean curvature is easier to compute and suitable as a differentiable objective.

\begin{figure}[h]
\begin{center}
\ \ \ (a) \hfill (b) \hfill (c) \hfill (d) \hfill ~ \\
\includegraphics[width=0.48\columnwidth]{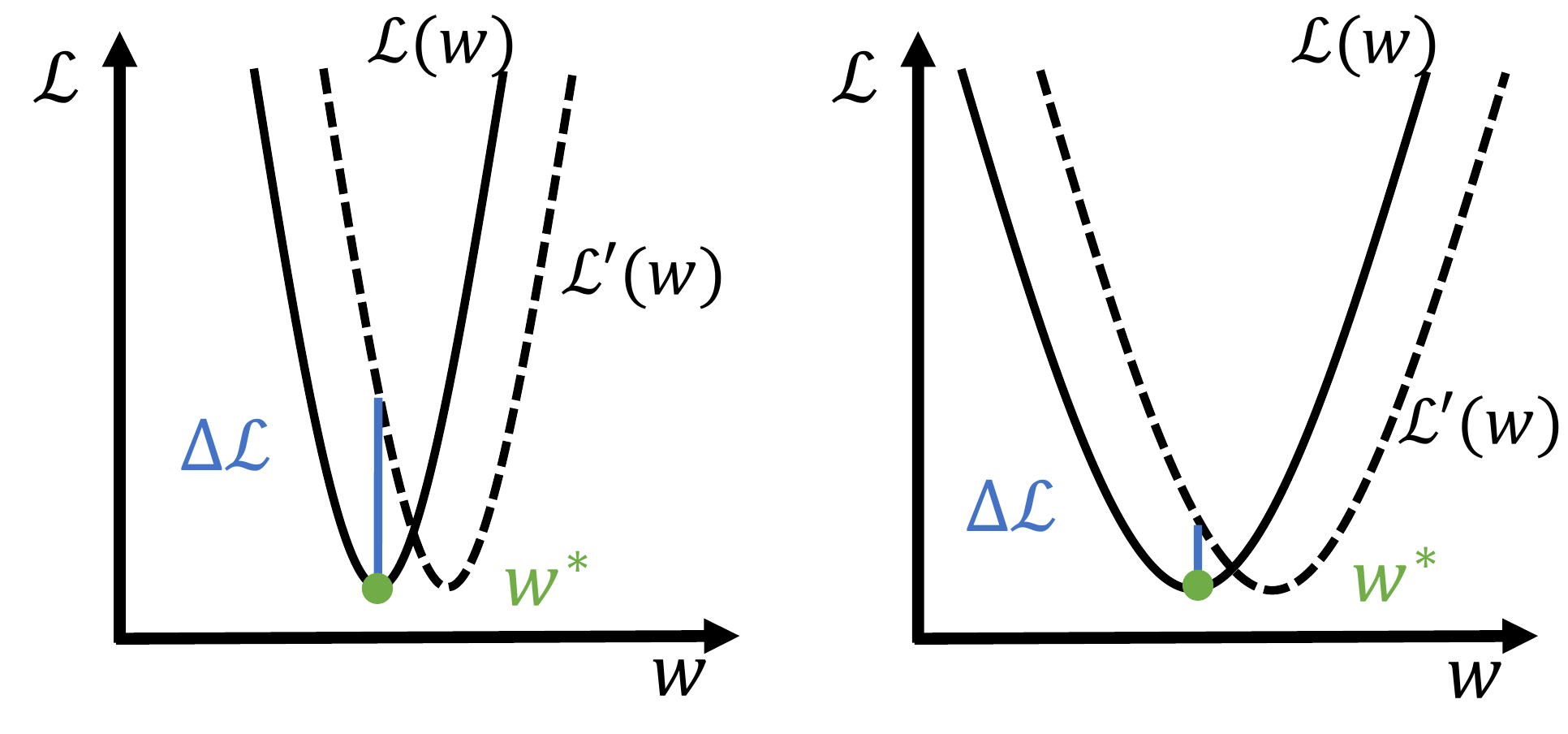}
\includegraphics[width=0.48\columnwidth]{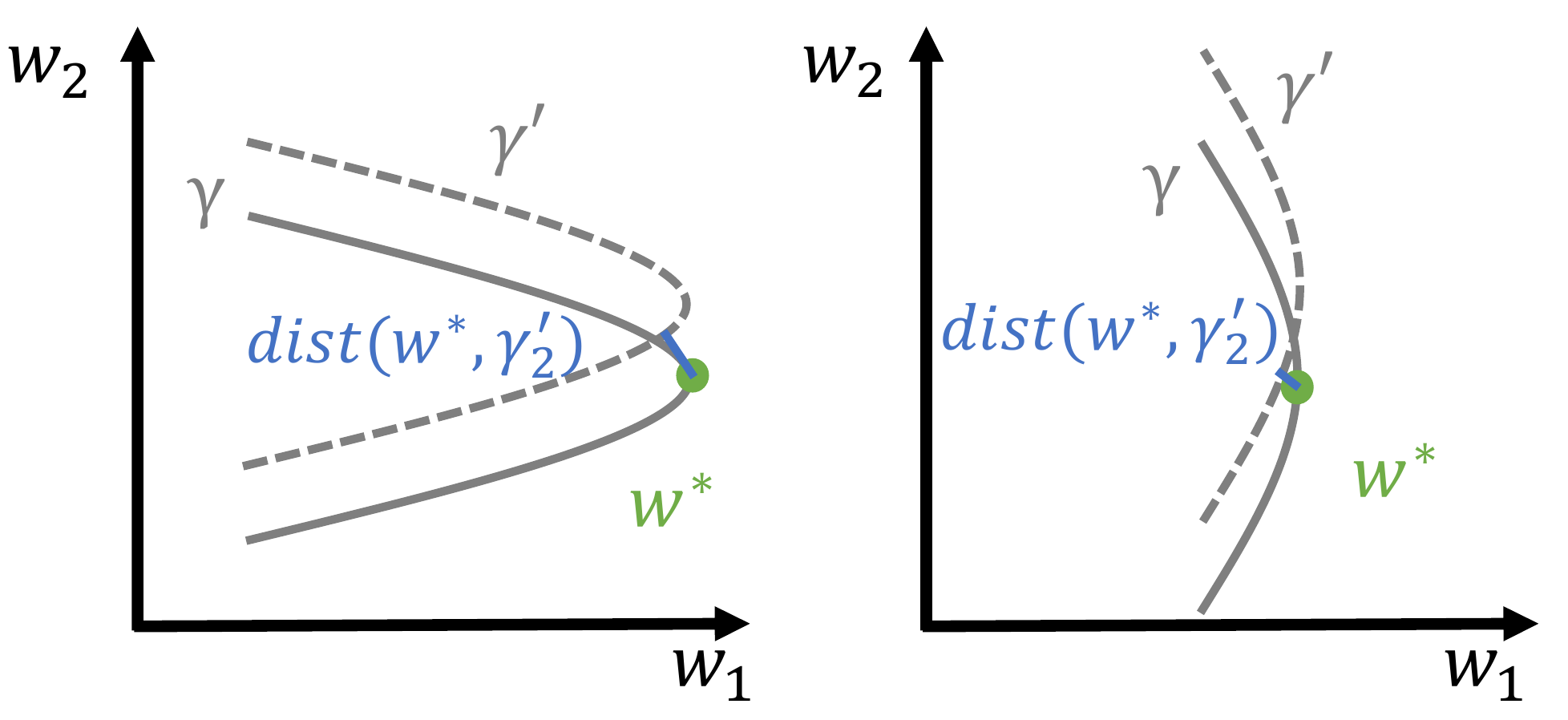}
\caption{Illustration of the effect of sharpness (a,b) and curvature (c,d) of minima on generalization. See Figure \ref{fig:curvatures} for a 3D visualization of the curves $\loss(\vw)$ and $\gamma$. When the loss landscape shifts due to a change in data distribution, sharper minima have larger increase in loss. In the example shown, minima with larger curvature moves further away from the shifted minima. }
\label{fig:curvature-intuition}
\end{center}
\end{figure}

\subsection{Correlation with generalization}
Generalization reflects how loss changes with shifts in data distribution. 
The sharpness of minima is well known to be correlated with generalization. 
Figure \ref{fig:curvature-intuition}(a)(b) visualizes an example of the shift in loss landscape ($\loss(\vw)$), and the change of loss $\Delta \loss$ at a minimizer $\vw^*$ is large when the minimum is sharp.
The relation between the curvature of minimum and generalization is less well studied. 
Figure \ref{fig:curvature-intuition}(c)(d) shows one possible shift of the minimum ($\gamma$). Under this shifting, the minimizer with a larger curvature becomes farther away from the shifted minimum. 
The curve on the minimum can shift in other directions. Appendix \ref{sec:appendix-group-intuition} provides analytical examples of the correlation between curvature and expected distance between the old and shifted minimum. 

We verify the correlation between sharpness, curvatures, and validation loss on MNIST \citep{deng2012mnist}, Fashion-MNIST \citep{xiao2017fashion}, and CIFAR-10 \citep{krizhevsky2009learning}.
On each dataset, we train 100 three-layer neural networks with LeakyReLU using different initializations. 
Details of the setup can be found in Appendix \ref{sec:appendix-correlation-experiment}. 

Table \ref{table:correlation} shows the Pearson correlation between validation loss and sharpness or curvature (scatter plots in Figure \ref{fig:correlation-sharpness-3layer-leakyrelu} and \ref{fig:correlation-curvature-3layer-leakyrelu} in the appendix).
In all three datasets, sharpness has a strong positive correlation with validation loss, meaning that the average change in loss under perturbations is a good indicator of test performance. 
For the architecture we consider, the curvature of minima is negatively correlated with the validation loss. We observe that the magnitudes of the curvatures are small, which suggests that the minima are relatively flat.


\begin{table}
  \caption{Correlation with validation loss}
  \label{table:correlation}
  \centering
  \begin{tabular}{cccccc}
    \toprule
    \multicolumn{3}{c}{sharpness ($\phi$)} & \multicolumn{3}{c}{curvature ($\psi$)}  \\
    \cmidrule(r){1-3} \cmidrule(r){4-6}
    MNIST & Fashion-MNIST & CIFAR-10 & MNIST & Fashion-MNIST & CIFAR-10 \\
    \midrule
    0.704 & 0.790 & 0.899 & -0.050 & -0.232 & -0.167 \\
    \bottomrule
  \end{tabular}
\end{table}

\subsection{Teleportation for improving generalization}
To improve the generalization ability of the minimizer and to gain understanding of the curvature of minima, we teleport parameters to regions with different sharpness and curvature. 
Multi-layer neural networks have $GL(\R)$ symmetry between layers (Appendix \ref{sec:appendix-group-action}).
We parametrize the group by its Lie algebra $T$, and perform gradient ascent on $T$ to maximize the gradient norm at the transformed parameters $|\nabla L \vert_{\exp{(T)} \cdot w}|$. 
Algorithm \ref{alg:teleport-mlp} in Appendix \ref{sec:appendix-generalization-experiment} demonstrates how to increase curvature $\psi$ by teleporting two layers, with hidden dimension $h$, in an MLP. In experiments, we use an extended version of the algorithm, which teleports all layers by optimizing on a list of $T$’s concurrently.
During teleportation, we perform gradient descent on the group elements to change $\phi$ or $\psi$. Results are averaged over 5 runs.

\begin{figure}[h!]
\centering
\includegraphics[width=0.4\columnwidth]{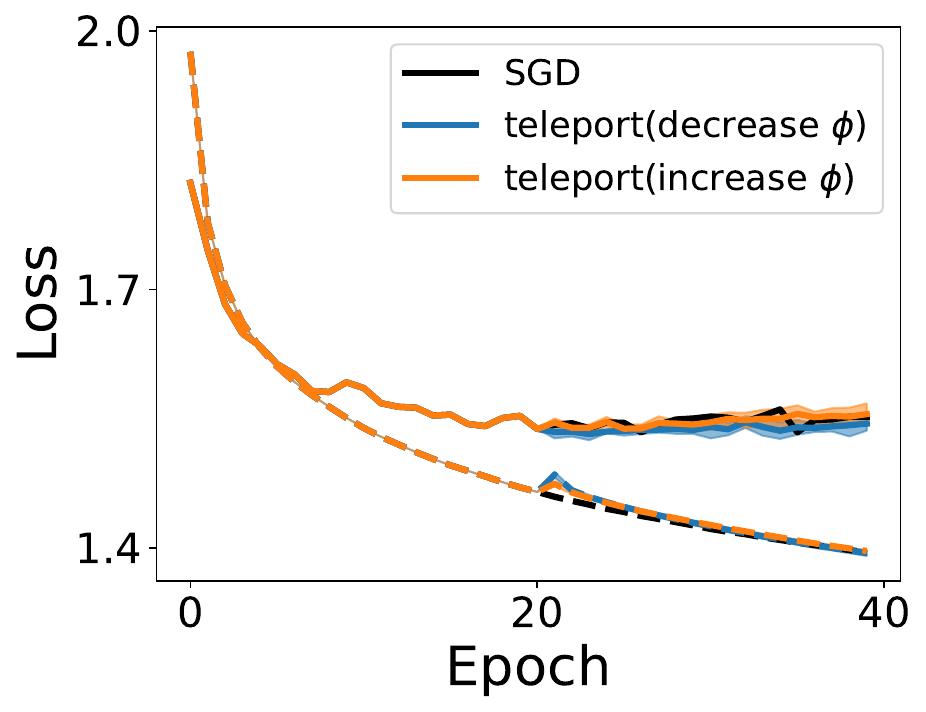}
\hspace{20pt}
\includegraphics[width=0.4\columnwidth]{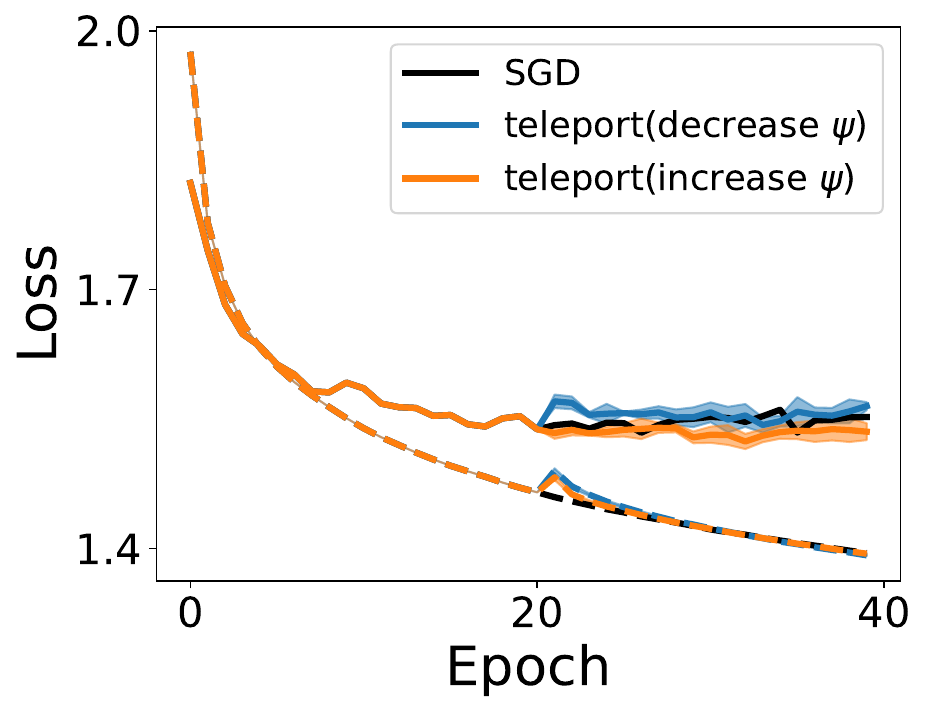}
\caption{Changing sharpness (left) or curvature (right) using teleportation and its effect on generalization on CIFAR-10.
Solid line represents average test loss, and dashed line represent average training loss. 
Teleporting to decrease sharpness improves validation loss slightly. Teleportation changing curvatures has a more significant impact on generalization ability. 
}
\label{fig:teleport-generalization-cifar10}
\end{figure}

Figure \ref{fig:teleport-generalization-cifar10} shows the training curve of SGD on CIFAR-10, with one teleportation at epoch 20. Similar results for AdaGrad can be found in Appendix \ref{sec:appendix-generalization-experiment}. Teleporting to flatter points slightly improves the validation loss, while teleporting to sharper points has no effect.
Since the group action keeps the loss invariant only on the batch of data used in teleportation, the errors incurred in teleportation have a similar effect to a warm restart, which makes the effect of changing sharpness less clear. 

Interestingly, by changing the curvature, teleportation is able to affect generalization. Teleporting to points with larger curvatures helps find a minimum with lower validation loss, while teleporting to points with smaller curvatures has the opposite effect. 
This suggests that at least locally, curvature is correlated with generalization.
Details of the experiment setup can be found in Appendix \ref{sec:appendix-generalization-experiment}. 

\section{Applications to Other Optimization Algorithms}
Having shown teleportation's potential to improve optimization and generalization, we demonstrate its wide applicability by integrating teleportation into different optimizers and meta-learning. 

\subsection{Standard optimizers}
Teleportation improves optimization not only for SGD. 
To show that teleportation works well with other standard optimizers, we train a 3-layer neural network on MNIST using different optimizers with and without teleportation. During training, we teleport once at the first epoch, using 8 minibatches of size 200. Details can be found in Appendix \ref{sec:appendix-optimization-experiment}.

Figure \ref{fig:other-algorithms-loss-vs-epoch} shows that teleportation improves the convergence rate when using AdaGrad, SGD with momentum, RMSProp, and Adam.
The runtime for a teleportation is smaller than the time required to train one epoch, hence teleportation improves convergence rate per epoch at almost no additional cost of time (Figure \ref{fig:other-algorithms-loss-vs-time} in the appendix).

\begin{figure}[h]
\centering
\includegraphics[width=0.245\columnwidth]{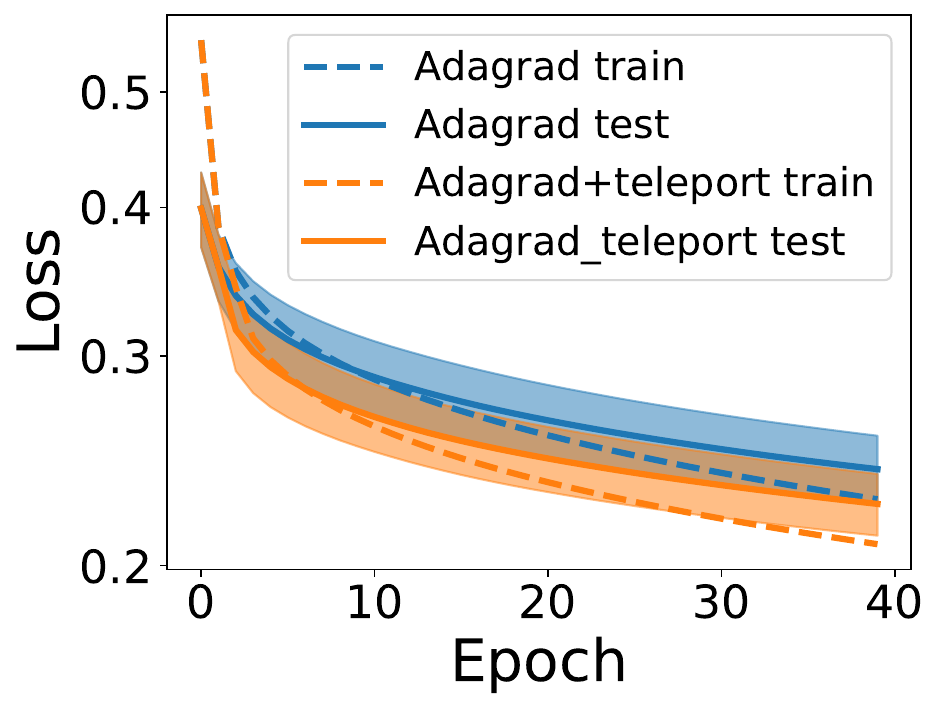}
\includegraphics[width=0.245\columnwidth]{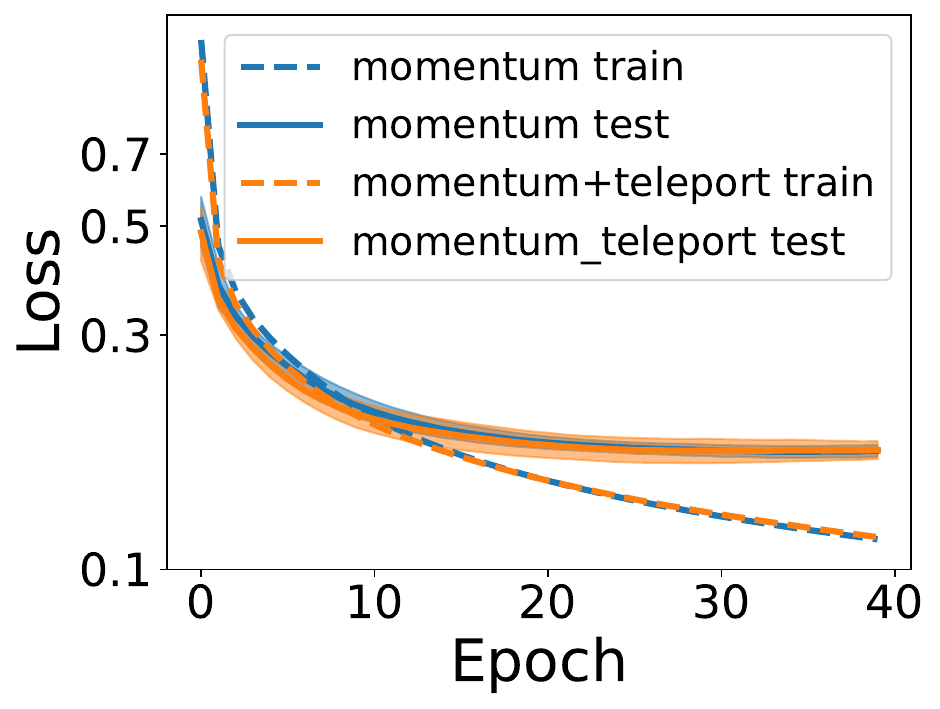}
\includegraphics[width=0.245\columnwidth]{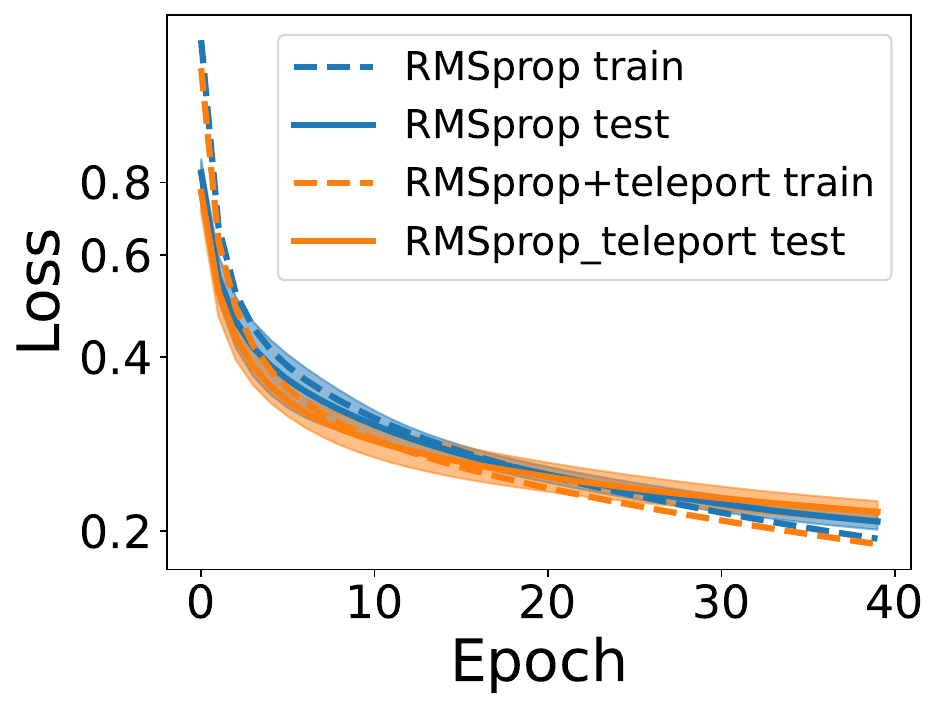}
\includegraphics[width=0.245\columnwidth]{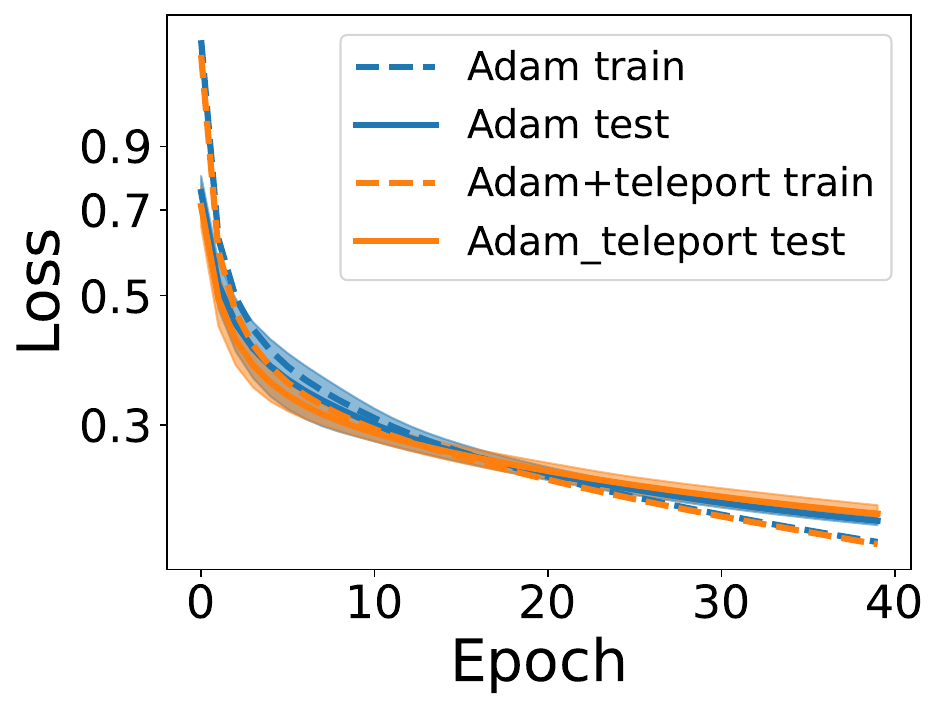}
\caption{Integrating teleportation with AdaGrad, momentum, RMSProp, and Adam improves the convergence rate on MNIST. Solid line represents the average test loss, and dashed line represents the average training loss. Shaded areas are 1 standard deviation of the test loss across 5 runs. }
\label{fig:other-algorithms-loss-vs-epoch}
\end{figure}

\subsection{Learning to teleport}
In optimization-based meta-learning, the parameter update rule or the hyperparameters are learned using a meta-optimizer \citep{andrychowicz2016learning,finn2017model}.
Teleportation introduces an additional degree of freedom in parameter updates. 
We augment existing meta-learning algorithms by learning both the local update and teleportation. This allows us to teleport without implementing the additional optimization step on groups, which reduces computation time.

Let $\vw_t \in \R^d$ be the parameters at time $t$, and $\grad_t = \left.\frac{\partial \loss}{\partial \vw}\right|_{\vw_t}$ be the gradient of the loss $\loss$. 
In gradient descent, the update rule with learning rate $\eta$ is  
\begin{align*}
    \vw_{t+1} = \vw_t - \eta \nabla_t.
\end{align*}

In meta-learning \citep{andrychowicz2016learning}, the update on $\vw_t$ is learned using a meta-learning optimizer $m$, which takes $\grad_t$ as input. Here $m$ is an LSTM model. Denote $h_t$ as the hidden state in the LSTM and $\phi$ as the parameters in $m$. The update rule is
\begin{align*}
    \vw_{t+1} &= \vw_t + f_t \\
    \begin{bmatrix}
    f_t \\
    h_{t+1}
    \end{bmatrix}
    &= m(\nabla_t, h_t, \phi).
\end{align*}

Extending this approach beyond an additive update rule, we learn to teleport. Let $G$ be a group whose action on the parameter space leaves $\loss$ invariant.
We use two meta-learning optimizers $m_1, m_2$ to learn the update direction $f_t \in \R^d$ and the group element $g_t \in G$:
\begin{align*}
    &\vw_{t+1} = g_t \cdot (\vw_t + f_t) \\
    \begin{bmatrix}
    f_t \\
    h_{1_{t+1}}
    \end{bmatrix}
    = m_1(&\nabla_t, h_{1_t}, \phi_1), ~~
    \begin{bmatrix}
    g_t \\
    h_{2_{t+1}}
    \end{bmatrix}
    = m_2(\nabla_t, h_{2_t}, \phi_2).
\end{align*}








\paragraph{Experiment setup.}
We train and test on two-layer neural networks  $\loss(W_1,W_2) = \| Y - W_2 \sigma(W_1 X)\|_2$, where $W_2, W_1, X, Y \in \R^{20 \times 20}$, and $\sigma$ is the LeakyReLU function with slope coefficient 0.1. 
Both meta-optimizers are two-layer LSTMs with hidden dimension 300.
We train the meta-optimizers on multiple trajectories created with different initializations, each consisting of 100 steps of gradient descent on $\loss$ with random $X, Y$ and randomly initialized $W$'s. We update the parameters in $m_1$ and $m_2$ by unrolling every 10 steps. The learning rate for meta-optimizers are $10^{-4}$ for $m_1$ and $10^{-3}$ for $m_2$.
We test the meta-optimizers using 5 trajectories not seen in training.

Algorithm \ref{alg:learning-to-teleport} summarizes the training procedure. 
The vanilla gradient descent baseline (``GD'') uses the largest learning rate that does not lead to divergence ($3\times 10^{-4}$). 
The second baseline (``LSTM(update)'') learns the update $f_t$ only and does not perform teleportation ($g_t=I, \forall t$). 
The third baseline (``LSTM(lr,tele)'') learns the group element $g_t$ and the learning rate used to perform gradient descent instead of the update $f_t$.
We keep training until adding more training trajectories does not improve convergence rate.
We use 700 training trajectories for our approach, 600 for the second baseline, and 30 for the third baseline.

\textbf{Results.~~}
By learning both the local update $f_t$ and non-local transformation $g_t$, our meta-optimizer successfully learns to learn faster.
Figure \ref{fig:meta-learning} shows the improvement of our approach from the previous meta-learning method, which only learns $f_t$. Compared to the baselines, learning the two types of updates together (``LSTM(update,tele)'') achieves better convergence rate than learning them separately. Additionally, learning the group element $g_t$ eliminates the need for performing gradient ascent on the group manifold and reduces hyperparameter tuning for teleportation.
As an example of successful integration of teleportation into existing optimization algorithms, this toy experiment demonstrates the flexibility and promising applications of teleportation. 

\begin{minipage}[c]{0.6\linewidth}
\begin{algorithm}[H]
   \caption{Learning to teleport}
   \label{alg:learning-to-teleport}
    \begin{algorithmic}
       \STATE {\bfseries Input:} Loss function $\loss$, learning rate $\eta$, number of epochs $T$, LSTM models $m_1, m_2$ with initial parameters $\phi_1, \phi_2$, unroll step $t_{unroll}$.
       \STATE {\bfseries Output:} Trained parameters $\phi_1$ and $\phi_2$.
       \FOR{each training initialization}
       \FOR{$t = 1$ {\bfseries to} $T$}
       \STATE $f_t, h_{1_{t+1}} = m_1(\nabla_t, h_{1_t}, \phi_1)$
       \STATE $g_t, h_{2_{t+1}} = m_2(\nabla_t, h_{2_t}, \phi_2)$
       \STATE $\vw \leftarrow g_t \cdot (\vw + f_t)$
       \IF{$t \mod t_{unroll} = 0$}
       \STATE update $\phi_1, \phi_2$ by back-propogation from the accumulated loss $\sum_{i=t-t_{unroll}}^t \loss(\vw_i)$
       \ENDIF
       \ENDFOR
       \ENDFOR
    \end{algorithmic}
    \end{algorithm}
\end{minipage}
\hspace{0.05\linewidth}
\begin{minipage}[c]{0.35\linewidth}
    \begin{figure}[H]
       \includegraphics[width=1.0\columnwidth]{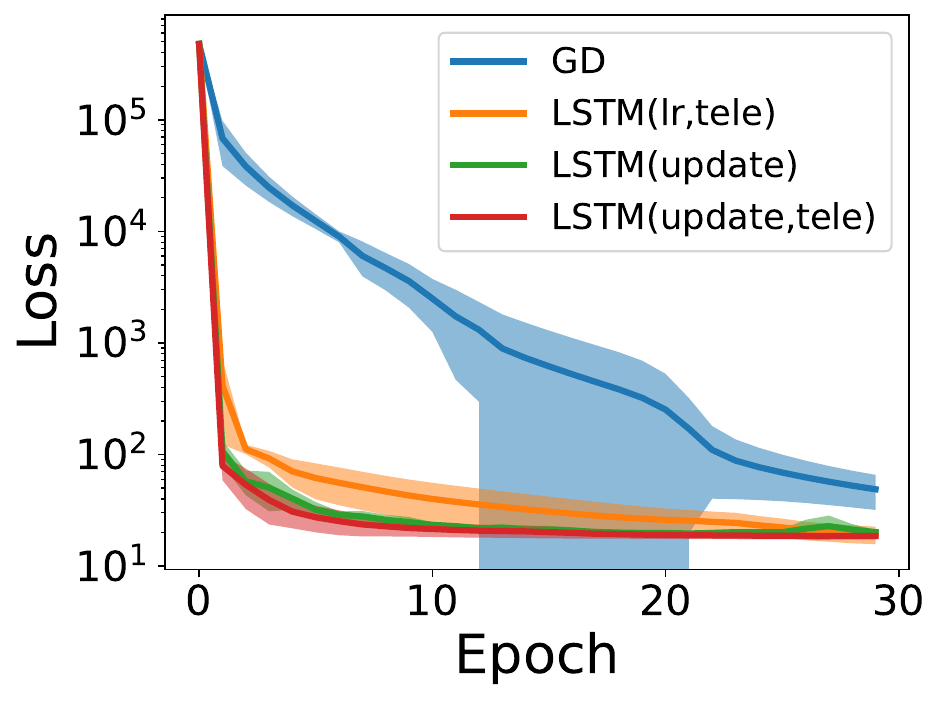}
        \caption{Performance of the trained meta-optimizer on the test set. Learning both local update $f_t$ and nonlocal transformation $g_t$ results in better convergence rate than learning only local updates or learning only teleportation.  }
        \label{fig:meta-learning}
    \end{figure}
\end{minipage}\hfill

\section{Discussion}





Teleportation is a powerful tool to search in the loss level sets for parameters with desired properties.
We provide theoretical guarantees that teleportation accelerates the convergence rate of SGD.
Using concepts in symmetry, we propose a novel notion of curvature and show that incorporating additional teleportation objectives such as changing the curvatures can be beneficial to generalization. 
The close relationship between symmetry and optimization opens up a number of exciting opportunities. Exploring other objectives in teleportation appears to be an interesting future direction. Other possible applications include extending teleportation to different architectures, such as convolutional or graph neural networks, and to different algorithms, such as sampling-based optimization. 

The empirical results linking sharpness and curvatures to generalization are intriguing. However, the theoretical origin of their relation remains unclear. In particular, a precise description of how the loss landscape changes under distribution shifts is not known.
More investigation of the correlation between curvatures and generalization will help teleportation to further improve generalization and take us a step closer to understanding the loss landscape.


\section*{Acknowledgements}
This work was supported in part by Army-ECASE award W911NF-23-1-0231, the U.S. Department Of Energy, Office of Science under \#DE-SC0022255, IARPA HAYSTAC Program, CDC-RFA-FT-23-0069, NSF Grants \#2205093, \#2146343, \#2134274, \#2107256, and \#2134178.

\bibliography{iclr2024_conference}
\bibliographystyle{iclr2024_conference}

\newpage
\appendix
\section*{APPENDIX}
This appendix contains proofs, experiment setups, as well as additional results and discussions. 
Appendix \ref{sec:smooth-non-convex-proof} through \ref{appendix:one-teleportation-enough} contain proofs for theoretical results in Section \ref{sec:optimization}. 
Appendix \ref{sec:appendix-group} provides details about curves induced by symmetry and the curvature of the minimum.
Appendix \ref{appendix:curvature-generalization} discusses possible theoretical approaches to relate curvatures and generalization. This section also contains experiment details on computing correlations and the algorithm that uses teleportation to change curvature. 
Appendix \ref{appendix:other-algorithms} describes experiment setups and different strategies of integrating teleportation into various optimization algorithms.

The code used for our experiments is available at: \url{https://github.com/Rose-STL-Lab/Teleportation-Optimization}.

\section{Teleportation and SGD}
\label{sec:smooth-non-convex-proof}
This section includes a proof for Theorem \ref{theo:telesgdconv}. Additionally, we discuss the theorem's implication when the loss function is strictly convex.
 
\begin{lemma}[Descent Lemma]\label{lem:smoothdescent}
Let $\cL(\vw,\xi)$ be a  $\beta$--smooth function.  It follows that
\begin{equation}\label{eq:smoothsubopt}
\E{\norm{\nabla\cL(\vw,\xi) }^2} \leq 2 \beta (\cL(\vw) - \cL(\vw^*)) + 2\beta ( \cL(\vw^*) -  \E{\inf_\vw\cL(\vw,\xi)}).
\end{equation}
 \end{lemma}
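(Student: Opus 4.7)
The plan is to use the standard consequence of smoothness that for any $\beta$-smooth function $f$ with infimum $f^*$, one has $\|\nabla f(w)\|^2 \leq 2\beta(f(w) - f^*)$, apply it pointwise in $\xi$, and then take expectations and rearrange.

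First I would fix $\xi$ and use $\beta$-smoothness of $\cL(\cdot, \xi)$ in the form
\begin{equation*}
\cL(v, \xi) \;\leq\; \cL(w, \xi) + \langle \nabla\cL(w,\xi), v - w\rangle + \tfrac{\beta}{2}\|v - w\|^2 \quad \text{for all } v.
\end{equation*}
Minimizing the right-hand side over $v$ (which is achieved at $v = w - \tfrac{1}{\beta}\nabla\cL(w,\xi)$) yields
\begin{equation*}
\inf_v \cL(v, \xi) \;\leq\; \cL(w, \xi) - \tfrac{1}{2\beta}\|\nabla\cL(w,\xi)\|^2.
\end{equation*}
Rearranging,
\begin{equation*}
\|\nabla\cL(w,\xi)\|^2 \;\leq\; 2\beta\bigl(\cL(w,\xi) - \inf_v \cL(v,\xi)\bigr).
\end{equation*}

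Next I would take expectation over $\xi\sim\cal D$ on both sides. Since $\Exp[\cL(w,\xi)] = \cL(w)$ by the definition of the population loss, this gives
\begin{equation*}
\E{\|\nabla\cL(w,\xi)\|^2} \;\leq\; 2\beta\bigl(\cL(w) - \E{\inf_v \cL(v,\xi)}\bigr).
\end{equation*}
Finally I would introduce the noise term $\sigma^2$ by adding and subtracting $\cL(w^*)$ inside the parenthesis:
\begin{equation*}
\cL(w) - \E{\inf_v \cL(v,\xi)} \;=\; \bigl(\cL(w) - \cL(w^*)\bigr) + \bigl(\cL(w^*) - \E{\inf_v \cL(v,\xi)}\bigr),
\end{equation*}
which, multiplied by $2\beta$, produces exactly the bound \eqref{eq:smoothsubopt}.

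There is no real obstacle here; the only thing worth highlighting is the passage from the pointwise-in-$\xi$ smoothness bound to an expected bound, where one must be careful to use $\inf_v \cL(v,\xi)$ (the per-sample minimum) rather than $\cL(w^*)$, since smoothness is being applied to the random realization $\cL(\cdot,\xi)$ and not to the population loss $\cL$. The gap between $\cL(w^*)$ and $\E{\inf_v\cL(v,\xi)}$ is precisely the interpolation/noise quantity $\sigma^2$, which vanishes in the overparametrized interpolation regime where each $\cL(\cdot,\xi)$ is minimized at $w^*$.
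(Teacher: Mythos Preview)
Your proposal is correct and follows essentially the same approach as the paper: apply $\beta$-smoothness pointwise in $\xi$, plug in $v = w - \tfrac{1}{\beta}\nabla\cL(w,\xi)$ to obtain $\|\nabla\cL(w,\xi)\|^2 \leq 2\beta(\cL(w,\xi) - \inf_v\cL(v,\xi))$, take expectations, and then add and subtract $\cL(w^*)$. The paper's write-up interleaves the add/subtract of $\cL(w^*,\xi)$ before taking expectation, but the argument is the same.
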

\begin{proof}
Since  $\cL(w,\xi)$ is smooth we have that
\begin{align} \label{eq:smoothnessfuncstar}
\cL(z,\xi) -\cL(\vw,\xi) &\leq  \dotprod{\nabla \cL(\vw,\xi) , \vz-\vw} +\frac{\beta}{2}\norm{\vz-\vw}^2, \quad  \forall \vz,\vw\in\R^d.
\end{align}
By inserting
\[\vz = \vw - \frac{1}{\beta} \nabla \cL(\vw,\xi) \]
 into~\eqref{eq:smoothnessfuncstar} we have that
\begin{align} \label{eq:smoothnessfuncstar2}
 \cL\big(\vw- (1/\beta) \nabla \cL(\vw,\xi), \xi\big) \leq  \cL(\vw,\xi) -\frac{1}{2\beta} \norm{\nabla \cL(\vw,\xi)}^2 .
\end{align}
Re-arranging we have that
\begin{align*}
      \cL(\vw^*,\xi) -  \cL(\vw,\xi)& =   \cL(\vw^*,\xi) -  \inf_\vw \cL(\vw,\xi) +\inf_\vw \cL(\vw,\xi) -  \cL(\vw,\xi)\\
     & \leq  \cL(\vw^*,\xi) -  \inf_\vw \cL(\vw,\xi) + \cL\big(\vw- (1/\beta) \nabla \cL(\vw,\xi), \xi\big)  - \cL(\vw,\xi)  
      \\ & \overset{\eqref{eq:smoothnessfuncstar2}}{\leq}  \cL(\vw^*,\xi) -  \inf_\vw \cL(\vw,\xi) -\frac{1}{2\beta} \norm{\nabla  \cL(\vw,\xi)}^2,
\end{align*} 
where the first inequality follows because $\inf_\vw \cL(\vw,\xi) \leq  \cL(\vw,\xi), \forall \vw.$
Re-arranging the above and taking expectation gives
\begin{align*}
\E{\norm{\nabla  \cL(\vw,\xi)}^2} & \leq 2 \E{\beta (   \cL(\vw^*,\xi) -  \inf_\vw \cL(\vw,\xi)  +  \cL(\vw,\xi) -  \cL(\vw^*,\xi))} \nonumber\\
& \leq  2 \beta\E{   \cL(\vw^*,\xi) -  \inf_\vw \cL(\vw,\xi) +  \cL(\vw,\xi) -  \cL(\vw^*,\xi) } \\
&  \leq  2 \beta (  \cL(\vw)-  \cL(\vw^*)) + 2\beta( \cL(\vw^*)- \E{ \inf_\vw \cL(\vw,\xi)}).  
\end{align*}
\end{proof}

At each iteration $t \in \mathbb{N}^+$ in SGD, we choose a group element  $g^t \in G$ and 
use teleportation  before each gradient step as follows
\begin{equation}
\vw^{t+1}=g^t\cdot \vw^t - \eta \nabla \cL(g^t\cdot \vw^t,\xi^t).
\end{equation}
Here $ \eta$ is a learning rate,  $ \nabla \cL(\vw^t,\xi^t)$ is a gradient of  $\cL(\vw^t,\xi^t) $ with respect to the parameters $\vw$,
 and $\xi^t \sim \mathcal{D}$ is a mini-batch of data sampled i.i.d at each iteration.
\begin{manualtheorem}{\ref{theo:telesgdconv}}
Let $\cL(\vw, \xi)$ be $\beta$--smooth and let
\[\sigma^2 \eqdef  \cL(\vw^*) -\E{\inf_\vw \cL(\vw,\xi)}.\]
Consider the iterates $\vw^t$ given by~\eqref{eq:stochgradg} where
\begin{equation}
g^t \in \arg\max_{g\in G} \norm{ \nabla\cL(g\cdot \vw^t)}^2.
\end{equation}
If  $\eta = \frac{1}{\beta \sqrt{ T-1}}$ then 
\begin{align}
\min_{t=0,\ldots, T-1} \E{ \max_{g\in G}  \norm{ \nabla\cL(g\cdot \vw^t)}^2  }
& \leq \frac{2\beta}{ \sqrt{ T-1}}    \E{\cL(\vw^{0})-\cL(\vw^*)} + \frac{ \beta \sigma^2}{ \sqrt{ T-1}} .
\end{align}
\end{manualtheorem}

\begin{proof}
First note that if $\cL(\vw,\xi)$ is $\beta$--smooth,  then
 $\mathcal{L}(\vw)$ is  also a $\beta$--smooth function,  that is 
 \begin{equation} 
\label{eq:smoothnessfuncb}
 \cL(\vz) - \cL(\vw) - \langle \nabla \cL(\vw), \vz-\vw \rangle \leq \frac{\beta}{2} \norm{\vz-\vw}^2.
 \end{equation}
Using~\eqref{eq:stochgradg} with $\vz=\vw^{t+1}$ and $\vw=g^t\cdot \vw^t$, together with~\eqref{eq:smoothnessfuncb} and the fact that the group action preserves loss, we have that
\begin{align}
\cL(\vw^{t+1}) & \leq \cL(g^t\cdot \vw^{t}) +\dotprod{\nabla  \cL(g^t\cdot \vw^t), \vw^{t+1} -g^t\cdot \vw^t} +  \frac{\beta}{2} \norm{\vw^{t+1}- g^t\cdot \vw^t}^2\\
& = \cL(\vw^{t}) -\eta_t  \dotprod{ \nabla\cL(g^t\cdot \vw^t), \nabla \cL(g^t\cdot \vw^t,\xi^t)} +  \frac{\beta \eta_t^2}{2} \norm{ \nabla \cL(g^t\cdot \vw^t,\xi^t)}^2.
\end{align}
 Taking expectation conditioned on $\vw^t$, we have that
 \begin{align}
\EE{t}{\cL(\vw^{t+1})}
& \leq \cL(\vw^{t}) -\eta_t  \norm{ \nabla\cL(g^t\cdot \vw^t)}^2 +  \frac{\beta \eta_t^2}{2} \EE{t}{\norm{\nabla \cL(g^t\cdot \vw^t,\xi^t)}^2}.
\end{align}

Now since $\cL(\vw, \xi)$ is $\beta$--smooth,  from Lemma~\ref{lem:smoothdescent} above we have that
\begin{equation}\label{eq:smoothsuboptmain}
\E{\norm{\nabla \cL(\vw,\xi)}^2} \leq 2 \beta (\cL(\vw) -\cL(\vw^*)) + 2\beta (\cL(\vw^*) -\E{\inf_\vw \cL(\vw,\xi)})
\end{equation}
Using~\eqref{eq:smoothsuboptmain} with $\vw= g^t \circ \vw^t$ we have that
 \begin{align}
\EE{t}{\cL(\vw^{t+1})}
\leq \cL(\vw^{t}) & -\eta_t  \norm{ \nabla\cL(g^t\cdot \vw^t)}^2 \cr
& + \beta^2 \eta_t^2 \left(   \cL(g^t \cdot \vw^t) - \cL(\vw^*) +\cL(\vw^*) - \E{ \inf_\vw \cL(\vw,\xi)}\right).
\end{align}
Using that $ \cL(g^t \cdot \vw^t) =  \cL( \vw^t),$ taking full expectation and re-arranging terms gives

 \begin{align}\label{eq:temsplo8xs5}
\eta_t \E{ \norm{ \nabla\cL(g^t\cdot \vw^t)}^2 }
& \leq  (1+\beta^2 \eta_t^2 )\E{\cL(\vw^{t})-\cL^*} -\E{\cL(\vw^{t+1}) -\cL^*} +  \beta^2 \eta_t^2  \sigma^2.
\end{align}
Now we use a re-weighting trick introduced in~\cite{stich2019}.  Let $\alpha_t >0$ be a sequence such that $\alpha_t (1+\beta^2 \eta_t^2 ) = \alpha_{t-1} $.  Consequently if $\alpha_{-1} =1$  then $\alpha_t = (1+\beta^2 \eta_t^2 )^{-(t+1)}$ .  Multiplying by both sides of~\eqref{eq:temsplo8xs5} by $\alpha_t$ thus gives
 \begin{align}\label{eq:temsplo8xs56}
\alpha_t  \eta_t  \E{ \norm{ \nabla\cL(g^t\cdot \vw^t)}^2 }
& \leq  \alpha_{t-1} \E{\cL(\vw^{t})-\cL^*} -\alpha_t  \E{\cL(\vw^{t+1}) -\cL^*} + \alpha_t  \beta^2 \eta_t^2  \sigma^2.
\end{align}
Summing up from $t=0, \ldots, T-1$,  and using telescopic cancellation,  gives
 \begin{align}\label{eq:temsplo8xs567}
\sum_{t=0}^{T-1}\alpha_t  \eta_t  \E{ \norm{ \nabla\cL(g^t\cdot \vw^t)}^2 }
& \leq   \E{\cL(\vw^{0})-\cL^*} +  \beta^2 \sigma^2 \sum_{t=0}^{T-1} \alpha_t \eta_t^2  
\end{align}
Let $A =\sum_{t=0}^{T-1}\alpha_t  \eta_t.$ Dividing both sides by $A$ gives
\begin{align}
\min_{t=0,\ldots, T-1} \E{  \norm{ \nabla\cL(g^t\cdot \vw^t)}^2 } & \leq  \frac{1}{\sum_{t=0}^{T-1}\alpha_t  \eta_t}\sum_{t=0}^{T-1}\alpha_t  \eta_t  \norm{ \nabla\cL(g^t\cdot \vw^t)}^2  \nonumber \\
& \leq\frac{  \E{\cL(\vw^{0})-\cL^*} +  \beta^2 \sigma^2 \sum_{t=0}^{T-1} \alpha_t \eta_t^2  }{\sum_{t=0}^{T-1}\alpha_t  \eta_t}.\label{eq:tempnkiz8ehz}
\end{align}
Finally,  if $\eta_t  \equiv \eta$ then 
\begin{align}
 \sum_{t=0}^{T-1}\alpha_t  \eta_t & = \eta  \sum_{t=0}^{T-1} (1+\beta^2 \eta_t^2 )^{-(t+1)} 
= \frac{\eta}{1+\beta^2 \eta^2}  \frac{1- (1+\beta^2 \eta^2)^{-T}}{1- (1+\beta^2 \eta^2)^{-1}}  \\
& = \frac{1- (1+\beta^2 \eta^2)^{-T}}{\beta^2 \eta} \label{eq:temionoxeint}
\end{align}

To bound the term with the $-T$ power,  we use that
\[(1+\beta^2 \eta^2)^{-T}  \leq \frac{1}{2} \quad \implies \quad \frac{\log(2)}{\log(1+\beta^2 \eta^2)} \leq  T.\]
To simplify the above expression we can use
\[\frac{x}{1+x}\leq \log(1+x) \leq x, \quad \mbox{ for }x \geq -1,\]
thus
\[\frac{\log(2)}{\log(1+\beta^2 \eta^2)} \leq  \frac{1+\beta^2 \eta^2}{ \beta^2 \eta^2} \leq T.\]
Using the above
we have that
\begin{align*}
 \sum_{t=0}^{T-1}\alpha_t  \eta_t & \geq
  \frac{1}{2\beta^2 \eta}, \quad \mbox{for } T \geq   \frac{1+\beta^2 \eta^2}{ \beta^2 \eta^2} 
 \end{align*}
Using this lower bound in~\eqref{eq:tempnkiz8ehz} gives
\begin{align*}
\min_{t=0,\ldots, T-1}  \E{ \norm{ \nabla\cL(g^t\cdot \vw^t)}^2  }
& \leq 2\beta^2 \eta   \E{\cL(\vw^{0})-\cL^*} + \eta \beta^2 \sigma^2,
\quad \mbox{for }T \geq   \frac{1+\beta^2 \eta^2}{ \beta^2 \eta^2} .
\end{align*}
Now note that
\[T \geq  \frac{1+\beta^2 \eta^2}{  \beta^2 \eta^2}  \Leftrightarrow \beta^2 \eta^2 (T-1) \geq   1
\Leftrightarrow  \eta \geq \frac{1}{\beta \sqrt{ (T-1)}}.\]
Thus finally setting  $ \eta = \frac{1}{\beta \sqrt{ T-1}}$ gives the result~\eqref{eq:telesgdconv}. 


\end{proof}

\out{

\begin{manualtheorem}{\ref{theo:error-bound} (Error Bound)}
    Consider the setting of Theorem~\ref{theo:telesgdconv}. If in addition to $\beta$--smoothness we also assume that the $\mu$-PL (Polyak-\L ojasiewicz) inequality holds, that is  
    \begin{eqnarray}\label{eq:PL}
      \loss(w)-\inf \loss \leq \frac{1}{2 \mu} \|\nabla \loss(w)\|^2,
    \end{eqnarray}
   then the  $w^t$ iterates given by~\eqref{eq:stochgradg} with  learning rate $\eta = \frac{1}{\beta \sqrt{ T-1}}$ 
converge according to
\begin{align}\label{eq:PLconv}
\min_{t=0,\ldots, T-1} \E{  \max_{g\in G} \norm{g\circ w^t- w^*(g\circ w^t)}  }
& \leq \frac{2\beta}{ \mu}\frac{1}{\sqrt{ T-1}}    \E{\cL(w^{0})-\cL(w^*)} + \frac{\beta}{ \mu}\frac{  \sigma^2}{ \sqrt{ T-1}},
\end{align}
where $w^*(g\circ w^t)$ is the projection of $g\circ w^t$ onto to set of minimizers of $\cL(w).$
\end{manualtheorem}
\begin{proof}
From Appendix A in~\cite{PLschmidt} we have that if $\cL$ is $\beta$--smooth, then PL inequality is equivalent to the \emph{error bound} inequality given by
\begin{eqnarray}\label{eq:errorbnd}
   \mu \norm{w-w^*(w)} \leq \|\nabla \loss(w)\|,
\end{eqnarray}
where $w^*(w)$ is the projection of $w$ onto the set of minimizers of $\cL(w).$
    The proof now follows by directly applying the above     
    in~\eqref{eq:telesgdconv} and dividing through by $\mu.$
\end{proof}

Our result in~\eqref{eq:PLconv} now shows that, for loss functions that are smooth and are $\mu$--PL, we have that the expected loss converges to the optimal loss for every point on the orbit of $w^t$. Furthermore, assuming the loss function is $\mu$--PL can be reasonable for neural networks that are overparametrized.
Indeed, in~\cite{LIU202285} the authors showed that, for overparametrized neural networks, when the task is regression, the loss function can often be $\mu$--PL.

}

\begin{proposition}
\label{prop:convex-loss}
Assume that $\loss:\R^n \xrightarrow[]{} \R$ is strictly convex and twice continuously differentiable. 
Assume also that for any two points $\vw_a, \vw_b \in \R^n$ such that $\loss(\vw_a) = \loss(\vw_b)$, there exists a $g \in G$ such that $\vw_a = g \cdot \vw_b$.
At two points $\vw_1, \vw_2 \in \R^n$, if 
$\max_{g\in G}  \norm{ \nabla\cL(g\cdot \vw_1)}^2 = \norm{ \nabla\cL(\vw_2)}^2$,
then $\cL(\vw_1) \leq \cL(\vw_2)$.
\end{proposition}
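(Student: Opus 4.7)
The plan is to introduce the function $\Phi(c) \eqdef \sup_{w : \loss(w) = c} \|\nabla \loss(w)\|^2$, rewrite the hypothesis as an inequality on $\Phi$, and then show that strict convexity forces $\Phi$ to be strictly increasing on the range of $\loss$. The assumption that every level set is a single $\mathcal G$-orbit gives $\{g \cdot w_1 : g \in \mathcal G\} = S(\loss(w_1))$ where $S(c) = \{w : \loss(w) = c\}$, so $\max_{g \in \mathcal G} \|\nabla \loss(g \cdot w_1)\|^2 = \Phi(\loss(w_1))$. The hypothesis together with the trivial bound $\|\nabla \loss(w_2)\|^2 \leq \Phi(\loss(w_2))$ then becomes $\Phi(\loss(w_1)) \leq \Phi(\loss(w_2))$, and the conclusion $\loss(w_1) \leq \loss(w_2)$ will follow at once from strict monotonicity of $\Phi$.

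The heart of the argument is the monotonicity step. Fixing $c_1 < c_2$ in the range of $\loss$, I would pick a near-maximizer $w^* \in S(c_1)$ of $\|\nabla \loss\|^2$. Strict convexity rules out any critical point other than the unique global minimizer, so either $c_1 > \min \loss$ and $\nabla \loss(w^*) \neq 0$, or $c_1 = \min \loss$, which is a trivial edge case: $\Phi(\min\loss) = 0$ while every point at a higher level has nonzero gradient, giving $\Phi(c_2) > 0$. In the main case, the strategy is to follow the ray $w(t) = w^* + t\,\nabla \loss(w^*)$ for $t \geq 0$. Because $\loss$ is strictly convex on $\R^n$, its restriction $t \mapsto \loss(w(t))$ is strictly convex in the single variable $t$, with initial derivative $\|\nabla \loss(w^*)\|^2 > 0$. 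Convexity then gives $\loss(w(t)) \geq c_1 + t\,\|\nabla \loss(w^*)\|^2 \to \infty$, so by the intermediate value theorem there is a unique $t^* > 0$ with $\loss(w(t^*)) = c_2$.

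At $t^*$, strict convexity of $\loss(w(t))$ in $t$ makes its derivative strictly greater than its value at $t = 0$, so
\begin{equation*}
\langle \nabla \loss(w(t^*)), \nabla \loss(w^*) \rangle \;=\; \tfrac{d}{dt} \loss(w(t))\big|_{t = t^*} \;>\; \|\nabla \loss(w^*)\|^2.
\end{equation*}
Cauchy--Schwarz then gives $\|\nabla \loss(w(t^*))\| > \|\nabla \loss(w^*)\|$, and since $w(t^*) \in S(c_2)$ we conclude $\Phi(c_2) > \Phi(c_1)$. Combined with the first step, this rules out $\loss(w_1) > \loss(w_2)$ and finishes the proof.

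The main obstacle I anticipate is obtaining \emph{strict} monotonicity of $\Phi$ without assuming a positive definite Hessian: strict convexity alone only makes $\nabla^2 \loss$ positive semidefinite, so the naive derivative computation $\tfrac{d}{dt}\|\nabla \loss(w(t))\|^2 = 2\,\nabla \loss^\top \nabla^2 \loss \, \nabla \loss$ along a gradient-ascent flow can degenerate to zero. The ray argument above circumvents this by exploiting strict convexity of the univariate restriction $t \mapsto \loss(w(t))$ (which follows from strict convexity on $\R^n$ by a one-line check) together with Cauchy--Schwarz, bypassing second-order information entirely.
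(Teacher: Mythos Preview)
Your proposal is correct and takes a genuinely different route from the paper. Both arguments reduce to showing that the level-set maximum $\Phi(c)=\sup_{\loss(w)=c}\|\nabla\loss(w)\|^2$ is strictly increasing, but the mechanisms differ. The paper argues by contradiction: it applies the mean value theorem to $\Phi$ to locate a level where $\Phi$ is nonincreasing, then at the maximizer $w_3^*$ on that level set computes $\nabla\loss(w_3^*)^\top H(w_3^*)\nabla\loss(w_3^*)\leq 0$ and declares this incompatible with strict convexity. Your ray argument is more direct and more robust: you move from $w^*\in S(c_1)$ along $\nabla\loss(w^*)$, use strict convexity of the one-variable restriction to get $\langle\nabla\loss(w(t^*)),\nabla\loss(w^*)\rangle>\|\nabla\loss(w^*)\|^2$, and finish with Cauchy--Schwarz. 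This sidesteps two delicate points in the paper's argument---differentiability of $\Phi$ (a pointwise maximum over a varying constraint set) and the fact that strict convexity only forces $H\succeq 0$, so $v^\top H v=0$ is not by itself a contradiction---exactly the obstacle you flagged.

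One small wrinkle: with a \emph{near}-maximizer $w^*$ you only get $\Phi(c_2)>\Phi(c_1)-\varepsilon$ for every $\varepsilon>0$, hence $\Phi(c_2)\geq\Phi(c_1)$, not the strict inequality you need. The cleanest patch is to bypass general strict monotonicity and run the ray directly from $w_2$ in the contradiction step: if $\loss(w_1)>\loss(w_2)$ and $\nabla\loss(w_2)\neq 0$, the ray from $w_2$ produces a point in $S(\loss(w_1))$ with gradient norm strictly exceeding $\|\nabla\loss(w_2)\|$, so $\Phi(\loss(w_1))>\|\nabla\loss(w_2)\|^2=\Phi(\loss(w_1))$, a contradiction; the case $\nabla\loss(w_2)=0$ is your edge case. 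Alternatively, since the statement writes $\max$ rather than $\sup$, you may simply take $w^*$ to be an actual maximizer.
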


\begin{proof}
Let $S(x)=\{\vw: \loss(\vw) = x\}$ be the level sets of $\loss$, and $X = \{\loss(\vw): \vw \in \R^n\}$ be the image of $\loss$. Since $G$ acts transitively on the level sets of $\loss$, $\max_{g\in G} \norm{ \nabla\cL(g\cdot \vw)}^2 = \max_{\vw \in S(x)} \norm{ \nabla\cL(\vw)}^2$.
To simplify notation, we define a function $F:X \xrightarrow[]{} \R$, $F(x) = \max_{\vw \in S(x)} \norm{ \nabla\cL(\vw)}^2$.
Since $\nabla \loss(\vw)$ is continuously differentiable, the directional derivative of $F$ is defined.
%
%
Additionally, since $\loss$ is continuous and its domain $\R^n$ is connected, its image $X$ is also connected. 
This means that for any $\vw_1, \vw_2 \in \R^n$ and $\min(\loss(\vw_1), \loss(\vw_2)) \leq y \leq \max(\loss(\vw_1), \loss(\vw_2))$, there exists a $\vw_3 \in \R^n$ such that $\loss(\vw_3) = y$.

Next, we show that $F(\cdot)$ is strictly increasing by contradiction.

Suppose that $\cL(\vw_1) < \cL(\vw_2)$ and $F(\loss(\vw_1)) \geq F(\loss(\vw_2))$. By the mean value theorem, there exists a $\vw_3$ such that $\loss(\vw_1) < \loss(\vw_3) < \loss(\vw_2)$ and the directional derivative of $F$ in the direction towards $\loss(\vw_2)$ is non-positive: $\partial_{\loss(\vw_2)-\loss(\vw_3)} F(\loss(\vw_3)) \leq 0$. 
Let $\vw_3^* \in \argmax_{\vw \in S(\loss(\vw_3))} \norm{ \nabla\cL(\vw)}^2$ be a point that has the largest gradient norm in $S(\loss(\vw_3))$. 
Then at $\vw_3^*$, $\norm{ \nabla\cL}^2$ cannot increase along the gradient direction. 
However, this means
\begin{align}
\label{eq:convex-contradiction}
    \grad \loss(\vw_3^*) \cdot \frac{\partial }{\partial \vw}\norm{\grad \loss(\vw_3^*)}^2 
    = \grad \loss(\vw_3^*)^T H \grad \loss(\vw_3^*)
    \leq 0.
\end{align}
Since we assumed that $\loss$ is convex and $\loss(\vw_3^*)$ is not a minimum ($\loss(\vw_3^*) > \loss(\vw_1)$), we have that $\grad \loss(\vw_3^*) \neq 0$.
Therefore, \eqref{eq:convex-contradiction} contradicts with $\loss$ being strictly convex, and we have $F(\loss(\vw_1)) < F(\loss(\vw_2))$.

We have shown that $\cL(\vw_1) < \cL(\vw_2)$ implies $F(\loss(\vw_1)) < F(\loss(\vw_2))$. 
Taking the contrapositive and switching $\vw_1$ and $\vw_2$, $F(\loss(\vw_1)) \leq F(\loss(\vw_2))$ implies $\cL(\vw_1) \leq \cL(\vw_2)$. 
Equivalently, $\max_{g\in G} \norm{ \nabla\cL(g\cdot \vw_1)}^2 \leq \max_{g\in G} \norm{ \nabla\cL(g\cdot \vw_2)}^2$ implies that $\cL(\vw_1) \leq \cL(\vw_2)$. 

Finally, since
\begin{align}
    \max_{g\in G}  \norm{ \nabla\cL(g\cdot \vw_1)}^2 
    = \norm{ \nabla\cL(\vw_2)}^2 
    \leq \max_{g\in G} \norm{ \nabla\cL(g\cdot \vw_2)}^2,
\end{align}
we have $\cL(\vw_1) \leq \cL(\vw_2)$.
\end{proof}


\section{Teleportation and Newton's method}
\label{sec:newtons-proof}


\begin{lemma}[One step of Newton's Method]
\label{lem:Newtonstep}
Let $f(x)$ be a $\mu$--strongly convex  and $L$--smooth function, that is, we have a global lower bound on the Hessian given by
\begin{equation}\label{eq:strconv}
 L  I    \succeq\nabla^2 f(x) \succeq \mu I, \quad \forall x \in \R^n.
\end{equation}
Furthermore, if the Hessian is also $G$--Lipschitz
\begin{equation}\label{eq:Hesslip}
\|\nabla^2 f(x) - \nabla^2 f(y)\| \leq G \|x-y\|\end{equation}
 then  Newton's method 
\[x^{k+1} = x^k - \lambda_k\nabla^2 f(x^k)^{-1}  \nabla f(x^k) \] 
 has a mixed linear and quadratic convergence according to 
\begin{equation}
 \|x^{k+1} -x^*\| \leq    \frac{G}{2\mu}\|x^{k}-x^*\|^2 + |1-\lambda_k|\frac{L}{2\mu} \|x^{k}-x^*\|.
\end{equation}
\end{lemma}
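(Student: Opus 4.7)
The classical analysis of Newton's method adapts cleanly to the damped step. First I would write
\[ x^{k+1}-x^* \;=\; (x^k-x^*) - \lambda_k H_k^{-1}\nabla f(x^k), \qquad H_k := \nabla^2 f(x^k), \]
and, using $\nabla f(x^*)=0$ together with the fundamental theorem of calculus, represent $\nabla f(x^k) = \overline{H}(x^k-x^*)$, where $\overline{H} := \int_0^1 \nabla^2 f(x^*+t(x^k-x^*))\,dt$ is the averaged Hessian along the segment from $x^*$ to $x^k$.

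Next I would algebraically split the residual into a ``pure Newton'' piece plus a ``damping'' piece,
\[ x^{k+1}-x^* \;=\; H_k^{-1}(H_k-\overline{H})(x^k-x^*) \;+\; (1-\lambda_k)\,H_k^{-1}\overline{H}\,(x^k-x^*), \]
and bound each by the triangle inequality followed by operator-norm estimates. For the first summand, strong convexity~\eqref{eq:strconv} gives $\|H_k^{-1}\|\le 1/\mu$, and the Hessian-Lipschitz property~\eqref{eq:Hesslip} gives $\|H_k-\nabla^2 f(x^*+t(x^k-x^*))\|\le G(1-t)\|x^k-x^*\|$; integrating over $t\in[0,1]$ yields $\|H_k-\overline{H}\|\le \tfrac{G}{2}\|x^k-x^*\|$, hence the $\frac{G}{2\mu}\|x^k-x^*\|^2$ contribution. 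For the second summand, strong convexity plus smoothness give $\|H_k^{-1}\overline{H}\|\le L/\mu$, which combined with the $|1-\lambda_k|$ prefactor produces the claimed linear error term.

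The main obstacle I anticipate is the factor of $\tfrac12$ in the stated $\frac{L}{2\mu}|1-\lambda_k|\|x^k-x^*\|$ linear term: the naive operator-norm bound above only delivers $L/\mu$. Closing the gap will likely require a finer splitting --- for instance writing $\overline{H}=\tfrac12 H_k+\tfrac12(2\overline{H}-H_k)$ and absorbing the slack into the already-present quadratic piece via the Hessian-Lipschitz estimate, or directly invoking an intermediate-value form of the Hessian integral that halves the effective deviation. The qualitative structure --- a quadratic contraction plus an $|1-\lambda_k|$-weighted linear error vanishing at $\lambda_k=1$ --- is robust to these constant-level choices and follows from the decomposition above; only the precise leading constant on the linear term requires the extra bookkeeping.
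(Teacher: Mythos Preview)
Your approach is essentially the paper's: they write
\[
x^{k+1}-x^*=\nabla^2 f(x^k)^{-1}\int_0^1\Big[\nabla^2 f(x^k)-\nabla^2 f\big(x^k+s(x^*-x^k)\big)+(1-\lambda_k)\nabla^2 f\big(x^k+s(x^*-x^k)\big)\Big](x^k-x^*)\,ds
\]
and bound the two pieces using $\|\nabla^2 f(x^k)^{-1}\|\le 1/\mu$, the Hessian-Lipschitz estimate for the first, and $\|\nabla^2 f(\cdot)\|\le L$ for the second---precisely your decomposition with the parametrization $s=1-t$.

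Your worry about the $\tfrac12$ in the linear term is legitimate and you should not spend effort trying to recover it. The paper arrives at $|1-\lambda_k|\tfrac{L}{2\mu}\|x^k-x^*\|$ by writing the second integral as $|1-\lambda_k|\tfrac{L}{\mu}\int_0^1 s\,\|x^k-x^*\|\,ds$, but the factor $s$ in that integrand is unjustified: the smoothness bound $\|\nabla^2 f(x^k+s(x^*-x^k))\|\le L$ carries no $s$-dependence, so the correct evaluation is $\int_0^1\,ds=1$ and the honest constant is $L/\mu$, exactly as you computed. This appears to be a slip in the paper's arithmetic rather than a missing idea on your part.
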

\begin{proof}
\begin{align*}
 x^{k+1} -x^* &= x^k -x^* -\lambda_k \nabla^2 f(x^k)^{-1}\left(\nabla f(x^k)-\nabla f(x^*)\right)\\
  &=  x^k -x^* -\lambda_k \nabla^2 f(x^k)^{-1}\int_{s=0}^1 \nabla^2 f(x^k+s(x^{*}-x^k))(x^k-x^{*})ds \quad \mbox{(Mean value theorem)}\\
  &= \nabla^2 f(x^k)^{-1}\int_{s=0}^1\left( \nabla^2 f(x^k) - \lambda_k \nabla^2 f(x^k+s(x^{*}-x^k))\right)(x^k-x^{*}) ds \\
  &= \nabla^2 f(x^k)^{-1}\int_{s=0}^1\left( \nabla^2 f(x^k) - \nabla^2 f(x^k+s(x^{*}-x^k)) \right. \\
  &\hspace{90pt} \left. +(1-\lambda_k)  \nabla^2 f(x^k+s(x^{*}-x^k))   \right)(x^k-x^{*}) ds
\end{align*}
Let $\delta_k :=  \|x^{k+1} -x^*\| $.
Taking norms we have that
\begin{align*}
\delta_{k+1}&\leq 
 \|\nabla^2 f(x^k)^{-1}\|\int_{s=0}^1\left(\| \nabla^2 f(x^k) -  \nabla^2 f(x^k+s(x^{*}-x^k))\| \right. \\
  &\hspace{100pt} \left. +|1-\lambda_k\| \|\nabla^2 f(x^k+s(x^{*}-x^k))\| \right) \delta_k ds\\
 & \overset{\eqref{eq:Hesslip}+\eqref{eq:strconv}}{\leq} \frac{G}{\mu}
 \int_{s=0}^1 s\|x^{k}-x^*\|^2 ds  + |1-\lambda_k|\frac{L}{\mu} \int_{s=0}^1 s\|x^{k}-x^*\| ds \\
 & = \frac{G}{2\mu}\|x^{k}-x^*\|^2 + |1-\lambda_k|\frac{L}{2\mu} \|x^{k}-x^*\|.
\end{align*}
\end{proof}
The assumptions on for this proof can be relaxed, since we only require the Hessian is Lipschitz and lower bounded in a $\frac{\mu}{2L}$--ball around $x^*$.

\begin{manualproposition}{\ref{lem:newtondir}}
[Quadratic term in convergence rate]
Let $\loss$ be strictly convex and let $w_0\in \R^d$.  Let 
\begin{equation}\label{eq:lemmanewtondir}
 w' \in \argmax_{w\in \R^d} \frac{1}{2}\|\nabla \loss (w)\|^2 \quad \mbox{subject to}\quad \loss(w) =\loss(w_0). 
  \end{equation}
If $ \nabla \loss(w') \neq 0$ then there exists $\lambda_0$ such that
\[ 0\leq  \lambda_0 \leq \lambda_{\max} ( \nabla^2 \loss(w_0))\]
and one step of gradient descent with learning rate $\gamma >0$ gives
\begin{align}
 w_1&= w' -\gamma  \nabla \loss(w') \nonumber \\
 &= w' -\gamma \lambda_0 \nabla^2 \loss(w')^{-1} \nabla \loss(w').
\end{align}
Consequently,  letting $w' = g_0 \circ w_0$, and if  $\gamma \leq \frac{1}{\lambda_0}$ then under the assumptions of Lemma~\ref{lem:Newtonstep} we have that
\[ \|w_1-w^*\| \leq    \frac{G}{2\mu}\|g_0 \circ w_0-x^*\|^2 + |1-\gamma\lambda_0|\frac{L}{2\mu} \|g_0 \circ w_0-w^*\|.\]
\end{manualproposition}
\begin{proof}
The Lagrangian associated to~\eqref{eq:lemmanewtondir} is given by
\[L(w, \lambda ) =\frac{1}{2}\|\nabla \loss (w)\|^2 + \lambda (\loss(w_0) -\loss(w)). \]
Taking the derivative in $w$ and setting it to zero gives
\begin{equation}\label{eq:tempo8zh8hz4}
 \nabla_w L(w, \lambda_0 ) = 0 \; \implies \;  \nabla^2 \loss(w) \nabla \loss (w) - \lambda_0 \nabla \loss(w) =0. 
\end{equation}
Re-arranging we have that
\[ \nabla \loss (w)  = \lambda_0  \nabla^2 \loss(w)^{-1} \nabla \loss (w). \]
If $ \nabla \loss(w') \neq 0 $ then from the above we have that
\[ \|\nabla \loss (w) \|^2 = \lambda_0  \nabla \loss (w) ^\top   \nabla^2 \loss(w)^{-1} \nabla \loss (w) > 0.  \]
Since $\nabla^2 \loss(w)^{-1} $ is positive definite we have that  $ \nabla \loss (w) ^\top   \nabla^2 \loss(w)^{-1} \nabla \loss (w) \geq 0$,  and consequently 
$\lambda_0 >0.$ Finally from~\eqref{eq:tempo8zh8hz4} we have that $\lambda_0$ is an eigenvalue of $\nabla^2 \loss(w)$ and thus it must be smaller or equal to the largest eigenvalue of $\nabla^2 \loss(w)$.

%

\end{proof}

\out{
\subsection{Quadratics}
\bz{Remove?}

Now consider the setting where $\loss(w) = \frac{1}{2}w^\top A w$ where $A$ is positive definite.
\begin{lemma}[Exact update]
Let $A\in \R^{d\times d}$ be positive definite.  Let $U\Lambda U^\top = A$ be the eigendecomposition of $A$.  The solution to
\begin{equation}\label{eq:tempquadproj}
w' = \argmax_{w\in \R^d} \| A w\|^2 \quad \mbox{subject to}\quad  w^\top A w = w_0^\top A w_0
\end{equation} 
is given by $w'=0$ if $w_0^\top A w_0.$ Alternatively if $w_0^\top A w_0 \neq 0$ we have that 
\[w'\; = \;\]
\end{lemma}
\begin{proof}
Let $z :=  \Lambda U^\top w.$
Using the eigendecomposition of $A$ we have taht 
\[ \| A w\|^2 =  \| \Lambda U^\top w \|^2 =  \sum_{i}  z_i^2.\]
Furthermore
\[ \frac{1}{2}w^\top A w  =  \frac{1}{2}w^\top U\Lambda U^\top  w  = \sum_i \frac{1}{\lambda_i} z_i^2. \]
Let $c =  \frac{1}{2}w_0^\top A w_0.$
Consequently~\eqref{eq:tempquadproj} is equivalent to\
\begin{align*}
w' = \argmax_{w\in \R^d}  \sum_{i}  z_i^2.\quad \mbox{subject to}\quad   \sum_i  \frac{1}{\lambda_i} z_i^2= c.
\end{align*}
The Lagrangian of the above is given by
\[ L(w,\tau) =  z_i^2 + \tau ( c - \sum_i \lambda_i^{-1} z_i^2). \]
Differentiating and setting to zero gives
\begin{equation}\label{eqL:zeoinz9e}
\nabla_i L(w,\tau)  =0  \quad \Leftrightarrow \quad  z_i - \tau (\lambda_i^{-1} z_i) =0 \quad \Leftrightarrow \quad    z_i (\lambda_i^{-1} -\tau) =0,
\end{equation}
for $i=1,\ldots, d.$ From which we conclude that 
\begin{equation}\label{eq:cond1seoinipnzr}
 \boxed{z_i =0 \quad \mbox{or} \quad \tau = \lambda_i^{-1}}
\end{equation}
Furthermore multiplying each equation in~\eqref{eqL:zeoinz9e} by $\frac{z_i}{\lambda_i}$ and summing up from $i=1,\ldots, d$ we have that
\[ \sum_i  z_i^2 (\lambda_i^{-1}-\tau)  = \sum_i  \lambda_i^{-1} z_i^2  - \tau \sum_i  z_i^2 = c -\tau \sum_i  z_i^2  =0 .\]
Isolating $\tau$ in the above gives 
\begin{equation}\label{eq:cond2seoinipnzr} \boxed{\sum_i  z_i^2 \;=\;  \frac{c}{  \tau }}.\end{equation}
The conditions~\eqref{eq:cond1seoinipnzr} and~\eqref{eq:cond2seoinipnzr} characterize all the stationary points. The maximum is give by $z$ such that $  \sum_{i} z_i^2$ is large and $z$ satisfies~\eqref{eq:cond1seoinipnzr} and~\eqref{eq:cond2seoinipnzr}.
Due to~\eqref{eq:cond2seoinipnzr} we should choose $\tau$ as small as possible for  $\sum_i  z_i^2$ to be large.  Due to~\eqref{eq:cond2seoinipnzr} this corresponds to choosing $\tau = 1/\lambda_{\max}.$ Consequently
\[z_{i_{\max}}^2 = c \lambda_{\max} \quad \Leftrightarrow \quad  z_{i_{\max}} = \sqrt{c\lambda_{\max}}\]
and $z_i =0$ for $i \neq i_{\max}.$
Consequently
\[w' =U \Lambda^{-1} z =u_{i_{\max}} \lambda_{\max} z_{i_{\max}} =u_{i_{\max}} \lambda_{\max}^{-1} \sqrt{c\lambda_{\max}} = 
 u_{i_{\max}} \sqrt{\frac{c}{\lambda_{\max}}}. \]
\end{proof}
}

\section{Is one teleportation enough to find the optimal trajectory?}
\label{appendix:one-teleportation-enough}

This section contains proofs for the results in Section \ref{sec:one-teleportation-enough}. For readability, we repeat some of the definitions here. 

Consider the parameter space $\mathcal{M} = \R^n$.
Let $V: \mathcal{\R}^n \xrightarrow[]{} T\mathcal{\R}^n$ be a vector field on $\R^n$, where $T\mathcal{\R}^n$ denotes the associated tangent bundle. We will write $V = v^i \frac{\ro }{\ro w^i}$ using the component functions $v^i: \R^n \xrightarrow[]{} \R$ and coordinates $w^i$. 

Let $\loss: \mathcal{M} \xrightarrow[]{} \R$ be a smooth loss function. 
Let $G$ be a symmetry group of $\loss$, i.e. $\loss(g \cdot \vw) = \loss(\vw)$ for all $\vw \in \mathcal{M}$ and $g \in G$.
Let $\mathfrak{X}$ be the set of all vector fields on $\mathcal{M}$. 
Let $R = r^i \frac{\ro }{\ro w^i}$, where $r^i = -\frac{\ro \loss}{\ro w_i}$, be the reverse gradient vector field. 
Let $\mathfrak{X}_{\perp} = \{A=a^i \frac{\ro }{\ro w^i} \in \mathfrak{X}|~ a^i \in C^\infty(\mathcal{M}) \text{ and } \sum_i a^i (\vw) 
r^i(\vw) = 0, \forall \vw \in \mathcal{M}\}$ be the set of vector fields orthogonal to $R$. 
If $G$ is a Lie group, the infinitesimal action of its Lie algebra $\mathfrak{g}$ defines a set of vector fields $\mathfrak{X}_{\mathfrak{g}} \subseteq \mathfrak{X}_{\perp}$. 

A gradient flow is a curve $\gamma: \R \xrightarrow[]{} \mathcal{M}$ where the velocity is the value of $R$ at each point, i.e. $\gamma'(t) = R_{\gamma(t)}$ for all $t \in \R$. 
The Lie bracket $[A, R]$ defines the derivative of $R$ with respect to $A$. 
To simplify notation, we write $([W, R]\loss)(\vw)=0$ for a set of vector fields $W \subseteq \mathfrak{X}$ when $([A, R]\loss)(\vw)=0$ for all  $A \in W$.


\begin{manualproposition}{\ref{prop:Af-ARf}}
    A point $\vw \in M$ is optimal in a set of vector fields $W$ if and only if $[A, R]\loss(\vw) = 0$ for all $A \in W$.
\end{manualproposition}
\begin{proof}
    Note that $A\loss = a^i \frac{\partial \loss}{\partial w^i} = 0$. We have
    \begin{align}
        [A, R]\loss &= A R\loss - R A\loss
        = A \left(r^i \frac{\ro \loss}{\ro w_i}\right) - 0 
        = -A \left\|\frac{\ro \loss}{\ro \vw}\right\|_2^2
        = -Af.
    \end{align}
    The result then follows from Definition \ref{def:optimal-point-flow}.
\end{proof}

\begin{manualproposition}{\ref{prop:one-teleportation}}
    Let $W \subseteq \mathfrak{X}_{\perp}$ be a set of vector fields that are orthogonal to the gradient of $\loss$. If $[A, R]\loss(\vw)=0$ for all $A \in W$ implies that $R([A, R]\loss)(\vw)=0$ for all $A \in W$, then the gradient flow starting at an optimal point in $W$ is optimal in $W$.
\end{manualproposition}
\begin{proof}
Consider the gradient flow $\gamma$ that starts at an optimal point in $W$. 
The derivative of $[A, R]\loss$ along $\gamma$ is
\begin{align}
    \frac{d}{dt}[A, R]\loss(\gamma(t)) 
    = \gamma'(t) ([A, R]\loss) (\gamma(t)) 
    = -R[A, R]\loss(\gamma(t)).
\end{align}

Since $\gamma(0)$ is an optimal point, $[A, R]\loss(\gamma(0)) = 0$ for all $A \in W$ by Proposition \ref{prop:Af-ARf}.
By assumption, if $[A, R]\loss(\gamma(t))=0$ for all $A \in W$, then $R([A, R]\loss)(\gamma(t))=0$ for all $A \in W$.
Therefore, both the value and the derivative of $[A, R]\loss$ stay 0 along $\gamma$.
Since $[A, R]\loss(\gamma(t)) = 0$ for all $t \in \R$, $\gamma$ is optimal in $W$.
\end{proof}

To help check when Proposition \ref{prop:one-teleportation} is satisfied, we provide an alternative form of $R[A, R]\loss(\vw)$ under the assumption that $[A, R]\loss(\vw)=0$. 
We will use the following lemmas in the proof.

\begin{lemma}
\label{lemma:ortho-vector-antisymmetric-1}
    For two vectors $\vv, \vw \in \R^n$, if $\vv^T \vw = 0$ and $\vw \neq \mathbf{0}$, then there exists an anti-symmetric matrix $M \in \R^{n \times n}$ such that $\vv = M \vw$.
\end{lemma}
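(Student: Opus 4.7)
The plan is to exhibit an explicit antisymmetric matrix $M$ satisfying $M\vw = \vv$, rather than to argue by dimension counting or basis extension. The natural rank-at-most-two candidate is
\[
M \;\eqdef\; \frac{1}{\|\vw\|^2}\bigl(\vv \vw^T - \vw \vv^T\bigr),
\]
which is well defined because $\vw \neq \mathbf{0}$. This is essentially the unique minimum-Frobenius-norm antisymmetric solution; writing it as a difference of rank-one outer products makes the two verifications below immediate.

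First I would check antisymmetry by transposing: since $(\vv\vw^T)^T = \vw\vv^T$ and $(\vw\vv^T)^T = \vv\vw^T$, we get $M^T = \tfrac{1}{\|\vw\|^2}(\vw\vv^T - \vv\vw^T) = -M$. Next I would verify that $M$ maps $\vw$ to $\vv$ by expanding
\[
M\vw \;=\; \frac{1}{\|\vw\|^2}\bigl(\vv(\vw^T\vw) - \vw(\vv^T\vw)\bigr),
\]
and using the two hypotheses: $\vw^T\vw = \|\vw\|^2$ and, crucially, $\vv^T\vw = 0$. The orthogonality hypothesis is what kills the second term, leaving $M\vw = \vv$.

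There is no serious obstacle here; the only substantive ingredient is recognizing that the orthogonality assumption $\vv^T\vw = 0$ is exactly what is needed to make the rank-two antisymmetric construction hit $\vv$ on $\vw$ without a residual $\vw$-component. If one wanted a uniqueness/characterization statement one would need more work (the set of valid $M$ is an affine subspace obtained by adding any antisymmetric matrix that annihilates $\vw$), but for existence the single-line construction suffices.
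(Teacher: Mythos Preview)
Your construction is correct and considerably shorter than the paper's own argument. The paper does not write down a single closed-form $M$; instead it first treats the special vector $\vw_0 = (1,0,\dots,0)^T$, exhibiting $n-1$ explicit antisymmetric matrices $M_1,\dots,M_{n-1}$ (each with exactly two nonzero entries $\pm 1$) for which $M_i\vw_0 = e_{i+1}$, so that $\{M_i\vw_0\}$ is an orthonormal basis of $\vw_0^\perp$. It then picks an orthogonal $R$ with $R\vw_0 = \vw/\|\vw\|$, conjugates to $M_i' = RM_iR^T$, verifies that these are still antisymmetric and that $\{M_i'(\vw/\|\vw\|)\}$ is still an orthonormal basis of $\vw^\perp$, and finally expands $\vv$ in that basis to assemble the desired $M$ as a scalar combination of the $M_i'$. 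Your rank-two formula $M = \|\vw\|^{-2}(\vv\vw^T - \vw\vv^T)$ bypasses the basis construction and the orthogonal-conjugation step entirely; the only thing you give up is the explicit spanning family of antisymmetric directions, but the paper does not actually use that extra structure downstream (the subsequent lemma invokes this one purely as an existence statement), so nothing is lost.
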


\begin{proof}
Let $\vw_0 = [1, 0, ..., 0]^T \in \R^n$. Consider a list of $n-1$ anti-symmetric matrices $M_i \in \R^{n \times n}$, where 
\begin{align}
    M_{ij}^{~k} =
    \begin{cases}
        -1, & \text{if $j=1$ and $k=i+1$}\\
        1, & \text{if $j=i+1$ and $k=1$}\\
        0, & \text{otherwise}
     \end{cases}
\end{align}
In matrix form, the $M_i$'s are
\begin{align}
    M_1 = 
    \begin{bmatrix}
        0 & -1 & 0 & ... & 0 \\
        1 & 0 & 0 & ... & 0 \\
        0 & 0 & 0 & ... & 0 \\
          &   & ... &  & \\
        0 & 0 & 0 & ... & 0
    \end{bmatrix},  
    M_2 = 
    \begin{bmatrix}
        0 & 0 & -1 & ... & 0 \\
        0 & 0 & 0 & ... & 0 \\
        1 & 0 & 0 & ... & 0 \\
          &   & ... &  & \\
        0 & 0 & 0 & ... & 0
    \end{bmatrix},  ..., 
    M_{n-1} = 
    \begin{bmatrix}
        0 & 0 & 0 & ... & -1 \\
        0 & 0 & 0 & ... & 0 \\
        0 & 0 & 0 & ... & 0 \\
          &   & ... &  & \\
        1 & 0 & 0 & ... & 0
    \end{bmatrix}.
\end{align}
Since $M_i$'s are anti-symmetric, $M_i \vw_0$ is orthogonal to $\vw_0$. The norm of $M_i \vw_0 = \mathbf{e}_{i+1}$ is 1. Additionally, $M_i \vw_0$ is orthogonal to $M_j \vw_0$ for $i \neq j$:
\begin{align}
    (M_i \vw_0)^T (M_j \vw_0) = \mathbf{e}_{i+1}^T \mathbf{e}_{j+1} = \delta_{ij}.
\end{align}
Denote $\vw_0^\bot = \{\vx \in \R^n: \vx^T\vw_0 = 0 \}$ as the orthogonal complement of $\vw_0$.
Then $M_i \vw_0$ forms a basis of $\vw_0^\bot$. Next, we extend this to an arbitrary $\vw \in \R^n$.

Let $\hat{\vw} = \frac{\vw}{\|\vw\|_2}$. Since $\hat{\vw}$ has norm 1, 
there exists an orthogonal matrix $R$ such that $\hat{\vw} = R\vw_0$. Let $M_i' = RM_iR^T$. Then $M_i'$ is anti-symmetric:
\begin{align}
    (RM_iR^T)^T = RM^T_iR^T = -RM_iR^T.
\end{align}
It follows that $M_i'\hat{\vw}$ is orthogonal to $\hat{\vw}$. The norm of $M_i' \hat{\vw}$ is $\|(RM_iR^T)(R\vw_0)\| = \|R M_i \vw_0\| = \|M_i \vw_0\| = 1$. Additionally, $M_i' \hat{\vw}$ is orthogonal to $M_j' \hat{\vw}$ for $i \neq j$:
\begin{align}
    (M_i' \hat{\vw})^T (M_j' \hat{\vw}) 
    &= (RM_iR^T R\vw_0)^T (RM_jR^T R\vw_0) \cr
    &= \vw_0^T R^T R M_i^T R^T RM_jR^T R\vw_0 \cr
    &= \vw_0^T M_i^T M_j \vw_0 \cr
    &= \delta_{ij}.
\end{align}
Therefore, $M_i' \hat{\vw}$ spans $\hat{\vw}^\bot = \vw^\bot$. This means that any vector $\vv \in \vw^\bot$ can be written as a linear combination of $M_i' \hat{\vw}$. That is, there exists $k_1, ..., k_n \in \R$, such that $\vv = \sum_i k_i (M_i' \hat{\vw})$. To find the anti-symmetric $M$ that takes $\vw$ to $\vv$, note that 
\begin{align}
    \vv 
    = \left(\sum_i k_i M_i' \right) \hat{\vw} 
    = \left( \| \vw\|_2^{-1} \sum_i k_i M_i' \right) \vw.
\end{align}
Since the sum of anti-symmetric matrices is anti-symmetric, and the product of an anti-symmetric matrix and a scalar is also anti-symmetric, $\|\vw\|_2^{-1} \sum_i k_i M_i'$ is anti-symmetric. 
\end{proof}

\begin{lemma}
\label{lemma:ortho-vector-antisymmetric-3}
    Let $\vv \in \R^{n}$ be a nonzero vector. Then the two sets $\{M\vv: M \in \R^{n \times n}, M^T=-M\}$ and $\{\vw \in \R^n: \vw^T \vv = 0\}$ are equal. 
\end{lemma}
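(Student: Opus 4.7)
The plan is to prove set equality by establishing the two inclusions separately. Fortunately, Lemma \ref{lemma:ortho-vector-antisymmetric-1} already provides one direction essentially for free, so most of the work is packaging and handling the easy direction.

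For the inclusion $\{M\vv : M^T = -M\} \subseteq \{\vw : \vw^T\vv = 0\}$, I would take an arbitrary anti-symmetric $M$ and compute $\vv^T(M\vv)$. Since this is a scalar, it equals its own transpose, giving
\begin{equation}
\vv^T M \vv \;=\; (\vv^T M \vv)^T \;=\; \vv^T M^T \vv \;=\; -\vv^T M \vv,
\end{equation}
so $\vv^T M \vv = 0$, which is exactly orthogonality of $M\vv$ and $\vv$. This is a two-line argument and requires nothing beyond the definition of anti-symmetry.

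For the reverse inclusion $\{\vw : \vw^T\vv = 0\} \subseteq \{M\vv : M^T = -M\}$, I would invoke Lemma \ref{lemma:ortho-vector-antisymmetric-1} directly. Given any $\vw \in \R^n$ with $\vw^T \vv = 0$, the lemma (applied with the roles of $\vv$ and $\vw$ swapped compared to its statement, using that $\vv \neq 0$ by hypothesis) produces an anti-symmetric matrix $M$ with $\vw = M\vv$. Thus $\vw$ lies in the left-hand set.

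I do not expect any real obstacle here: the forward direction is an immediate scalar-transpose trick, and the backward direction is a direct appeal to the preceding lemma, where the nonzero hypothesis on $\vv$ is precisely what is needed to apply that lemma. The only minor subtlety is making sure the orientation of Lemma \ref{lemma:ortho-vector-antisymmetric-1} matches: that lemma says if $\vv^T\vw = 0$ and $\vw \neq 0$, then $\vv = M\vw$ for some anti-symmetric $M$, so one just relabels variables and uses $\vv \neq 0$.
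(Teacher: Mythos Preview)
Your proposal is correct and matches the paper's approach exactly: the paper also proves the two inclusions separately, using the fact that $(M\vv)^T\vv = 0$ for anti-symmetric $M$ in one direction and invoking Lemma~\ref{lemma:ortho-vector-antisymmetric-1} in the other. Your write-up is in fact more careful than the paper's, since you spell out the scalar-transpose computation and note the variable relabeling needed to apply the preceding lemma.
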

\begin{proof}
    Let $A = \{M\vv: M \in \R^{n \times n}, M^T=M^{-1}\}$ and $B = \{\vw \in \R^n: \vw^T \vv = 0\}$. Since $(M\vv)^T \vv = 0$ for all anti-symmetric $M$, every element in $A$ is in $B$. By Lemma \ref{lemma:ortho-vector-antisymmetric-1}, every element in $B$ is in $A$. Therefore $A=B$.
\end{proof}

Let $S = \{(M\frac{\partial \loss}{\partial \vw})^i \frac{\ro }{\ro w^i} \in \mathfrak{X}|~ M \in \R^{n \times n}, M^T = -M\}$ be the set of vector fields constructed by multiplying the gradient by an anti-symmetric matrix. 
Recall that $R = -\frac{\ro \loss}{\ro w_i} \frac{\ro }{\ro w^i}$ is the reverse gradient vector field, and $\mathfrak{X}_{\perp} = \{a^i \frac{\ro }{\ro w^i}|~ \sum_i a^i (\vw) 
\frac{\ro \loss(\vw)}{\ro w^i} = 0, \forall \vw \in \mathcal{M}\}$ is the set of all vector fields orthogonal to $R$. 
From Lemma \ref{lemma:ortho-vector-antisymmetric-3}, we have $S=\mathfrak{X}_{\perp}$. Therefore, a point $\vw$ is an optimal point in $S$ if and only if $\vw$ is an optimal point in $\mathfrak{X}_{\perp}$.

We are now ready to prove the following proposition, which provides another way to check the condition in Proposition \ref{prop:one-teleportation}.


\begin{manualproposition}{\ref{prop:RARL2}}
    If at all optimal points in $S$,  
    \begin{align}
        M_\alpha^j \frac{\ro \loss}{\ro w_k} \frac{\ro \loss}{\ro w_\alpha} \frac{\ro^3 \loss}{\ro w^k \ro w_i \ro w^j} \frac{\ro \loss}{\ro w^i} = 0
    \end{align}
    for all anti-symmetric matrix $M \in \R^{n \times n}$, then the gradient flow starting at an optimal point in $S$ is optimal in $S$.
\end{manualproposition}

\begin{proof}
Expanding $R[A, R]\loss$, we have
\begin{align}
    R[A, R]\loss &= R\left(A \left(r^i \frac{\ro \loss}{\ro w^i}\right) - 0 \right) \cr
    &= r^k \frac{\ro}{\ro w^k} \left(a^j \frac{\ro}{\ro w_j} \left(r^i \frac{\ro \loss}{\ro w^i}\right) \right) \cr
    &= r^k \frac{\ro}{\ro w^k} \left(a^j  \left(\frac{\ro r^i}{\ro w^j} \frac{\ro \loss}{\ro w^i} + r^i \frac{\ro}{\ro w^j} \frac{\ro \loss}{\ro w^i}\right) \right) \cr
    &= -r^k \frac{\ro}{\ro w^k} \left(a^j  \left(\left(\frac{\ro}{\ro w^j}\frac{\ro \loss}{\ro w_i}\right) \frac{\ro \loss}{\ro w^i} + \frac{\ro \loss}{\ro w_i} \frac{\ro}{\ro w^j} \frac{\ro \loss}{\ro w^i}\right) \right) \cr
    &= -2r^k \frac{\ro}{\ro w^k} \left(a^j \frac{\ro^2 \loss}{\ro w_i \ro w^j} \frac{\ro \loss}{\ro w^i} \right)\cr
    &= -2r^k \left(
    \frac{\ro a^j}{\ro w^k} \frac{\ro^2 \loss}{\ro w_i \ro w^j} \frac{\ro \loss}{\ro w^i}
    + a^j \frac{\ro}{\ro w^k} \left(\frac{\ro^2 \loss}{\ro w_i \ro w^j} \frac{\ro \loss}{\ro w^i} \right)
    \right) \cr
    &= 2 \frac{\ro \loss}{\ro w_k}
    \frac{\ro a^j}{\ro w^k} \frac{\ro^2 \loss}{\ro w_i \ro w^j} \frac{\ro \loss}{\ro w^i}
    + 2 \frac{\ro \loss}{\ro w_k} a^j \frac{\ro}{\ro w^k} \left(\frac{\ro^2 \loss}{\ro w_i \ro w^j} \frac{\ro \loss}{\ro w^i} \right)
\label{eq:RARL-expand-1}
\end{align}
Assume that $\vw$ is an optimal point in $S$. 
By Lemma \ref{lemma:ortho-vector-antisymmetric-3}, $\vw$ is also an optimal point in $\mathfrak{X}_{\perp}$. By Lemma C.4 in \cite{zhao2022symmetry}, $\frac{\ro \loss}{\ro \vw}$ is an eigenvector of $\frac{\ro^2 \loss}{\ro w_i \ro w^j}$. 
Therefore, $\frac{\ro^2 \loss}{\ro w_i \ro w^j}\frac{\ro \loss}{\ro w^i} = \lambda \frac{\ro \loss}{\ro w^j}$ for some $\lambda \in \mathbb{C}$.
Additionally, $a^j = M_\alpha^j \frac{\ro \loss}{\ro w_\alpha}$ and $\frac{\ro a^j}{\ro w^k} = M_\alpha^j \frac{\ro^2 \loss}{\ro w_\alpha \ro w^k}$. We are now ready to simplify both terms in \eqref{eq:RARL-expand-1}. 

For the first term in \eqref{eq:RARL-expand-1},
\begin{align}
    \frac{\ro \loss}{\ro w_k}
    \frac{\ro a^j}{\ro w^k} \frac{\ro^2 \loss}{\ro w_i \ro w^j} \frac{\ro \loss}{\ro w^i}
    &= \frac{\ro \loss}{\ro w_k} M_\alpha^j \frac{\ro^2 \loss}{\ro w_\alpha \ro w^k} \frac{\ro^2 \loss}{\ro w_i \ro w^j} \frac{\ro \loss}{\ro w^i} \cr
    &= M_\alpha^j 
    \left(\frac{\ro^2 \loss}{\ro w_\alpha \ro w^k}\frac{\ro \loss}{\ro w_k}\right) 
    \left(\frac{\ro^2 \loss}{\ro w_i \ro w^j} \frac{\ro \loss}{\ro w^i} \right) \cr
    &= M_\alpha^j \left(\lambda_1 \frac{\ro \loss}{\ro w_\alpha}\right) \left(\lambda_2 \frac{\ro \loss}{\ro w^j} \right) \cr
    &= \lambda_1 \lambda_2 M_\alpha^j \frac{\ro \loss}{\ro w_\alpha} \frac{\ro \loss}{\ro w^j} \cr
    &= 0
\end{align}
The last equality holds because $M$ is anti-symmetric.

For the second term in \eqref{eq:RARL-expand-1},
\begin{align}
    \frac{\ro \loss}{\ro w_k} a^j \frac{\ro}{\ro w^k} \left(\frac{\ro^2 \loss}{\ro w_i \ro w^j} \frac{\ro \loss}{\ro w^i} \right)
    &= \frac{\ro \loss}{\ro w_k} a^j \left(\frac{\ro^3 \loss}{\ro w^k \ro w_i \ro w^j} \frac{\ro \loss}{\ro w^i} + \frac{\ro^2 \loss}{\ro w_i \ro w^j} \frac{\ro^2 \loss}{\ro w^k\ro w^i} \right) \cr
    &= \frac{\ro \loss}{\ro w_k} M_\alpha^j \frac{\ro \loss}{\ro w_\alpha} 
    \left(\frac{\ro^3 \loss}{\ro w^k \ro w_i \ro w^j} \frac{\ro \loss}{\ro w^i} + \frac{\ro^2 \loss}{\ro w_i \ro w^j} \frac{\ro^2 \loss}{\ro w^k\ro w^i} \right) \cr
    &= M_\alpha^j \frac{\ro \loss}{\ro w_k} \frac{\ro \loss}{\ro w_\alpha} \frac{\ro^3 \loss}{\ro w^k \ro w_i \ro w^j} \frac{\ro \loss}{\ro w^i} + \lambda_1 \lambda_2 M_\alpha^j \frac{\ro \loss}{\ro w_\alpha} \frac{\ro \loss}{\ro w^j} \cr
    &= M_\alpha^j \frac{\ro \loss}{\ro w_k} \frac{\ro \loss}{\ro w_\alpha} \frac{\ro^3 \loss}{\ro w^k \ro w_i \ro w^j} \frac{\ro \loss}{\ro w^i}
\end{align}

In summary, 
\begin{align}
    R[A, R]\loss = 2M_\alpha^j \frac{\ro \loss}{\ro w_k} \frac{\ro \loss}{\ro w_\alpha} \frac{\ro^3 \loss}{\ro w^k \ro w_i \ro w^j} \frac{\ro \loss}{\ro w^i}.
\end{align}
Since we assumed that $[A, R]\loss(\vw) = 0$, when $R[A, R]\loss(\vw) = 0$ for all $A \in S$, the gradient flow starting at an optimal point in $S$ is optimal in $S$.
\end{proof}

\begin{proposition}
\label{proposition:anti-sym-equality}
    If $\frac{\ro^3 \loss}{\ro w^k \ro w^i \ro w^j} \frac{\ro \loss}{\ro w^\alpha} = \frac{\ro^3 \loss}{\ro w^k \ro w^i \ro w^\alpha} \frac{\ro \loss}{\ro w^j}$
    holds for all $i, k, j, \alpha$, then $M_\alpha^j \frac{\ro \loss}{\ro w_k} \frac{\ro \loss}{\ro w_\alpha} \frac{\ro^3 \loss}{\ro w^k \ro w_i \ro w^j} \frac{\ro \loss}{\ro w^i} = 0$
    holds for all anti-symmetric matrices $M \in \R^{n \times n}$.
\end{proposition}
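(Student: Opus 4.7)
The plan is to isolate the structure of the expression and reduce it to a symmetry/antisymmetry pairing. Define the two-index tensor
\[
A_{j\alpha} \;\eqdef\; \frac{\ro \loss}{\ro w^k}\,\frac{\ro \loss}{\ro w^\alpha}\,\frac{\ro^3 \loss}{\ro w^k\,\ro w^i\,\ro w^j}\,\frac{\ro \loss}{\ro w^i},
\]
with the Einstein convention summing over the repeated indices $k$ and $i$. The quantity we want to prove vanishes is then simply $M^j_\alpha A_{j\alpha}$.

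A standard linear-algebra fact I will invoke is that the full contraction of an antisymmetric matrix with a symmetric matrix is zero: if $B_{j\alpha} = B_{\alpha j}$ and $M^j_\alpha = -M^\alpha_j$, then $M^j_\alpha B_{j\alpha} = 0$. So it suffices to show that $A$ is symmetric in its two free indices $j$ and $\alpha$.

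The key observation is that the hypothesis can be rewritten in exactly the form needed. Inside $A_{j\alpha}$, the subexpression $\tfrac{\ro^3 \loss}{\ro w^k\,\ro w^i\,\ro w^j}\,\tfrac{\ro \loss}{\ro w^\alpha}$ appears, and the hypothesis states
\[
\frac{\ro^3 \loss}{\ro w^k\,\ro w^i\,\ro w^j}\,\frac{\ro \loss}{\ro w^\alpha} \;=\; \frac{\ro^3 \loss}{\ro w^k\,\ro w^i\,\ro w^\alpha}\,\frac{\ro \loss}{\ro w^j}
\]
for all $i,k,j,\alpha$. Substituting this swap inside the definition of $A_{j\alpha}$ shows directly that $A_{j\alpha} = A_{\alpha j}$.

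Combining the two steps, we get $M^j_\alpha A_{j\alpha} = 0$ for every antisymmetric $M$, which is exactly the claim. I do not foresee any obstacle: the proof is a one-line substitution followed by the symmetric/antisymmetric contraction argument; the only care needed is bookkeeping on the indices (and not confusing $w_i$ with $w^i$, which are identified in the Euclidean setup used throughout the paper).
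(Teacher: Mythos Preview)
Your proposal is correct and is essentially the same argument as the paper's proof: the paper explicitly splits the sum over $j,\alpha$ into $\alpha<j$ and $\alpha>j$, relabels, and uses $M^\alpha_j=-M^j_\alpha$ to pair terms into differences that vanish under the hypothesis --- which is exactly an unpacked version of your ``symmetric tensor contracted against an antisymmetric matrix is zero'' observation. Your packaging via $A_{j\alpha}$ is slightly cleaner, but the content is identical.
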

\begin{proof}
If $\frac{\ro^3 \loss}{\ro w^k \ro w^i \ro w^j} \frac{\ro \loss}{\ro w^\alpha} = \frac{\ro^3 \loss}{\ro w^k \ro w^i \ro w^\alpha} \frac{\ro \loss}{\ro w^j}$ for all $i, k, j, \alpha$, then 
\begin{align}
    &\quad\quad M_\alpha^j \frac{\ro \loss}{\ro w_k} \frac{\ro \loss}{\ro w_\alpha} \frac{\ro^3 \loss}{\ro w^k \ro w_i \ro w^j} \frac{\ro \loss}{\ro w^i} \cr
    &= \sum_{i, k, \alpha < j} M_\alpha^j \frac{\ro \loss}{\ro w_k} \frac{\ro \loss}{\ro w_\alpha} \frac{\ro^3 \loss}{\ro w^k \ro w_i \ro w^j} \frac{\ro \loss}{\ro w^i} + \sum_{i, k, \alpha > j} M_\alpha^j \frac{\ro \loss}{\ro w_k} \frac{\ro \loss}{\ro w_\alpha} \frac{\ro^3 \loss}{\ro w^k \ro w_i \ro w^j} \frac{\ro \loss}{\ro w^i} \cr 
    &= \sum_{i, k, \alpha < j} M_\alpha^j \frac{\ro \loss}{\ro w_k} \frac{\ro \loss}{\ro w_\alpha} \frac{\ro^3 \loss}{\ro w^k \ro w_i \ro w^j} \frac{\ro \loss}{\ro w^i} + \sum_{i, k, j > \alpha} M_j^\alpha \frac{\ro \loss}{\ro w_k} \frac{\ro \loss}{\ro w_j} \frac{\ro^3 \loss}{\ro w^k \ro w_i \ro w^\alpha} \frac{\ro \loss}{\ro w^i} \cr 
    &= \sum_{i, k, \alpha < j} M_\alpha^j \frac{\ro \loss}{\ro w_k} \frac{\ro \loss}{\ro w_\alpha} \frac{\ro^3 \loss}{\ro w^k \ro w_i \ro w^j} \frac{\ro \loss}{\ro w^i} + \sum_{i, k, j > \alpha} -M_\alpha^j \frac{\ro \loss}{\ro w_k} \frac{\ro \loss}{\ro w_j} \frac{\ro^3 \loss}{\ro w^k \ro w_i \ro w^\alpha} \frac{\ro \loss}{\ro w^i} \cr 
    &= \sum_{i, k, \alpha < j} M_\alpha^j \frac{\ro \loss}{\ro w_k} \frac{\ro \loss}{\ro w^i} \left(\frac{\ro \loss}{\ro w_\alpha} \frac{\ro^3 \loss}{\ro w^k \ro w_i \ro w^j} - \frac{\ro \loss}{\ro w_j} \frac{\ro^3 \loss}{\ro w^k \ro w_i \ro w^\alpha}\right) \cr 
    &= 0,
\end{align}
where the first equality uses that the diagonal of an anti-symmetric matrix is 0, the second equality swaps $\alpha$ and $j$ in the second term, the third equality uses that $M$ is anti-symmetric.
\end{proof}

\paragraph{Example (Quadratic function)}
Consider the quadratic function $\loss(\vw) = \frac{1}{2}\vw^T A \vw + \mathbf{b}^T \vw + \mathbf{c}$, where $A \in \R^{n \times n}$ is symmetric, $\mathbf{b}, \mathbf{c} \in \R^n$, and $\vw \in \R^n$. 
Two examples of quadratic functions are the ellipse $\loss_e(w_1, w_2) = \frac{1}{2}(w_1^2+\lambda^2 w_2^2)$ and the Booth function $\loss_{b}(w_1, w_2) = (w_1 + 2w_2 - 7)^2 + (2w_1 + w_2 - 5)^2$.
Since the third derivative of $\loss$ is 0, 
one teleportation guarantees optimal trajectory.

\section{Group actions and curves on minima}
\label{sec:appendix-group}

\subsection{Group actions for MLP}
\label{sec:appendix-group-action}
Consider a multi-layer neural network with elementwise activation function $\sigma$. The output of the $m^{th}$ layer is $h_m = \sigma(W_m h_{m-1})$, where $W_m \in \R^{d_{m}\times d_{m-1}}$ is the weight, $h_{m-1} \in \R^{d_{m-1}\times k}$ is the output of the $m-1^{th}$ layer, and $h_0 \in \R^{d_0 \times k}$ is the data. 

Assuming that $\sigma\pa{g_m W_{m-1} h_{m-2}}$ is invertible, for $g_m \in \mathrm{GL}_{d_{m-1}}(\R)$, the following transformation is a loss-preserving group action: 
\begin{align}
    g_m \cdot W_k = \left\{
    \begin{array}{lc}
        W_m \sigma\pa{W_{m-1} h_{m-2}} \sigma\pa{g_m W_{m-1} h_{m-2}}^{-1} & k = m \\
        g_m W_{m-1} & k = m-1 \\
        W_k & k \not\in \{m, m-1\}
    \end{array}
    \right.
    \label{eq:Wm-g-action-main}
\end{align}
Usually, the assumption does not hold \citep{zhao2022symmetries}. Hence the above transformation may not preserve loss or be a valid group action. 
Nevertheless, we observe in practice that the change in the loss value is often small after such transformations on parameters. We therefore refer to equation (\ref{eq:Wm-g-action-main}) as an approximate symmetry and adopt it in the teleportation algorithm.
Due to the possibility that $\sigma\pa{g_m W_{m-1} h_{m-2}}$ is not invertible, we use pseudoinverses in implementations.





\subsection{Curvature}
The curvature of a curve $\gamma: \R \xrightarrow[]{} \R^n$ is $\kappa(t) = \frac{\|T'(t)\|}{\|\gamma'(t)\|}$, where $T(t) = \frac{\gamma'(t)}{\|\gamma'(t)\|}$ is the unit tangent vector. 
The curvature can be written as a function of $\gamma'$ and $\gamma''$ \citep{alessio2012formulas, shelekhov2021curvatures}:
\begin{align}
\label{eq:curvature-appendix}
    \kappa(t) = \frac{\br{\|\gamma'\|^2 \|\gamma''\|^2 - (\gamma' \cdot \gamma'')^2 }^\frac{1}{2}}{\|\gamma'\|^3}.
\end{align}

\subsection{The derivative of curvature}
To compute the derivative of $\kappa(t)$, we first list the derivatives of a few commonly used terms:
\begin{align}
    \frac{d}{dt} \|\gamma'\|^2 
    &= \frac{d}{dt} ({\gamma'_1}^2 + {\gamma'_2}^2 + {\gamma'_3}^2 + ...)
    = 2\gamma'_1 \gamma''_1 + 2\gamma'_2 \gamma''_2 + 2\gamma'_3 \gamma''_3 + ...
    = 2 \gamma' \cdot \gamma'' \cr
    \frac{d}{dt} \|\gamma''\|^2 
    &= \frac{d}{dt} ({\gamma''_1}^2 + {\gamma''_2}^2 + {\gamma''_3}^2 + ...)
    = 2\gamma''_1 \gamma'''_1 + 2\gamma''_2 \gamma'''_2 + 2\gamma''_3 \gamma'''_3 + ...
    = 2 \gamma'' \cdot \gamma''' \cr
    \frac{d}{dt} (\gamma' \cdot \gamma'')
    &= \frac{d}{dt} (\gamma'_1 \gamma''_1 + \gamma'_2 \gamma''_2 + \gamma'_3 \gamma''_3...)
    = \gamma'_1 \gamma'''_1 + \gamma''_1 \gamma''_1 + ...
    = \|\gamma''\|^2 + \gamma' \cdot \gamma'''
\end{align}

The derivatives of the numerator and denominator of $\kappa$ are:
\begin{align}
    \frac{d}{dt} \br{\|\gamma'\|^2 \|\gamma''\|^2 - (\gamma' \cdot \gamma'')^2 }^\frac{1}{2}
    &= \frac{1}{2} \br{\|\gamma'\|^2 \|\gamma''\|^2 - (\gamma' \cdot \gamma'')^2 }^{-\frac{1}{2}}
    \frac{d}{dt} \br{\|\gamma'\|^2 \|\gamma''\|^2 - (\gamma' \cdot \gamma'')^2 } \cr 
    &= \frac{1}{2} \br{\|\gamma'\|^2 \|\gamma''\|^2 - (\gamma' \cdot \gamma'')^2 }^{-\frac{1}{2}} \cr
    &\quad\quad\quad \br{\|\gamma'\|^2 \frac{d}{dt} \|\gamma''\|^2 + \|\gamma''\|^2 \frac{d}{dt} \|\gamma'\|^2   
    - 2 (\gamma' \cdot \gamma'') \frac{d}{dt}(\gamma' \cdot \gamma'') } \cr 
    &= \frac{1}{2} \br{\|\gamma'\|^2 \|\gamma''\|^2 - (\gamma' \cdot \gamma'')^2 }^{-\frac{1}{2}} \cr
    &\quad\quad\quad \br{2\|\gamma'\|^2 (\gamma'' \cdot \gamma''') + 2\|\gamma''\|^2 (\gamma' \cdot \gamma'')
    - 2 (\gamma' \cdot \gamma'') (\|\gamma''\|^2 + \gamma' \cdot \gamma''') } \cr 
    &= \br{\|\gamma'\|^2 \|\gamma''\|^2 - (\gamma' \cdot \gamma'')^2 }^{-\frac{1}{2}}
    \br{\|\gamma'\|^2 (\gamma'' \cdot \gamma''')
    - (\gamma' \cdot \gamma'') (\gamma' \cdot \gamma''') },
\end{align}
and
\begin{align}
    \frac{d}{dt} \|\gamma'\|^3 
    = \frac{d}{dt} (\|\gamma'\|^2)^\frac{3}{2} 
    = \frac{3}{2} (\|\gamma'\|^2)^\frac{1}{2} \frac{d}{dt} \|\gamma'\|^2
    = \frac{3}{2} (\|\gamma'\|^2)^\frac{1}{2} (2 \gamma' \cdot \gamma'')
    = 3 \|\gamma'\| (\gamma' \cdot \gamma'').
\end{align}

Using the derivatives above, the derivative of $\kappa$ is
\begin{align}
\label{eq:curvature-derivative}
    \kappa'(t) &= \frac{
    \left[\frac{d}{dt} \br{\|\gamma'\|^2 \|\gamma''\|^2 - (\gamma' \cdot \gamma'')^2 }^\frac{1}{2}\right] \|\gamma'\|^3 - 
     \br{\|\gamma'\|^2 \|\gamma''\|^2 - (\gamma' \cdot \gamma'')^2 }^\frac{1}{2} \left[\frac{d}{dt} \|\gamma'\|^3\right]
    }{\|\gamma'\|^6} \cr 
    &= \frac{\splitfrac{\br{\|\gamma'\|^2 \|\gamma''\|^2 - (\gamma' \cdot \gamma'')^2 }^{-\frac{1}{2}}
    \br{\|\gamma'\|^2 (\gamma'' \cdot \gamma''')
    - (\gamma' \cdot \gamma'') (\gamma' \cdot \gamma''') } \|\gamma'\|^3}
    {- 
    \br{\|\gamma'\|^2 \|\gamma''\|^2 - (\gamma' \cdot \gamma'')^2 }^\frac{1}{2} 3 \|\gamma'\| (\gamma' \cdot \gamma'')}
    }{\|\gamma'\|^6} \cr 
    &= \frac{\splitfrac{\br{\|\gamma'\|^2 \|\gamma''\|^2 - (\gamma' \cdot \gamma'')^2 }^{-\frac{1}{2}}
    \br{\|\gamma'\|^2 (\gamma'' \cdot \gamma''')
    - (\gamma' \cdot \gamma'') (\gamma' \cdot \gamma''') } \|\gamma'\|^2}
    {- 
    \br{\|\gamma'\|^2 \|\gamma''\|^2 - (\gamma' \cdot \gamma'')^2 }^\frac{1}{2} 3 (\gamma' \cdot \gamma'')}
    }{\|\gamma'\|^5}. \cr 
\end{align}
  
\subsection{The derivatives of curves on minima}
Consider the curve $\gamma_M: \R \times \R^n \xrightarrow[]{} \R^n$ where $M \in \Lie(G)$ and 
\begin{align}
    \gamma_M(t, \vw) = \exp{(tM)} \cdot \vw.
\end{align}
In this section, we derive $\gamma'$, $\gamma''$, and $\gamma'''$, which are needed to compute the curvature $\kappa(t)$ and its derivative $\kappa'(t)$. We are interested in $\kappa$ and $\kappa'$ at $\vw$, or equivalently, at $t=0$.
To find the derivatives of $\gamma$ at $t=0$, we write the group action in the following form:
\begin{align}
\label{eq:taylor-general}
    \gamma(t) = \sum_{n=0}^\infty \frac{f(n)}{n!}t^n.
\end{align}
By the uniqueness of Taylor polynomial, the derivatives are $\gamma^{(n)}(0) = f(n)$. In the rest of this subsection, we expand the group action to find $f(n)$.

Consider two consecutive layers $U \sigma(VX)$ in a neural network, where $U \in \R^{m \times h}, V \in \R^{h \times n}$ are weights, $X \in \R^{h \times k}$ is the output from the previous layer, and $\sigma$ is an elementwise activation function. 
Choosing $G=GL_h(\R)$, one group action that leaves the output of these two layers unchanged is:
\begin{align}
    g \cdot (U, V, X) = (g \cdot U, g \cdot V, g \cdot X) = (Ug^{-1}, \sigma^{-1} (g \sigma (VX)) X^{-1}, X).
\end{align}

Let  
\begin{align}
    g = \exp(tM) = \sum_{k=0}^\infty \frac{1}{k!} (tM)^k,
\end{align}
where $M\in \Lie(G)$ is in the Lie algebra of $G$.
The action of $g$ yields
\begin{align}
    g \cdot (U,V, X) 
    &= (U\exp(-tM), \sigma^{-1} (\exp(tM) \sigma (VX)) X^{-1}, X).
\label{eq:linearization1}
\end{align}

Next, we expand $\gamma(t) = g \cdot (U, V)$. The Taylor expansion for $g \cdot U$ is
\begin{align}
\label{eq:taylor-U}
    U\exp(-tM) 
    &= U \sum_{k=0}^\infty \frac{1}{k!} (-tM)^k \cr 
    &= U - tUM + \frac{t^2}{2!} UM^2 - \frac{t^3}{3!} UM^3 + O(t^4).
\end{align}

The Taylor expansion for $g \cdot V$ is
\begin{align}
    & \sigma^{-1} (\exp(tM) \sigma (VX)) X^{-1} \cr
    =& \sigma^{-1} \left( \left(\sum_{k=0}^\infty \frac{1}{k!} (tM)^k\right) \sigma (VX) \right) X^{-1} \cr 
    =& \sigma^{-1} \left( \sigma (VX) + \sum_{k=1}^\infty \frac{1}{k!} (tM)^k \sigma (VX) \right) X^{-1} \cr 
    =& \left[ \sigma^{-1}(\sigma(VX)) + 
    \sum_{j=1}^\infty 
    \left(\sum_{k=1}^\infty \frac{1}{k!} (tM)^k \sigma (VX) \right)^{\odot j}
    \odot \left. \frac{\partial^j \sigma^{-1}(A)}{\partial A^j} \right|_{A=\sigma(VX)} 
    \right] X^{-1} \cr 
    =& V + \left[
    \sum_{j=1}^\infty 
    \left(\sum_{k=1}^\infty \frac{1}{k!} (tM)^k \sigma (VX) \right)^{\odot j}
    \odot \left. \frac{\partial^j \sigma^{-1}(A)}{\partial A^j} \right|_{A=\sigma(VX)} 
    \right] X^{-1},
\end{align}
where $\odot$ denotes element-wise product: $(A \odot B)_{mn} = A_{mn} B_{mn}$, and the superscript $^{\odot}$ denotes elementwise power: $(A^{\odot j})_{mn} = (A_{mn})^j$. The Taylor expansion is of each element individually, because $\sigma$ is element-wise.

Since our goal is to find the first 3 derivatives of $\gamma$, we are only interested in the terms up to $t^3$. Letting
\begin{align}
    \sum_{k=1}^\infty \frac{1}{k!} (tM)^k 
    = tM + t^2\frac{M^2}{2} + t^3\frac{M^3}{6} + O(t^4)
\end{align}
and considering only the $j=1,2,3$ terms, we have 
\begin{align}
\label{eq:taylor-V}
    &\sigma^{-1} (\exp(tM) \sigma (VX)) X^{-1} \cr 
    =& V + \left[
    \sum_{j=1}^\infty 
    \left((tM + t^2\frac{M^2}{2} + t^3\frac{M^3}{6}) \sigma (VX) \right)^{\odot j}
    \odot \left. \frac{\partial^j \sigma^{-1}(A)}{\partial A^j} \right|_{A=\sigma(VX)} 
    \right] X^{-1}  + O(t^4) \cr
    =& V + \left[
    \left((tM + t^2\frac{M^2}{2} + t^3\frac{M^3}{6}) \sigma (VX) \right)
    \odot \left. \frac{\partial \sigma^{-1}(A)}{\partial A} \right|_{A=\sigma(VX)} \right.\cr
    &\quad\quad\quad + 
    \left((tM + t^2\frac{M^2}{2} + t^3\frac{M^3}{6}) \sigma (VX) \right)^{\odot 2}
    \odot \left. \frac{\partial^2 \sigma^{-1}(A)}{\partial A^2} \right|_{A=\sigma(VX)} \cr
    &\quad\quad\quad + \left. 
    \left((tM + t^2\frac{M^2}{2} + t^3\frac{M^3}{6}) \sigma (VX) \right)^{\odot 3}
    \odot \left. \frac{\partial^3 \sigma^{-1}(A)}{\partial A^3} \right|_{A=\sigma(VX)} 
    \right] X^{-1}  + O(t^4) \cr
    =& V + t \left( \left( M \sigma (VX) \right)
    \odot \frac{1}{\sigma'(VX)} \right) X^{-1} \cr
    &\quad\quad + \frac{t^2}{2} \left( 
    \left( M^2 \sigma (VX) \right) \odot \frac{1}{\sigma'(VX)} 
    - 2 (M \sigma (VX))^{\odot 2}  \odot \frac{\sigma''(VX)}{\sigma'(VX)^3}
    \right) X^{-1}\cr
    &\quad\quad + \frac{t^3}{6} \left( 
    \left( M^3 \sigma (VX) \right) \odot \frac{1}{\sigma'(VX)} 
    - 6 (M \sigma (VX)) \odot (M^2 \sigma (VX)) \odot \frac{\sigma''(VX)}{\sigma'(VX)^3}
    \right.\cr 
    &\quad\quad\quad\quad\quad\quad \left. 
    + 6 (M \sigma (VX))^{\odot 3}  \odot \left. \frac{\partial^3 \sigma^{-1}(A)}{\partial A^3} \right|_{A=\sigma(VX)} 
    \right) X^{-1}\cr
    &\quad\quad + O(t^4).
\end{align}
Matching terms in \eqref{eq:taylor-U} and \eqref{eq:taylor-V} with \eqref{eq:taylor-general}, we have the expressions for $\gamma'$, $\gamma''$, and $\gamma'''$. This allows us to compute the curvature and its derivative using \eqref{eq:curvature-appendix} and \eqref{eq:curvature-derivative}.

\section{Sharpness, Curvatures, and Their Relation to Generalization}
\label{appendix:curvature-generalization}
\subsection{Alternative definitions of sharpness}
A common definition of flat minimum is based on the number of eigenvalues of the Hessian which are small. Minimizers with a large number of large eigenvalues tend to have worse generalization ability \citep{keskar2017large}. Let $\lambda_i(H)(\vw)$ be the $i^{th}$ largest eigenvalue of the Hessian of the loss function evaluated at $\vw$. We can quantify the notion of sharpness by the number of eigenvalues larger than a threshold $\eps \in \R^{>0}$: 
\begin{align}
    \phi_1(\vw, \eps) = \left| \{\lambda_i(H)(\vw): \lambda_i > \eps\} \right|.
\end{align}
A related sharpness metric uses the logarithm of the product of the $k$ largest eigenvalues \citep{wu2017towards},
\begin{align}
    \phi_2(\vw, k) = \sum_{i=1}^k \log \lambda_i(H)(\vw).
\end{align}
Both metrics require computing the eigenvalues of the Hessian. As a result, optimizing on these metrics during teleportation is prohibitively expensive. 
Hence, in this paper we use the average change in loss averaged over random directions ($\phi$) as objective in generalization experiments. 

\subsection{More intuition on curvatures and generalization}
\label{sec:appendix-group-intuition}

\subsubsection{Example: curvature affects average displacement of minima}

Consider an optimization problem with two variables $w_1, w_2 \in \R$. Assume that the minimum is a curve $\gamma: \R \to \R^2$ in the two-dimensional parameter space. For a point $\vw_0$ on $\gamma$, we estimate its generalization ability by computing the expected distance between $\vw_0$ and the new minimum obtained by shifting $\gamma$. 

We consider the following two curves as examples:
\begin{align}
    \gamma_1:& \R \to \R^2, t \mapsto (t, k_1 t^2) \cr
    \gamma_2:& [0, 2\pi] \to \R^2, \theta \mapsto (k_2\cos(\theta), k_2\sin(\theta) + k_2),
\end{align}
with $k_1, k_2 \in \R^{\neq 0}$. The curve $\gamma_1$ is a parabola with curvature $\kappa_1 = 2k_1$ at $\vw_0 = (0, 0)$. The curve $\gamma_2$ is a circle, with curvature $\kappa_2 = \frac{1}{k_2}$ at $\vw_0$.
Note that $\gamma_1$ is the only polynomial approximation with integer power ($\gamma(t) = (t, k|t|^n), n \in \mathbb{Z}^+$) where the curvature at $\vw_0$ depends on $k$. When $n<1$, the value of $\vw_0$ is undefined. When $n=1$, the first derivative at $\vw_0$ is undefined. When $n>2$, $\kappa(\vw_0) = 0$. 

Assume that a distribution shift in data causes $\gamma$ to shift by a distance $r$, and that the direction of the shift is chosen uniformly at random over all possible directions. 
Viewing from the perspective of the curve, this is equivalent to shifting $\vw_0$ by distance $r$. 

The distance between a point $\vw$ and a curve $\gamma$ is
\begin{align}
    dist(\vw, \gamma) = \min_{\vw' \in \gamma_2} \|\vw' - \vw\|_2.
\end{align}

Let $S_r$ be the circle centered at the origin with radius $r$. The expected distance between the old solution $\vw_0$ and shifted curve is
\begin{align}
\label{eq:curvature-displacement-example}
    \mathbb{E}_{\vw \in S_r}[dist(\vw, \gamma)]
    = \frac{\int_{S_r} dist(\vw, \gamma) ds}{\int_{S_r} ds}
    = \frac{\int_0^{2\pi} dist((r\cos{\theta}, r\sin{\theta}), \gamma) r d\theta}{\int_0^{2\pi} r d\theta}.
\end{align}

In the limit of zero curvature, $\gamma$ is a straight line $\gamma(t) = (t, 0)$. In this case, the expected distance is 
\begin{align}
    \mathbb{E}_{\vw \in S_r}[dist(\vw, \gamma)]
    = \frac{\int_0^{2\pi} |r\sin{\theta}| r d\theta}{2 \pi r}
    = \frac{2r}{\pi}
    \approx 0.637 r.
\end{align}

Figure \ref{fig:curvature-displacement}(b)(c) shows that the expected distance's dependence on $\kappa$. Using both curves $\gamma_1$ and $\gamma_2$, the generalization ability of $\vw_0$ depends on the curvature at $\vw_0$. However, the type of dependence is affected by the type of curve used. In other words, the curvatures at points around $\vw_0$ affect how the curvature at $\vw_0$ affects generalization. Therefore, from these results alone, one cannot deduce whether minima with sharper curvatures generalize better or worse. 
To find a more definitive relationship between curvature and generalization, further investigation on the type of curves on the minimum is required.


We emphasize that this example only serves as an intuition for connecting curvature to generalization.
As a future direction, it would be interesting to consider different families of parametric curves, higher dimensional parameter spaces, and deforming in addition to shifting the minima.

\begin{figure}[h!]
\begin{center}
\ \ \ (a) \hfill (b) \hfill (c) \hfill ~ \\
\includegraphics[width=0.3\columnwidth]{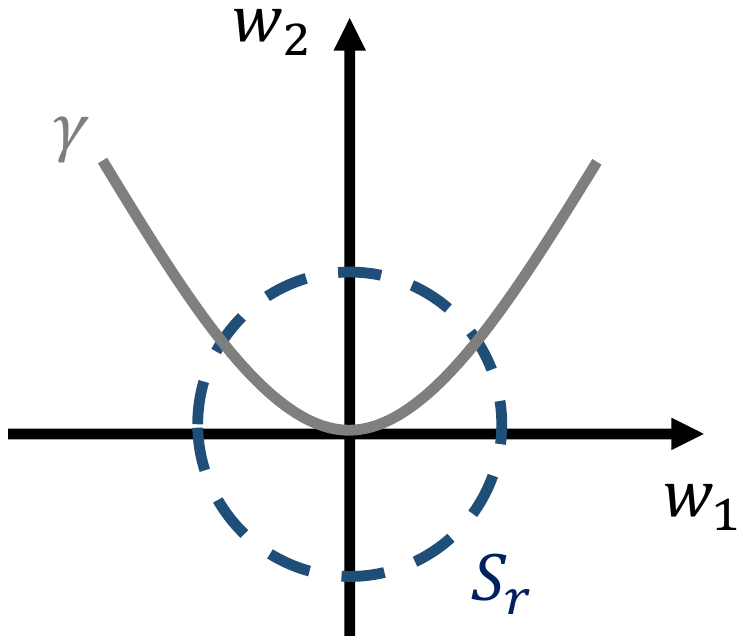}
\includegraphics[width=0.31\columnwidth]{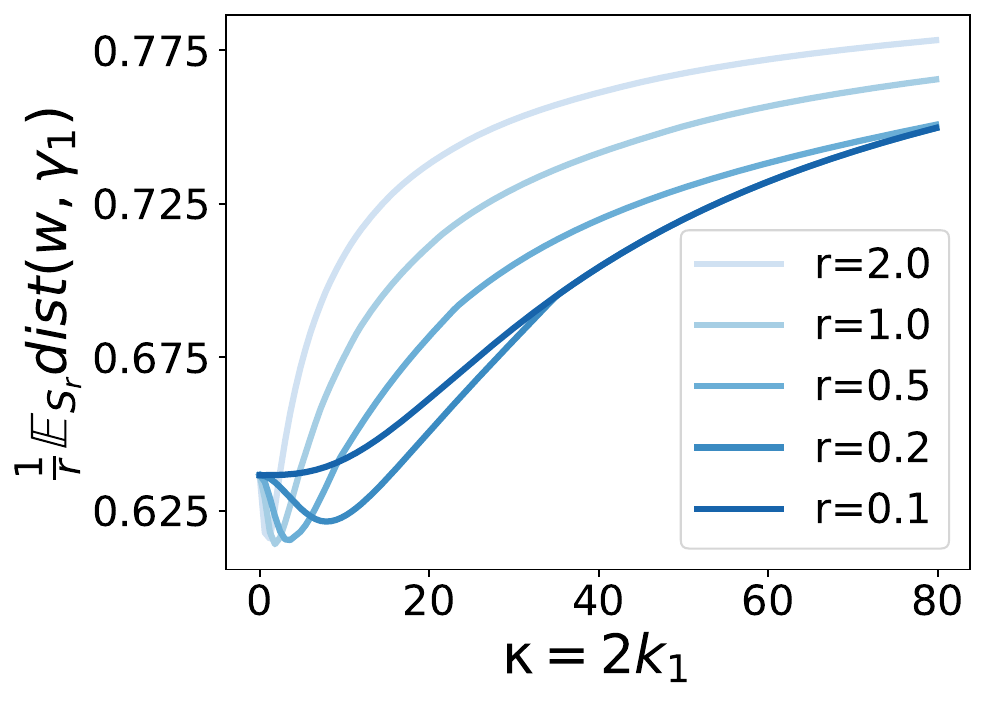}
\includegraphics[width=0.3\columnwidth]{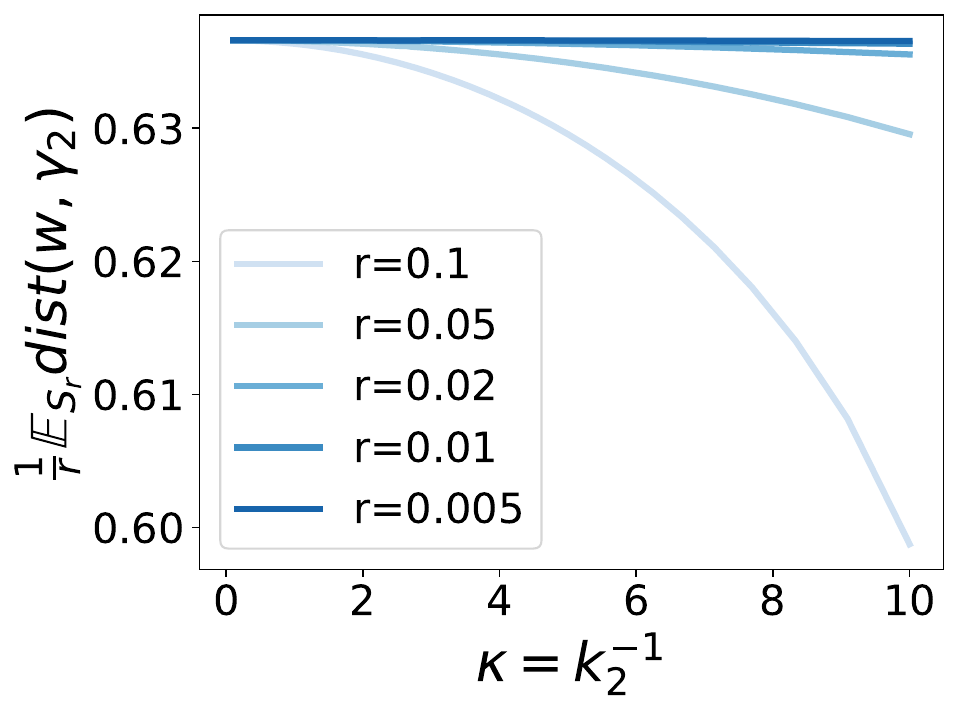}
\caption{(a) Illustration of the parameter space, the minimum ($\gamma$), and all shifts with distance $r$ ($S_r$). (b) Expected distance between $\vw_0$ and the new minimum as a function of $\kappa$, for quadratic approximation $\gamma_1$. (c) Expected distance between $\vw_0$ and the new minimum as a function of $\kappa$, for constant curvature approximation $\gamma_2$. The expected distance is scaled by $r$ so that the curves can be plotted together.} 
\label{fig:curvature-displacement}
\end{center}
\end{figure}

\subsubsection{Higher dimensions}
Figure \ref{fig:curvature-3D} visualizes a curve obtained from a 2D minimum. However, it is not immediately clear what curves look like on a higher-dimensional minimum. A possible way to extend previous analysis is to consider sectional curvatures. 

\begin{figure}[h!]
\begin{center}
\includegraphics[width=0.6\columnwidth]{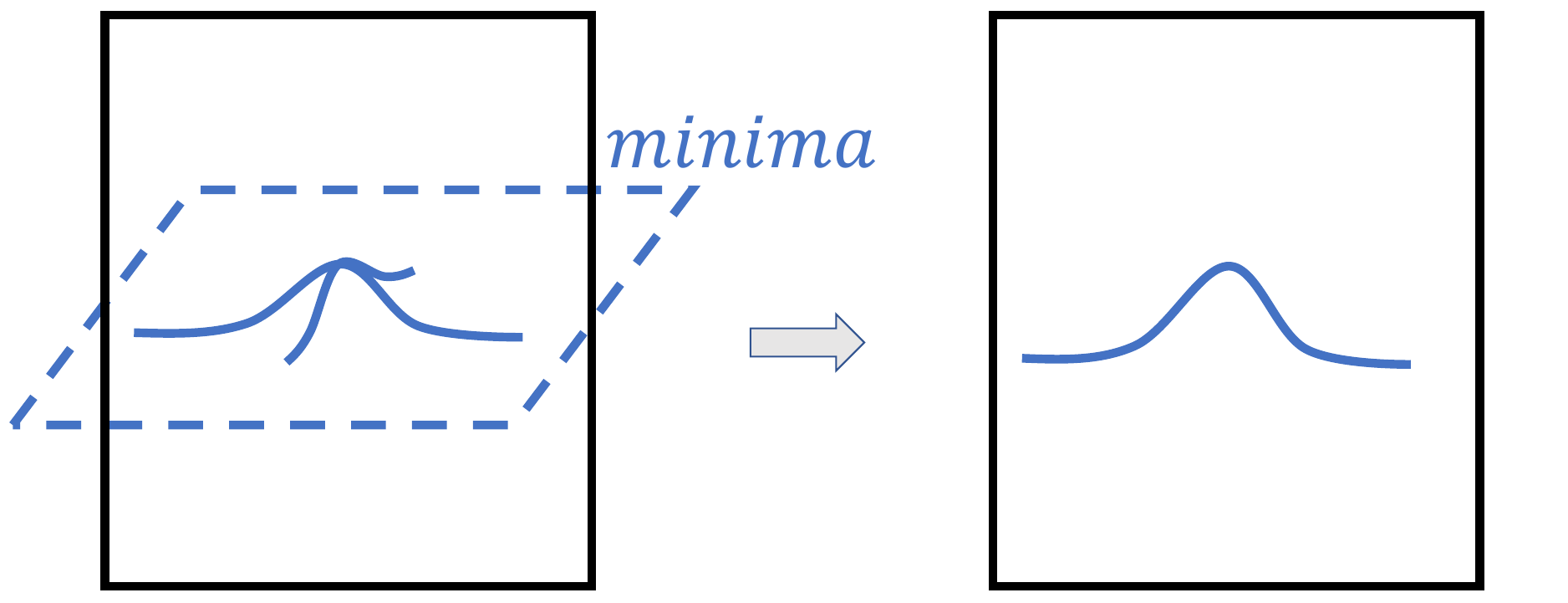}
\caption{Left: a 2D minima in a 3D parameter space. Right: a 2D subspace of the parameter space and a curve on the minima (the intersection of the minima and the subspace). }
\label{fig:curvature-3D}
\end{center}
\end{figure}


\subsection{Computing correlation to generalization}
\label{sec:appendix-correlation-experiment}
We generate the 100 different models used in Section 4.3 by training randomly initialized models.
For all three datasets (MNIST, FashionMNIST, and CIFAR-10), we train on 50,000 samples and test on a different set of 10,000 samples. 
The labels for classification tasks belongs to 1 of 10 classes.

For a batch of flattened input data $X \in \R^{d \times 20}$ and labels $Y \in \R^{20}$, the loss function is
$\loss(W_1,W_2,W_3, X, Y) = \text{CrossEntropy} \pa{W_3 \sigma(W_2 \sigma(W_1 X)), Y}$, 
where $W_3 \in \R^{10 \times h_2}$, $W_2 \in \R^{h_2 \times h_1}$, $W_1 \in \R^{h_1 \times d}$ are the weight matrices, and $\sigma$ is the LeakyReLU activation with slope coefficient 0.1. 
For MNIST and Fashion-MNIST, $d = 28^2$, $h_1 = 16$, and $h_2 = 10$. For CIFAR-10, $d=32^3 \times 3$, $h_1 = 128$, and $h_2 = 32$.
The learning rate for stochastic gradient descent is $0.01$ for MNIST and Fashion-MNIST, and $0.02$ for CIFAR-10. We train each model using mini-batches of size 20 for 40 epochs. 

When computing the sharpness $\phi$, we choose the displacement list T that gives the highest correlation. The displacements used in this paper are $T = {0.001, 0.011, 0.021, ..., 0.191}$ for MNIST, and $T = {0.001, 0.011, 0.021, ..., 0.191}$ for Fashion-MNIST and CIFAR-10. We evaluate the change in loss over $|D| = 200$ random directions. For curvature $\psi$, we average over $k=1$ curves generated by random Lie algebras (invertible matrices in this case).

Figure \ref{fig:correlation-sharpness-3layer-leakyrelu} and \ref{fig:correlation-curvature-3layer-leakyrelu} visualizes the correlation result in Table \ref{table:correlation}. Each point represents one model. 

\begin{figure}[h!]
\centering
\ \ \ (a) \hfill (b) \hfill (c) \hfill ~ \\
\includegraphics[width=0.31\columnwidth]{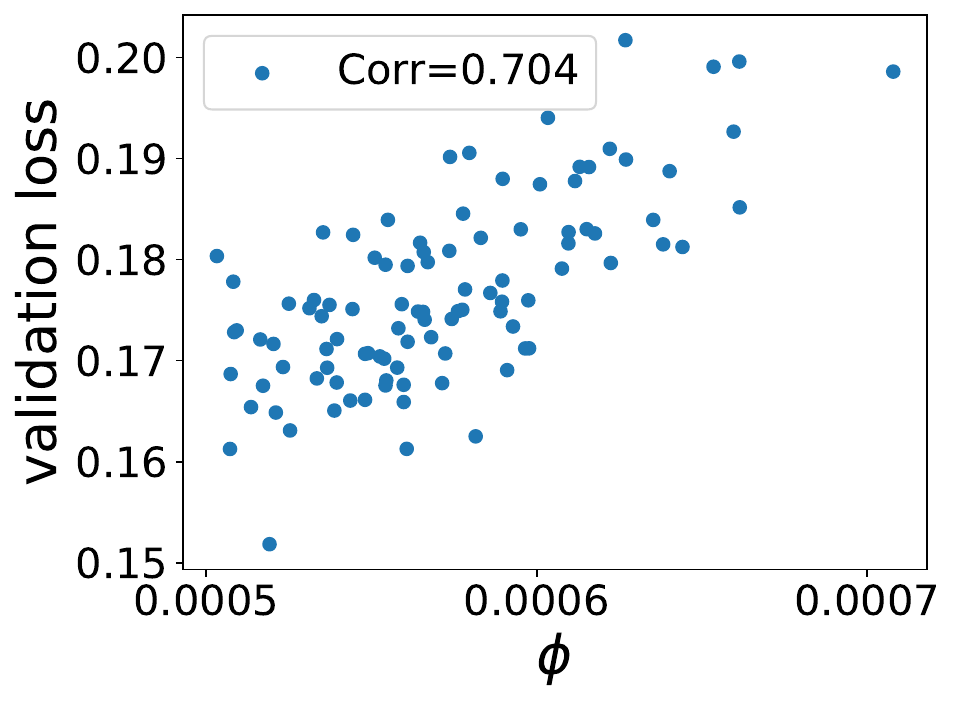}
\includegraphics[width=0.334\columnwidth]{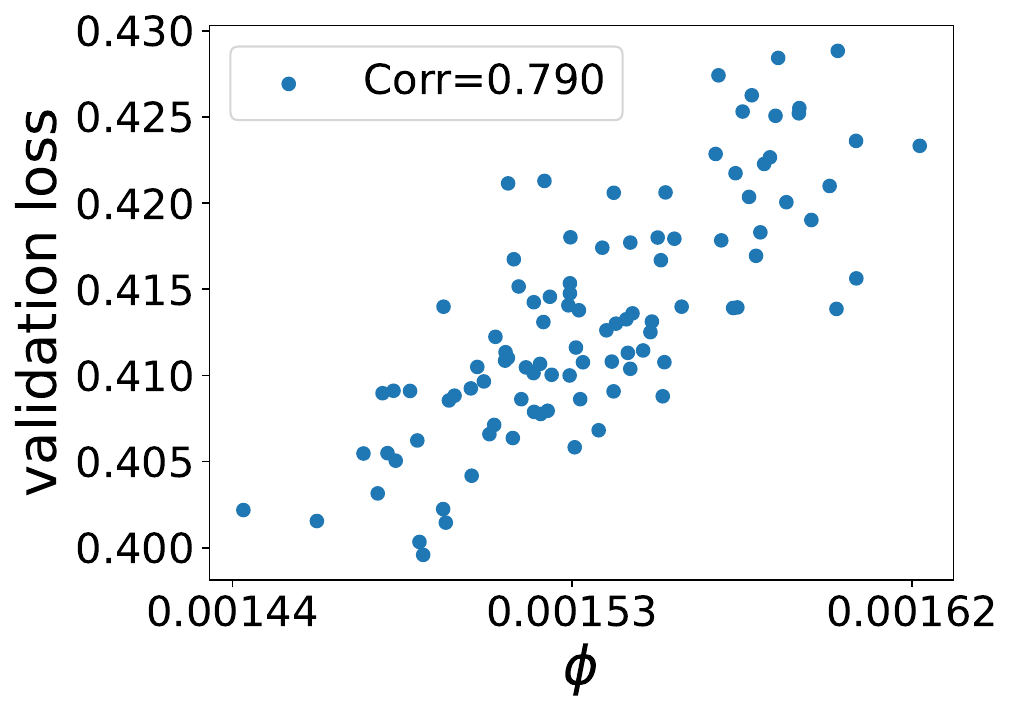}
\includegraphics[width=0.313\columnwidth]{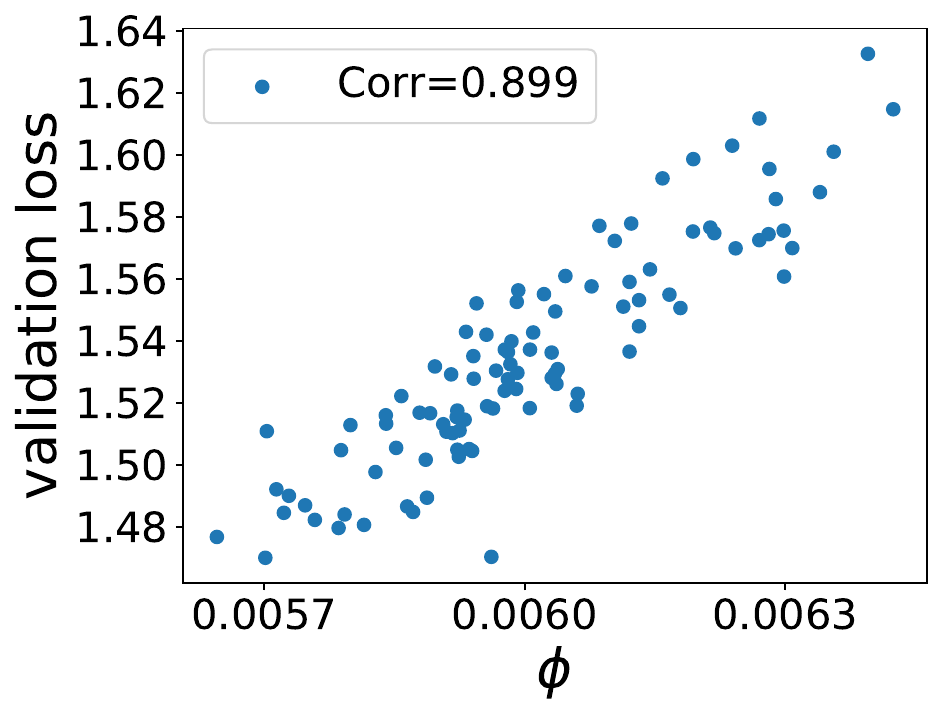}
\caption{Correlation between sharpness and validation loss on MNIST (left), Fashion-MNIST (middle), and CIFAR-10 (right). Sharpness and generalization are strongly correlated.
}
\label{fig:correlation-sharpness-3layer-leakyrelu}
\end{figure}

\begin{figure}[h!]
\centering
\ \ \ (a) \hfill (b) \hfill (c) \hfill ~ \\
\includegraphics[width=0.315\columnwidth]{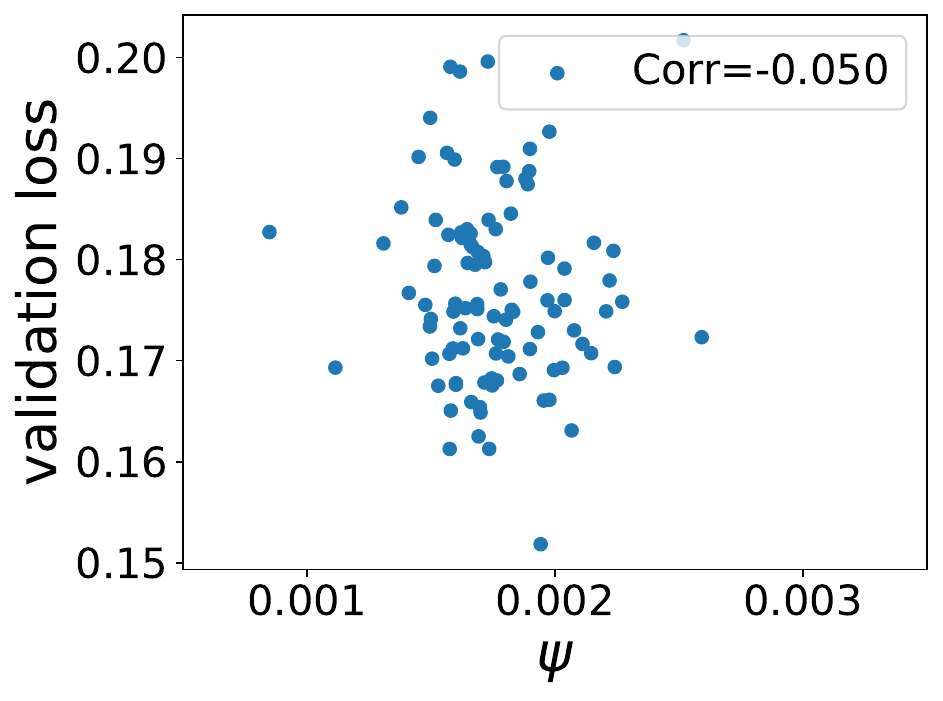}
\includegraphics[width=0.325\columnwidth]{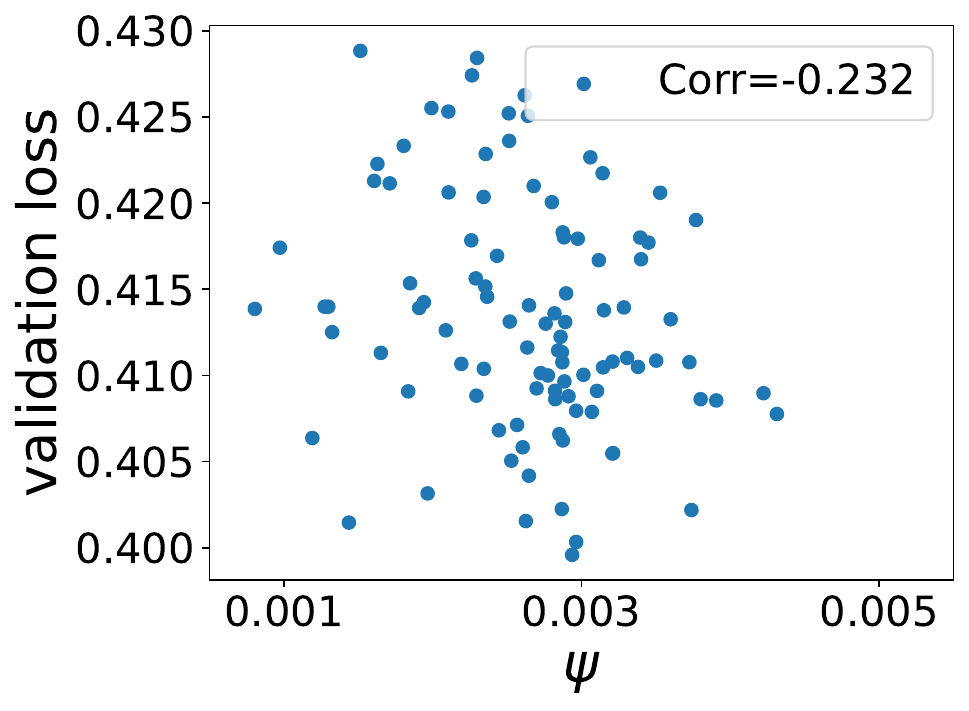}
\includegraphics[width=0.343\columnwidth]{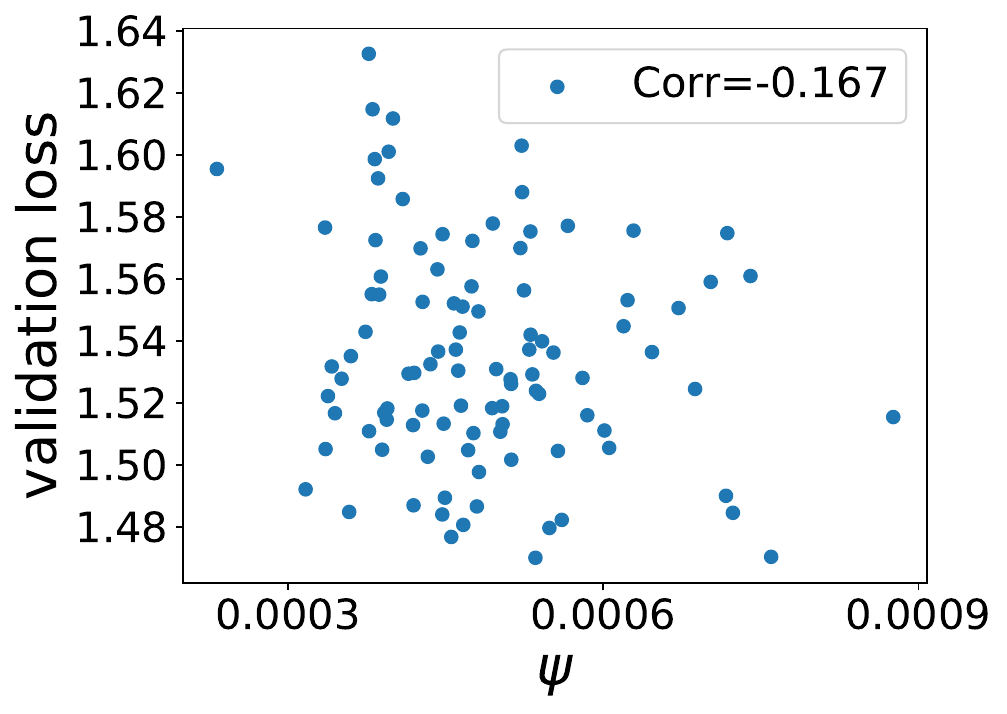}
\caption{Correlation between curvature and validation loss on MNIST (left), Fashion-MNIST (middle), and CIFAR-10 (right). There is a weak negative correlation in all three datasets.
}
\label{fig:correlation-curvature-3layer-leakyrelu}
\end{figure}

\subsection{Additional details for generalization experiments} 
\label{sec:appendix-generalization-experiment}

Algorithm \ref{alg:teleport-mlp} shows an example on how to perform a teleportation with an MLP.

\begin{algorithm}[H]
   \caption{Changing curvature using teleportation}
   \label{alg:teleport-mlp}
    \begin{algorithmic}
       \STATE {\bfseries Input:} loss function $L(w)$, parameters before teleportation $w_0$, teleportation learning rate $\eta_{teleport}$, number of teleportation steps $n_{teleport}$.
       \STATE {\bfseries Output:} parameters after teleportation $w_{n_{teleport}}$.

       \FOR{$t = 0$ {\bfseries to} $n_{teleport} - 1$}
       \STATE initialize $T = 0_{h \times h}$
       \STATE set $w_{t}' = (I_{h \times h} + T) \cdot w_t$
       \STATE compute $grad = \frac{d | \psi(w_{t}') |}{dT} $
       \STATE set $T_t = \eta_{teleport} \times grad$
       \STATE set $w_{t+1} = (I + T_t) \cdot w_t$
       \ENDFOR

       \STATE {\bfseries Return} $w_{n_{teleport}}$
    \end{algorithmic}
\end{algorithm}

On CIFAR-10, we run SGD using the same three-layer architecture as in Section \ref{sec:appendix-correlation-experiment}, but with a smaller hidden size $h_1 = 32$ and $h_2 = 10$. At epoch 20 which is close to convergence, we teleport using 5 batches of data, each of size 2000. 
During each teleportation for $\phi$, we perform 10 gradient ascent (or descent) steps on the group element.
During each teleportation for $\psi$, we perform 1 gradient ascent (or descent) step on the group element. 
The learning rate for the optimization on group elements is $5 \times 10^{-2}$. 

To investigate how teleportation affects generalization for other optimizers, we repeat the same experiment but replace SGD with AdaGrad. Figure \ref{fig:teleport-generalization-cifar10-adagrad} shows the training curve of AdaGrad on CIFAR-10, averaged across 5 runs. Similar to SGD, changing curvature via teleportation affects the validation loss, while changing sharpness has negligible effects. Teleporting to points with larger curvatures helps find minima with slightly lower validation loss. Teleporting to points with smaller curvatures increases the gap between training and validation loss. 

\begin{figure}[h!]
\centering
\includegraphics[width=0.4\columnwidth]{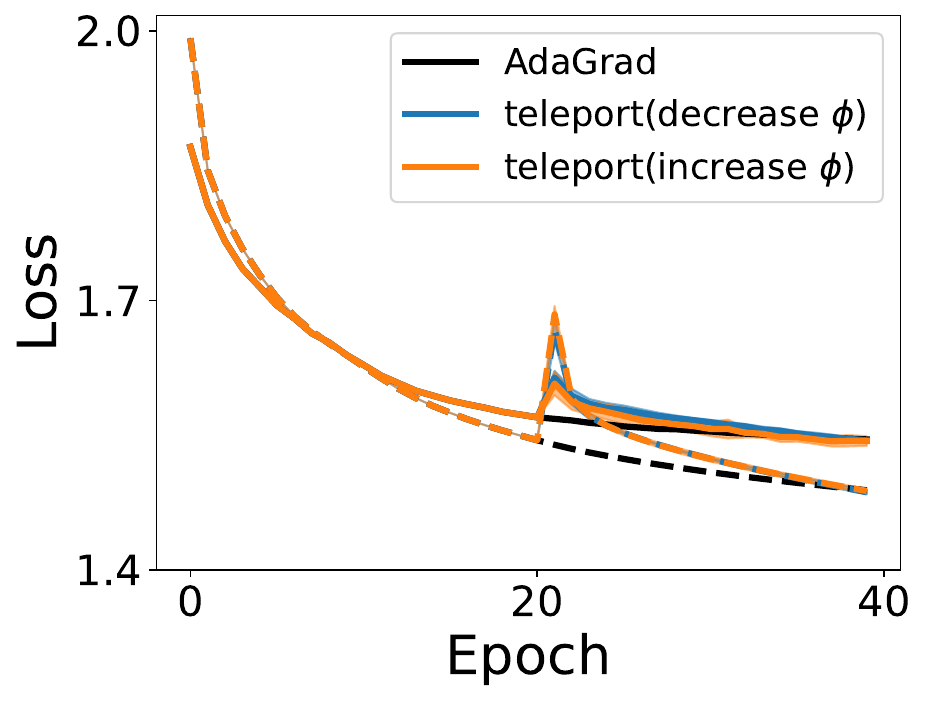}
\hspace{20pt}
\includegraphics[width=0.4\columnwidth]{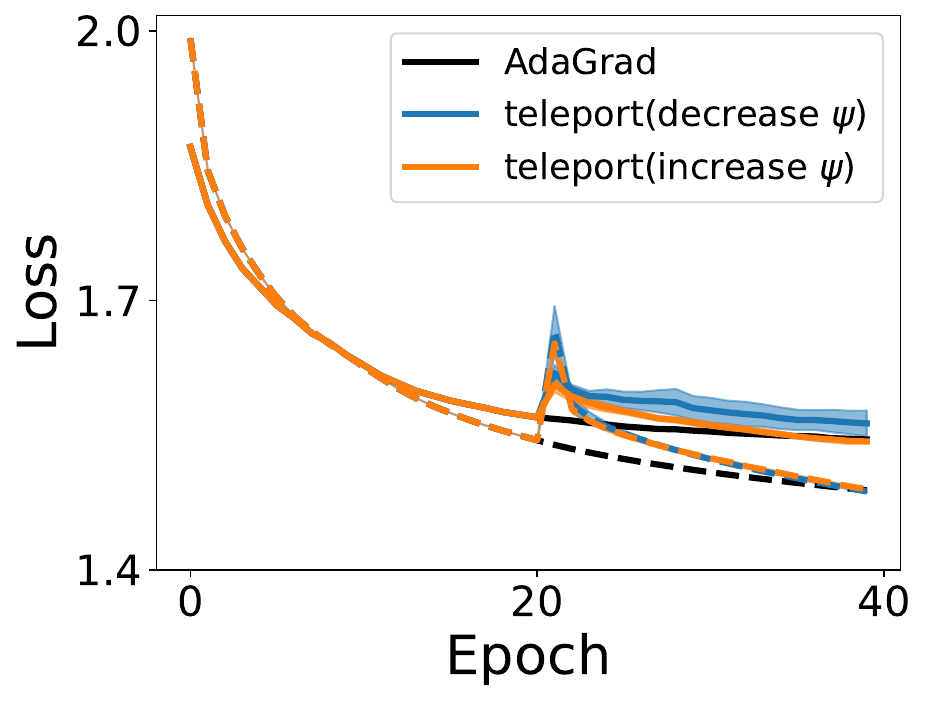}
\caption{Changing sharpness (left) or curvature (right) using teleportation and its effect on generalizability of AdaGrad solutions on CIFAR-10.
Solid line represents average test loss, and dashed line represent average training loss. }
\label{fig:teleport-generalization-cifar10-adagrad}
\end{figure}

\section{Integrating teleportation with other gradient-based algorithms}
\label{appendix:other-algorithms}

\out{
\subsection{Integrating teleportation with other gradient-based algorithms}

\paragraph{Polyak stepsize.} 
In previous comparison of gradient descent with and without teleportation, a fixed learning rate is used. 
To answer the question of whether teleportation is equivalent to just increasing stepsize, we use Polyak stepsize (Algorithm \ref{alg:teleport-polyak}). Figure \ref{fig:teleport-polyak} shows that when using Polyak stepsize, teleportation does not improve the convergence rate. 

\begin{figure}[h]
\centering
\ \ \ a\hfill b \hfill c\hfill d \hfill ~ \\
\includegraphics[width=0.24\textwidth]{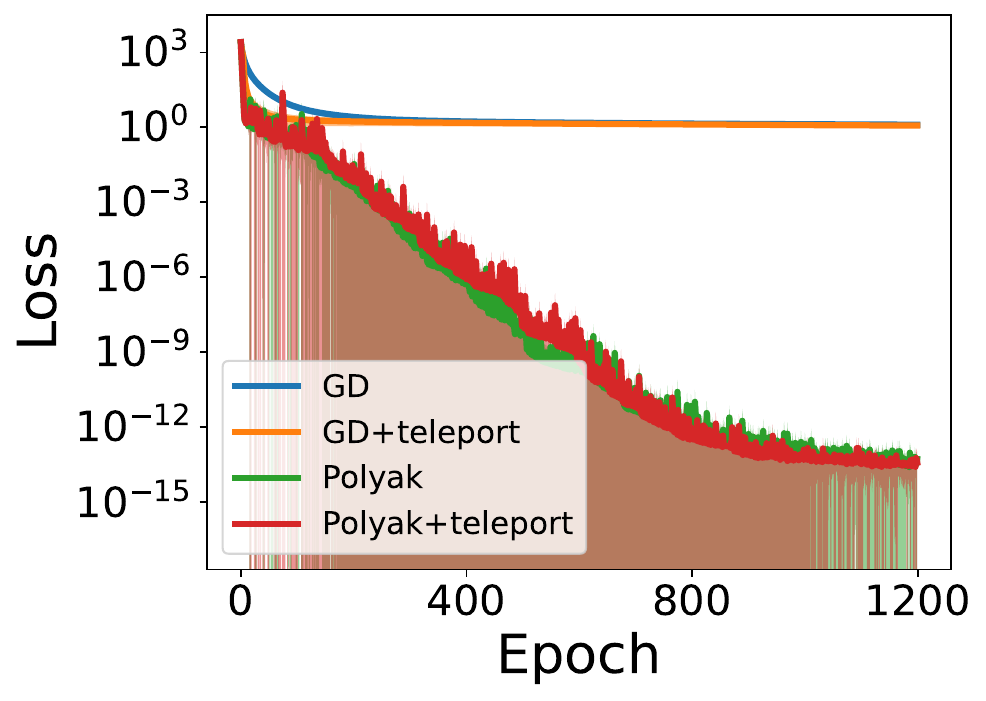}
\includegraphics[width=0.24\textwidth]{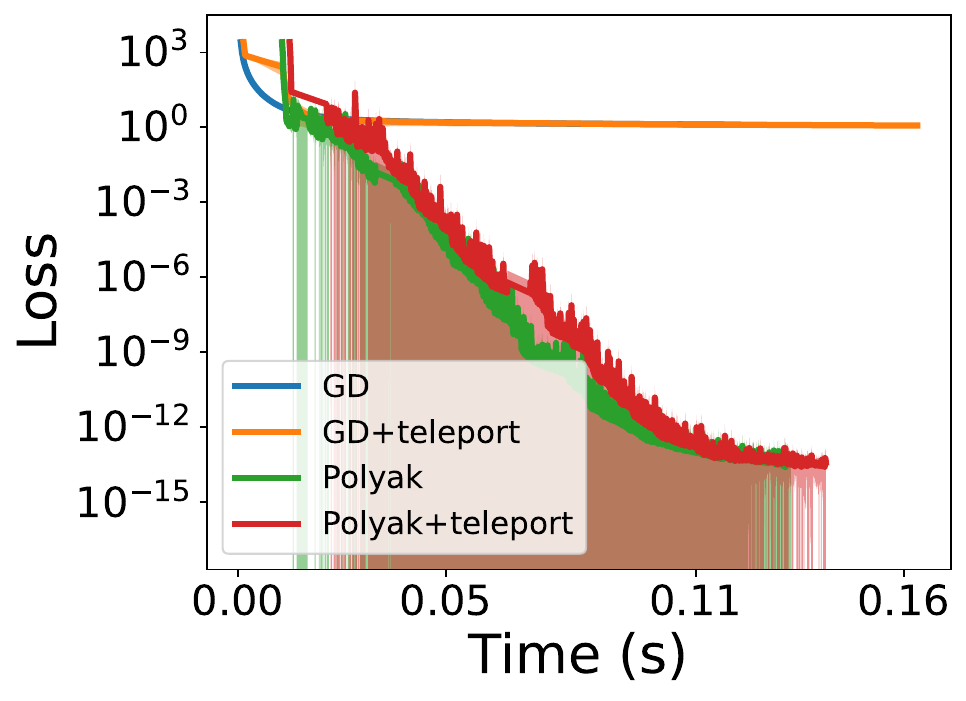}
\includegraphics[width=0.24\textwidth]{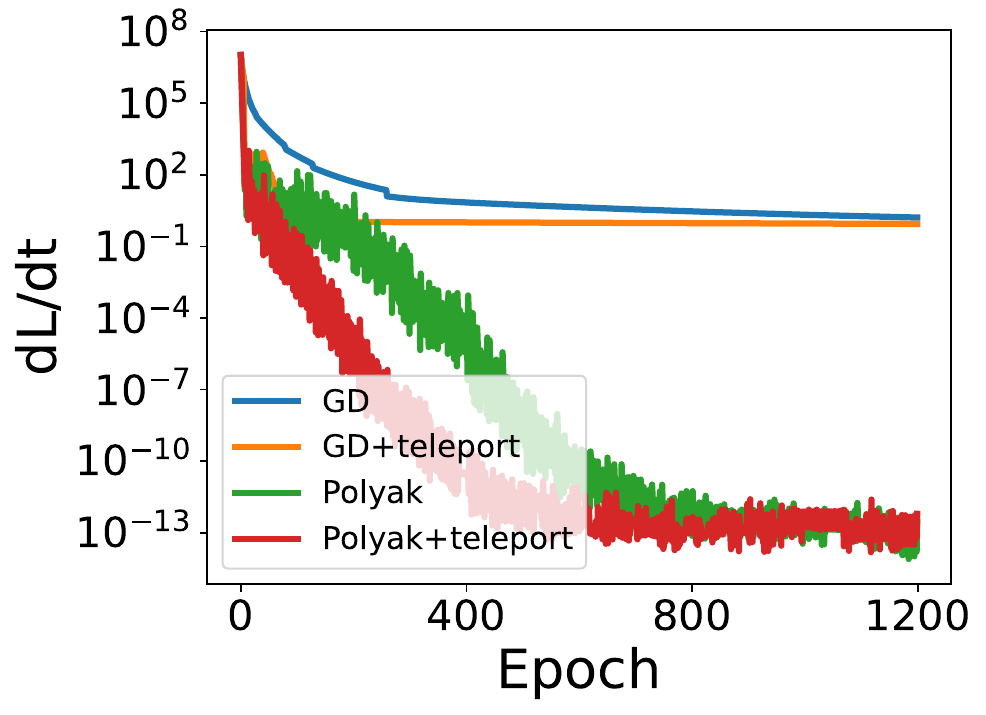}
\includegraphics[width=0.24\textwidth]{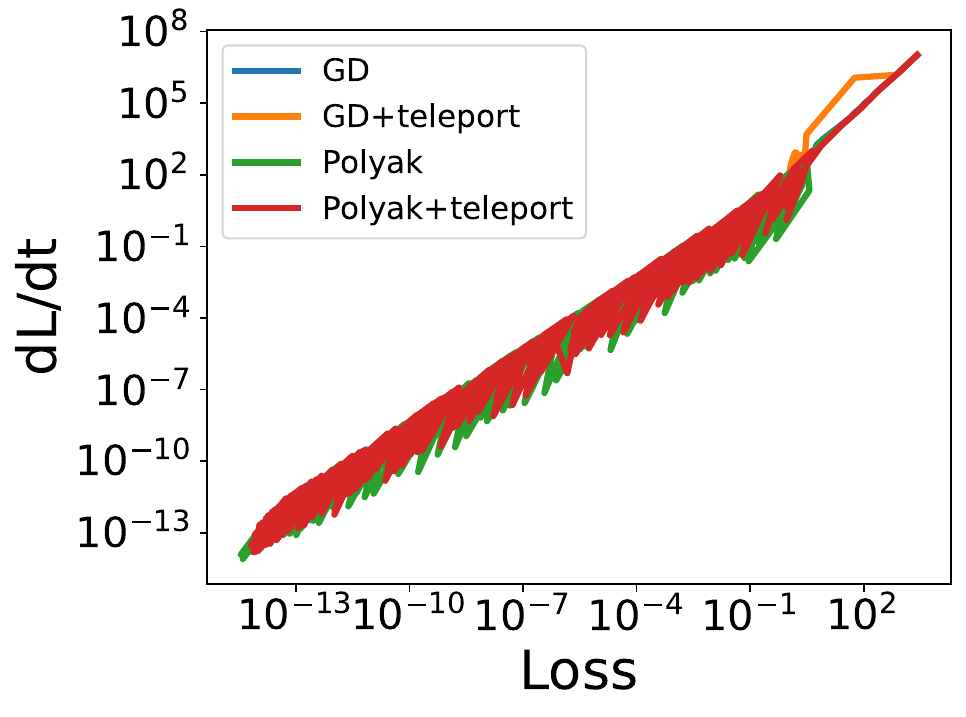}
\caption{Polyak.}
\label{fig:teleport-polyak}
\end{figure}

\paragraph{Momentum.} 
We compare three strategies of integrating teleportation with momentum: teleporting both parameters and momentum (Algorithm \ref{alg:teleport-momentum}), teleporting parameters but not momentum (Algorithm \ref{alg:teleport-momentum} with line 6 removed), and reset momentum to 0 after a teleportation (Algorithm \ref{alg:teleport-momentum} with line 6 replaced by $\vv_t \leftarrow 0$). 

The training curves of teleporting momentum in different ways are similar (Figure \ref{fig:teleport-momentum}).
Using the best method to teleport momentum (reset), gradient descent + teleportation with or without momentum is almost the same (Figure \ref{fig:teleport-momentum-GD}).

\begin{figure}[h]
\centering
\ \ \ a\hfill b \hfill c\hfill d \hfill ~ \\
\includegraphics[width=0.24\textwidth]{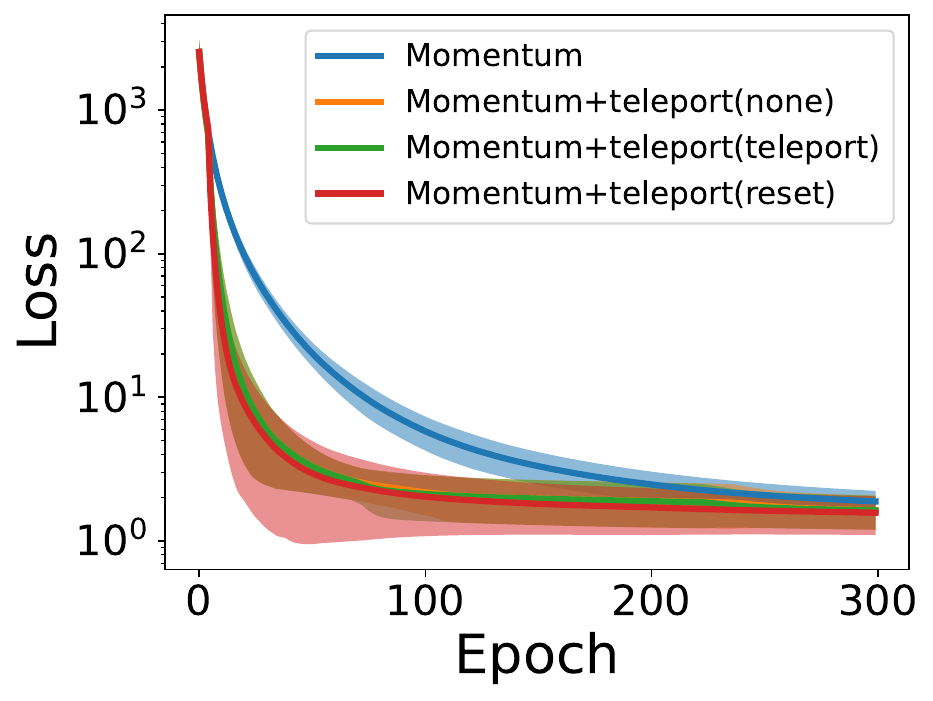}
\includegraphics[width=0.24\textwidth]{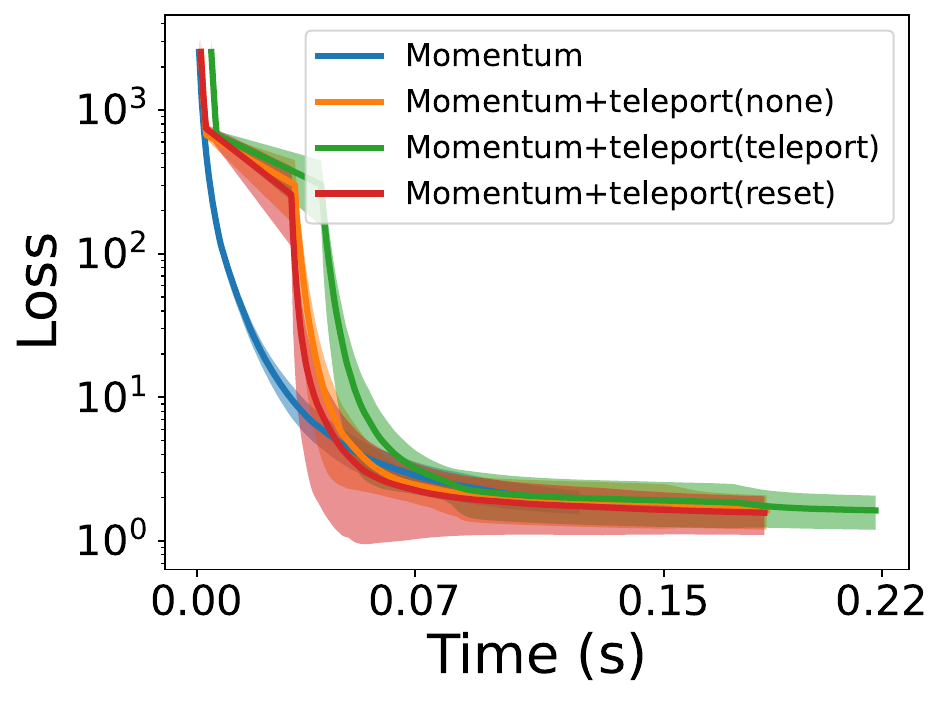}
\includegraphics[width=0.24\textwidth]{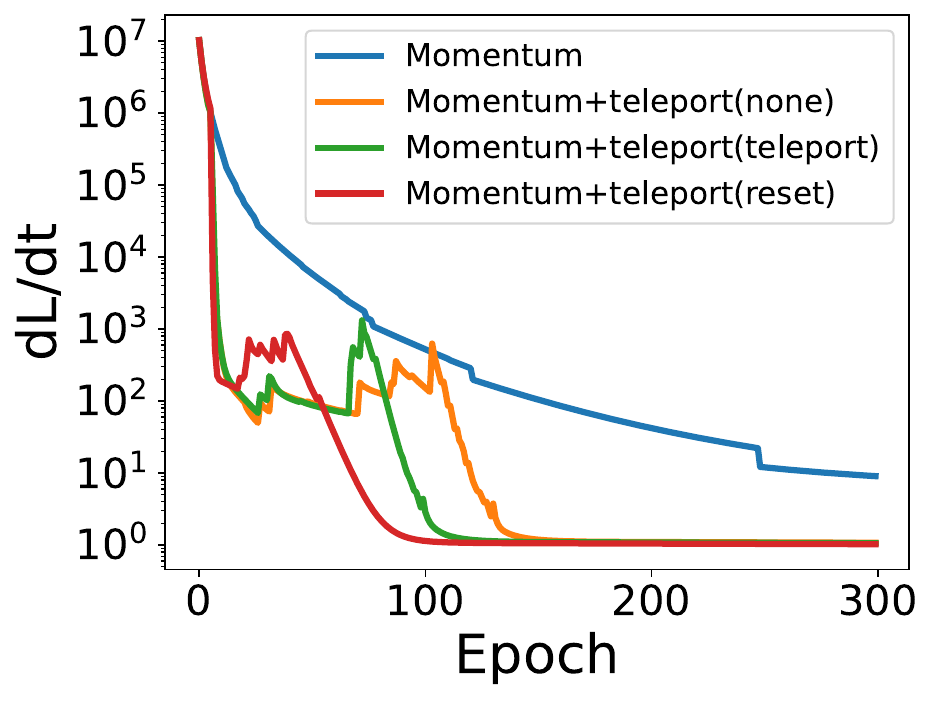}
\includegraphics[width=0.24\textwidth]{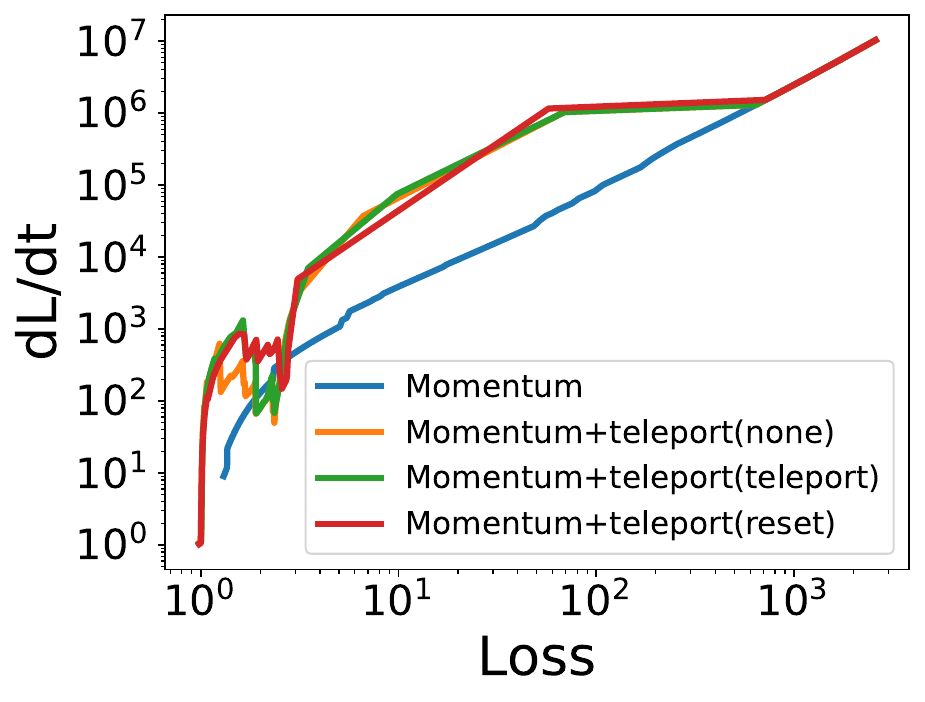}
\caption{Teleporting momentum.}
\label{fig:teleport-momentum}
\end{figure}

\begin{figure}[h]
\centering
\ \ \ a\hfill b \hfill c\hfill d \hfill ~ \\
\includegraphics[width=0.24\textwidth]{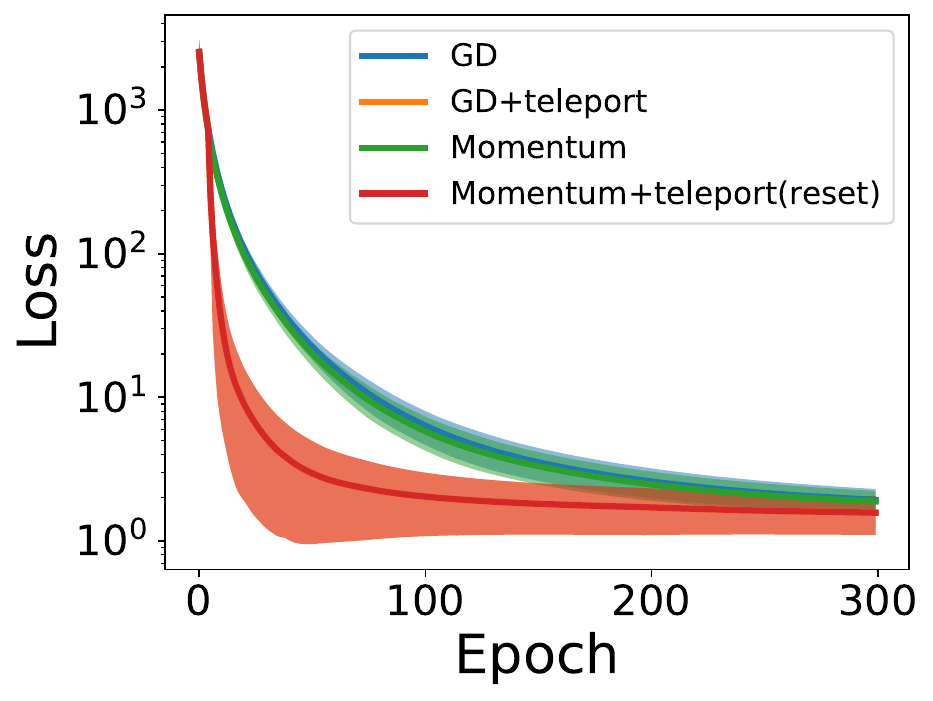}
\includegraphics[width=0.24\textwidth]{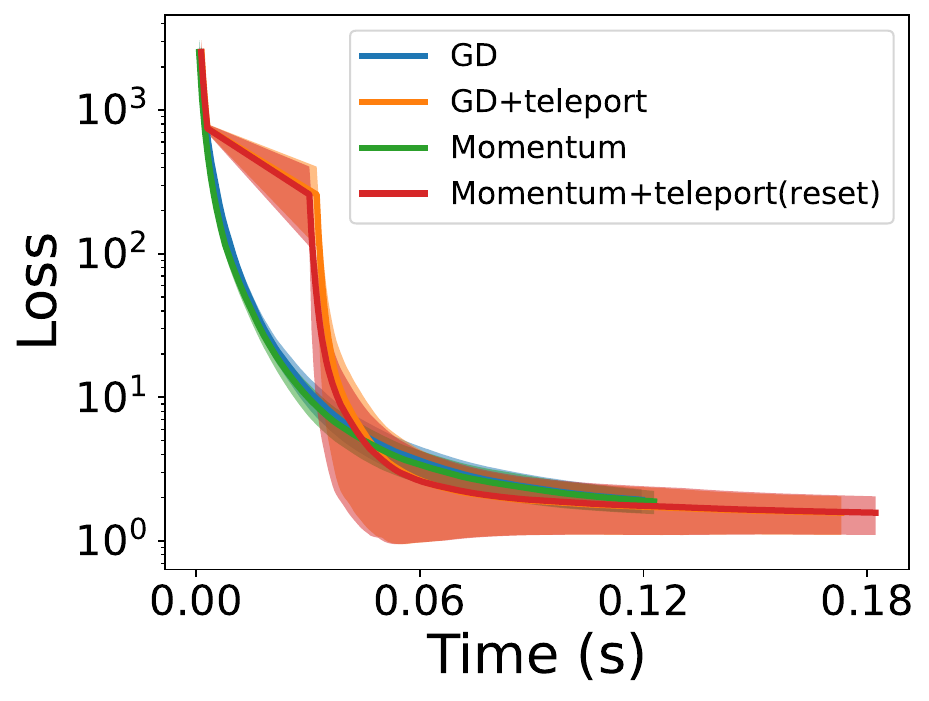}
\includegraphics[width=0.24\textwidth]{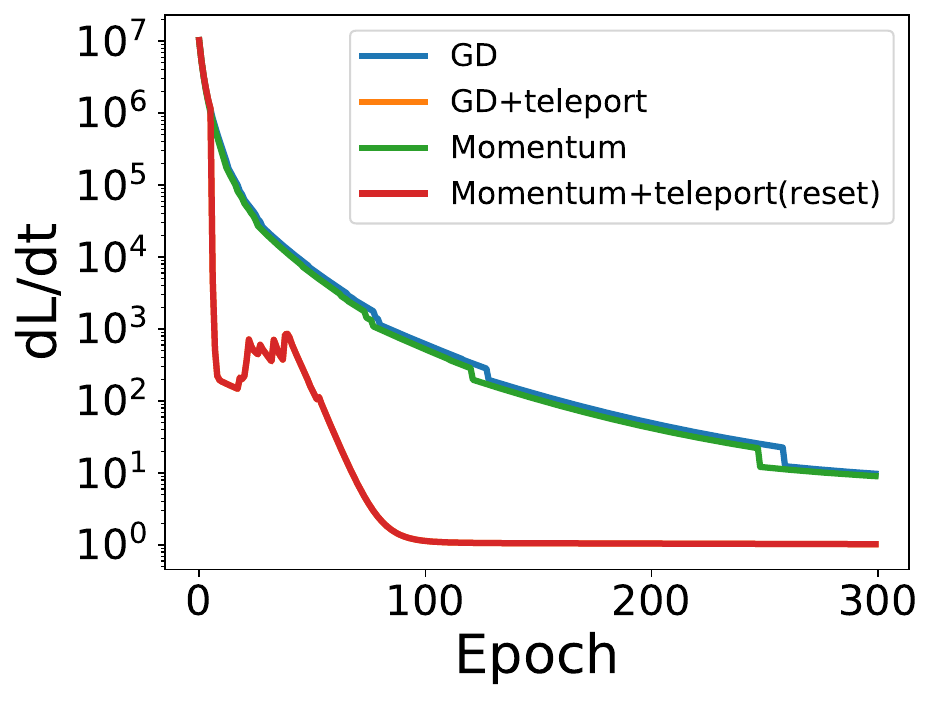}
\includegraphics[width=0.24\textwidth]{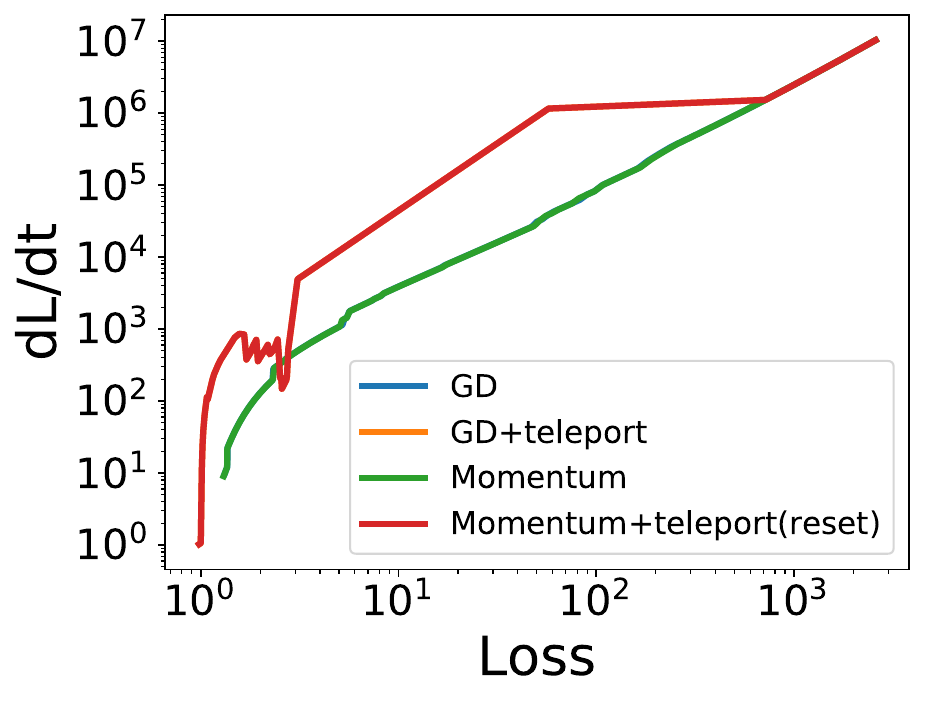}
\caption{Momentum best setting vs GD.}
\label{fig:teleport-momentum-GD}
\end{figure}

\paragraph{AdaGrad.} In AdaGrad (Algorithm \ref{alg:teleport-adagrad}), the rate of change in loss is 
\begin{align}
    {d\loss(\vw)\over dt} 
    = \bk{{\ro \L \over \ro \vw },{d\vw \over dt} } 
    = - \grad \L^T \frac{\eta}{\sqrt{\eps I + diag(G_{t+1})}} \grad\L
    = - \eta \|\grad \L\|_A
\end{align}
where $\eta \in \R$ is the learning rate, and $\|\grad \L\|_A$ is the Mahalanobis norm with $A = (\eps I + diag(G_{t+1}))^{-\frac{1}{2}}$. Previously, we optimize $\|\grad \L\|_2$ in teleportation. We compare that to optimizing $\|\grad \L\|_A$. Since the magnitude of $A$ is different than 1, a different learning rate for the gradient ascent in teleportation is required. We choose the largest learning rate (with two significant figures) that does not lead to divergence. The teleportation learning rates used are $1.2 \times 10^{-5}$ for objective $\max_g \|\grad \L\|_2$ and $7.5 \times 10^{-3}$ for objective $\max_g \|\grad \L\|_A$.

Teleporting using the group element that optimizes $\|\grad \L\|_A$ has a slight advantage (Figure \ref{fig:teleport-adagrad}).

\begin{figure}[h!]
\centering
\ \ \ a\hfill b \hfill c\hfill d \hfill ~ \\
\includegraphics[width=0.24\textwidth]{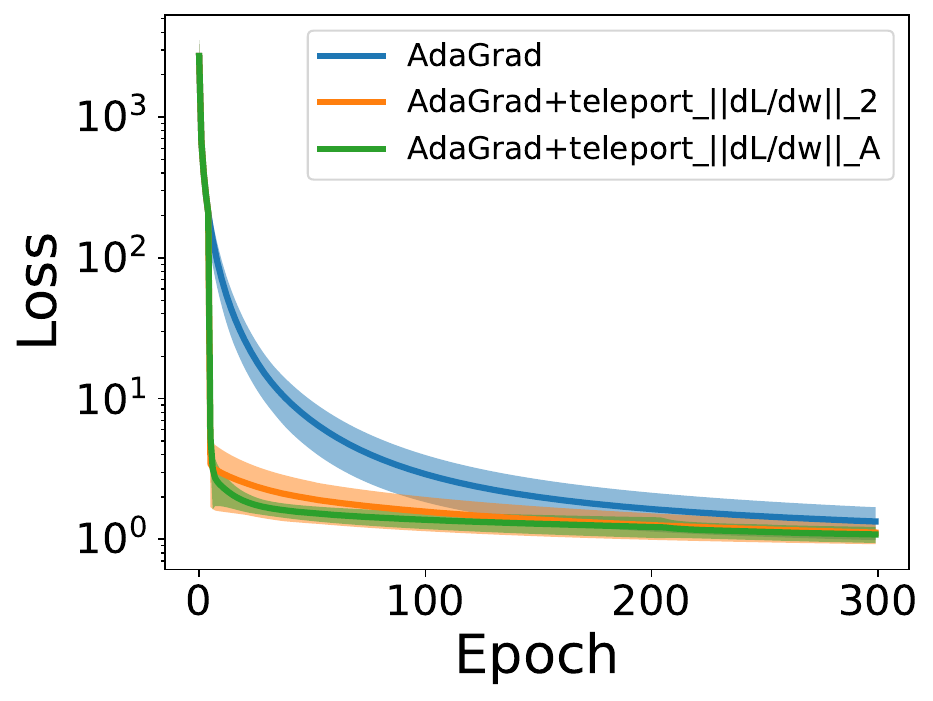}
\includegraphics[width=0.24\textwidth]{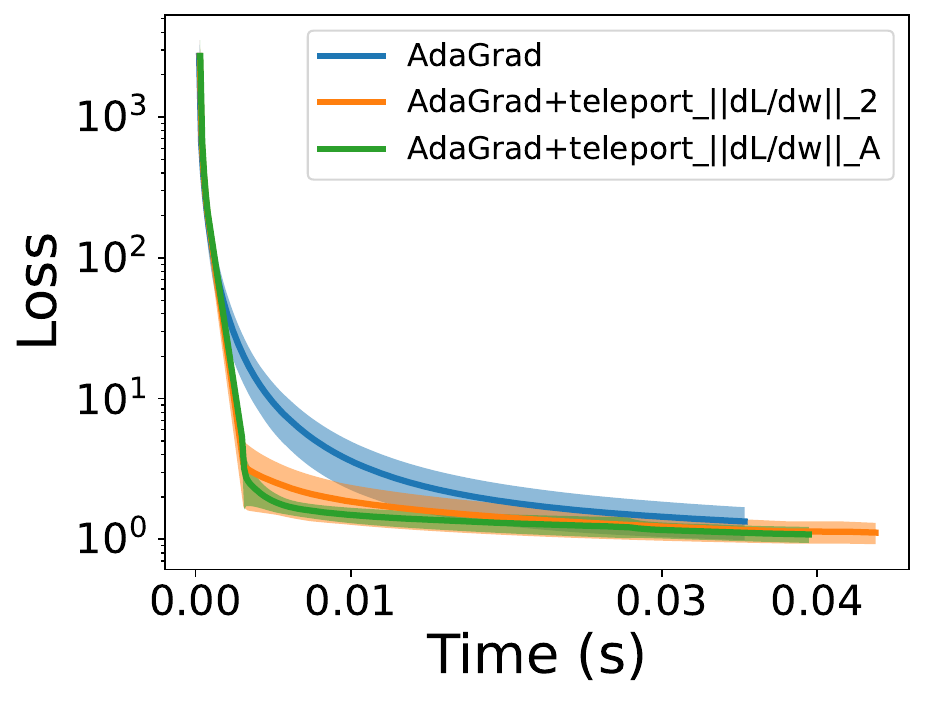}
\includegraphics[width=0.24\textwidth]{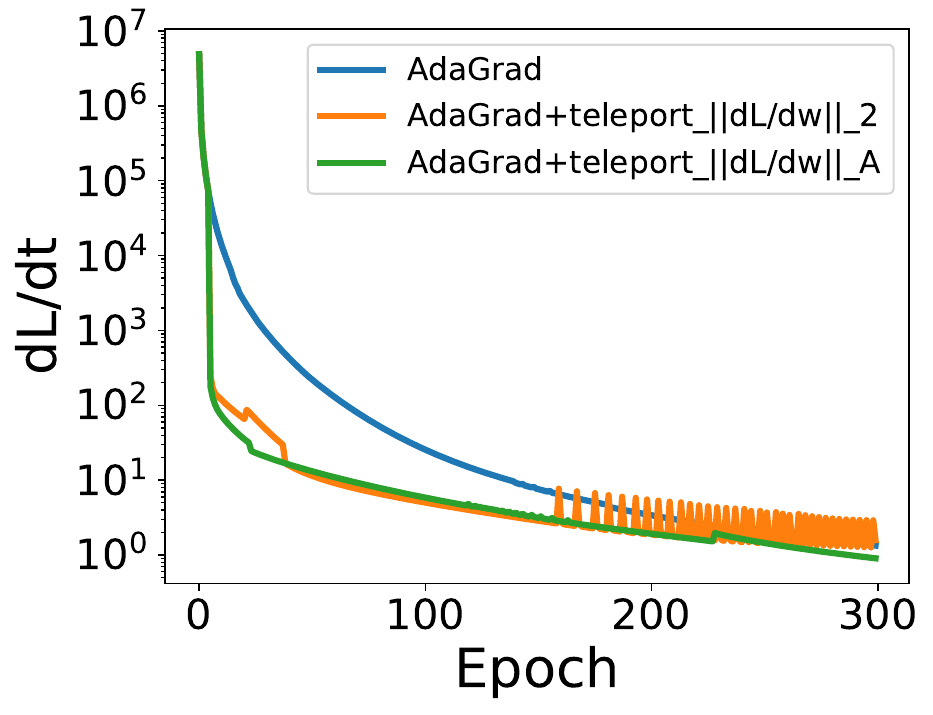}
\includegraphics[width=0.24\textwidth]{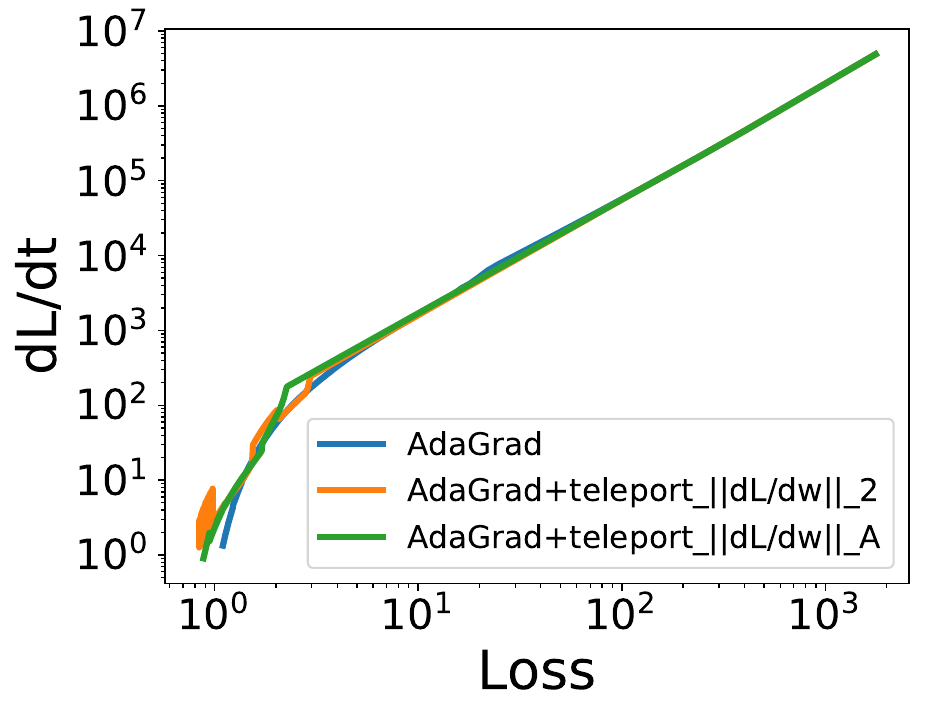}
\caption{AdaGrad.}
\label{fig:teleport-adagrad}
\end{figure}

}

\subsection{Different methods of integrating teleportation with momentum and AdaGrad}

\paragraph{Setup.}
We test teleportation with various algorithms using the a 3-layer neural network and mean square error: $\min_{W_1,W_2,W_3}\| Y - W_3 \sigma(W_2 \sigma(W_1 X))\|_2$, with data $X \in \R^{5 \times 4}$, target $Y \in \R^{8 \times 4}$, and weight matrices $W_3 \in \R^{8 \times 7}$, $W_2 \in \R^{7 \times 6}$, and $W_1 \in \R^{6 \times 5}$. The activation function $\sigma$ is LeakyReLU with slope coefficient 0.1. 
Each element in the weight matrices is initialized uniformly at random over $[0, 1]$.
Data $X, Y$ are randomly generated also from $[0, 1]$. 


\paragraph{Momentum.} 
We compare three strategies of integrating teleportation with momentum: teleporting both parameters and momentum, teleporting parameters but not momentum, and reset momentum to 0 after a teleportation. In each run, we teleport once at epoch 5. Each strategy is repeated 5 times.

The training curves of teleporting momentum in different ways are similar (Figure \ref{fig:other-algorithms}a), possibly because the momentum accumulated is small compared to the gradient right after teleportations. All methods of teleporting momentum improves convergence, which means teleportation works well with momentum.


\paragraph{AdaGrad.} In AdaGrad, the rate of change in loss is 
\begin{align}
    {d\loss(\vw)\over dt} 
    = {\ro \loss \over \ro \vw }^T {d\vw \over dt}
    = - \eta \|\grad \loss\|_A,
\end{align}
where $\eta \in \R$ is the learning rate, and $\|\grad \loss\|_A$ is the Mahalanobis norm with $A = (\eps I + diag(G_{t+1}))^{-\frac{1}{2}}$. Previously, we optimize $\|\grad \loss\|_2$ in teleportation. We compare that to optimizing $\|\grad \loss\|_A$. Since the magnitude of $A$ is different than 1, a different learning rate for the gradient ascent in teleportation is required. We choose the largest learning rate (with two significant figures) that does not lead to divergence. The teleportation learning rates used are $1.2 \times 10^{-5}$ for objective $\max_g \|\grad \loss\|_2$ and $7.5 \times 10^{-3}$ for objective $\max_g \|\grad \loss\|_A$.

Teleporting using the group element that optimizes $\|\grad \loss\|_A$ has a slight advantage (Figure \ref{fig:other-algorithms}b). Similar to the observations in \cite{zhao2022symmetry}, teleportation can be integrated into adaptive gradient descents.


\begin{figure}[h]
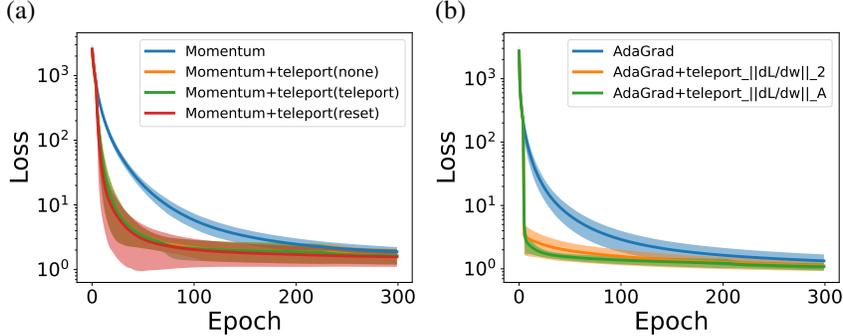

\centering
\ \ \ \hspace{30pt} (a) \hfill \hspace{-35pt} (b) \hfill ~ \\
\includegraphics[width=0.4\columnwidth]{figures/other_algorithms/momentum/momentum_multi_layer_loss.pdf}
\includegraphics[width=0.4\columnwidth]{figures/other_algorithms/adagrad/adagrad_multi_layer_loss.pdf}
\caption{Comparison of different methods of integrating teleportation with momentum and AdaGrad. }
\label{fig:other-algorithms}
\end{figure}

\subsection{Additional details for experiments on MNIST}
\label{sec:appendix-optimization-experiment}

We use a three-layer model and cross-entropy loss for classification with minibatches of size 20.
For a batch of flattened input data $X \in \R^{28^2 \times 20}$ and labels $Y \in \R^{20}$, the loss function is
$\loss(W_1,W_2,W_3, X, Y) = \text{CrossEntropy} \pa{W_3 \sigma(W_2 \sigma(W_1 X)), Y}$, 
where $W_3 \in \R^{10 \times 10}$, $W_2 \in \R^{10 \times 16}$, $W_1 \in \R^{16 \times 28^2}$ are the weight matrices, and $\sigma$ is the LeakyReLU activation with slope coefficient 0.1. 
The learning rates are $10^{-4}$ for AdaGrad, and $5\times 10^{-2}$ for SGD with momentum, RMSProp, and Adam. The learning rate for optimizing the group element in teleportation is $5\times 10^{-2}$, and we perform 10 gradient ascent steps when teleporting using each mini-batch. 
We use 50,000 samples from training set for training, and 10,000 samples in the test set for testing. 


\begin{figure}[h]
\centering
\ \ \ (a) \hfill (b) \hfill (c) \hfill (d) \hfill ~ \\
\includegraphics[width=0.245\columnwidth]{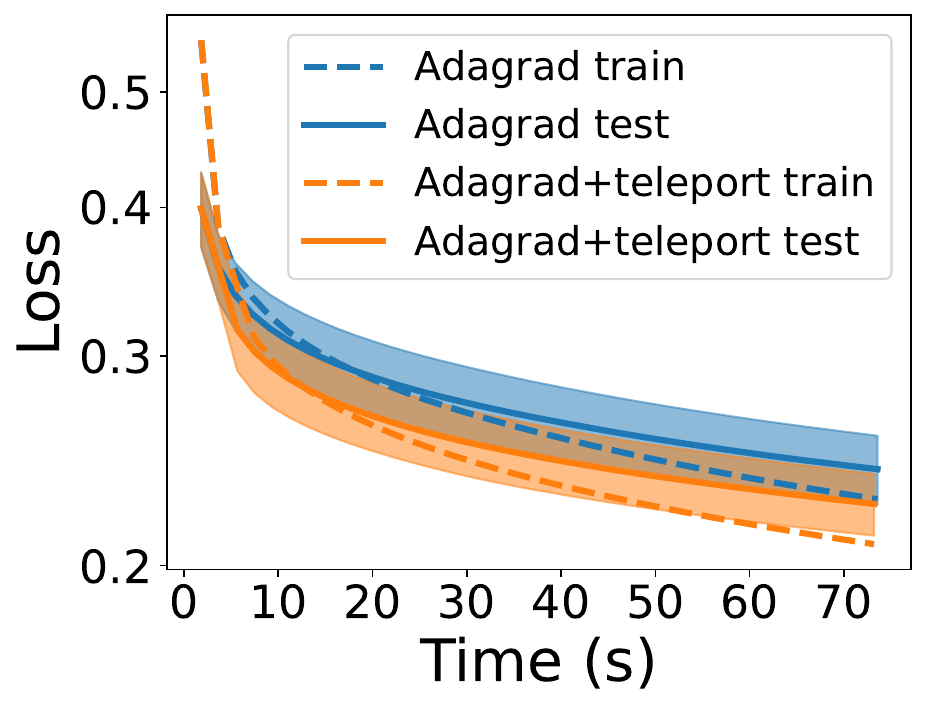}
\includegraphics[width=0.245\columnwidth]{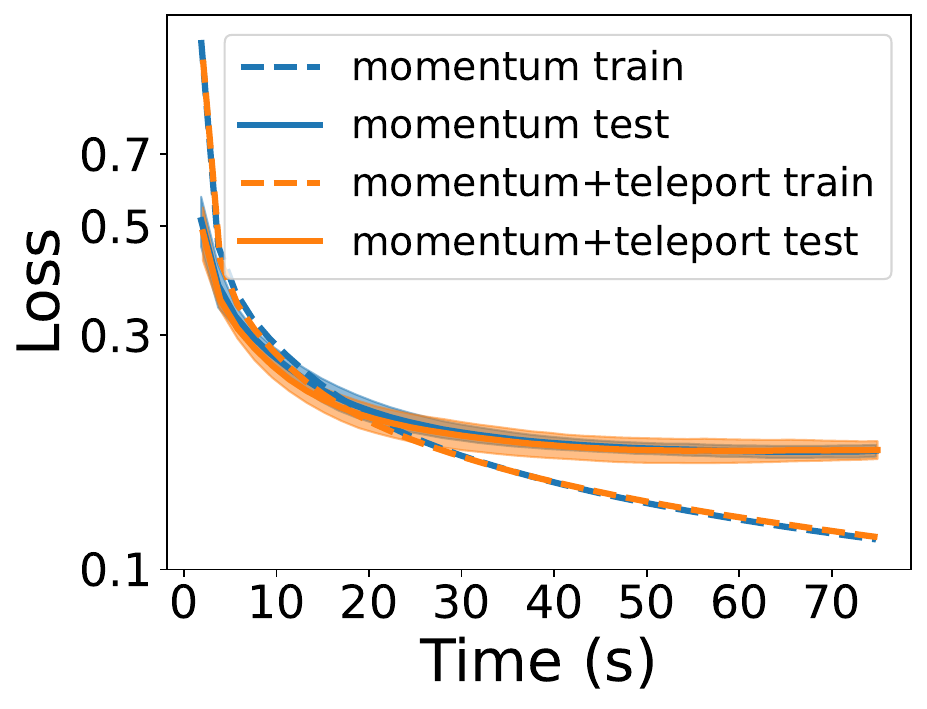}
\includegraphics[width=0.245\columnwidth]{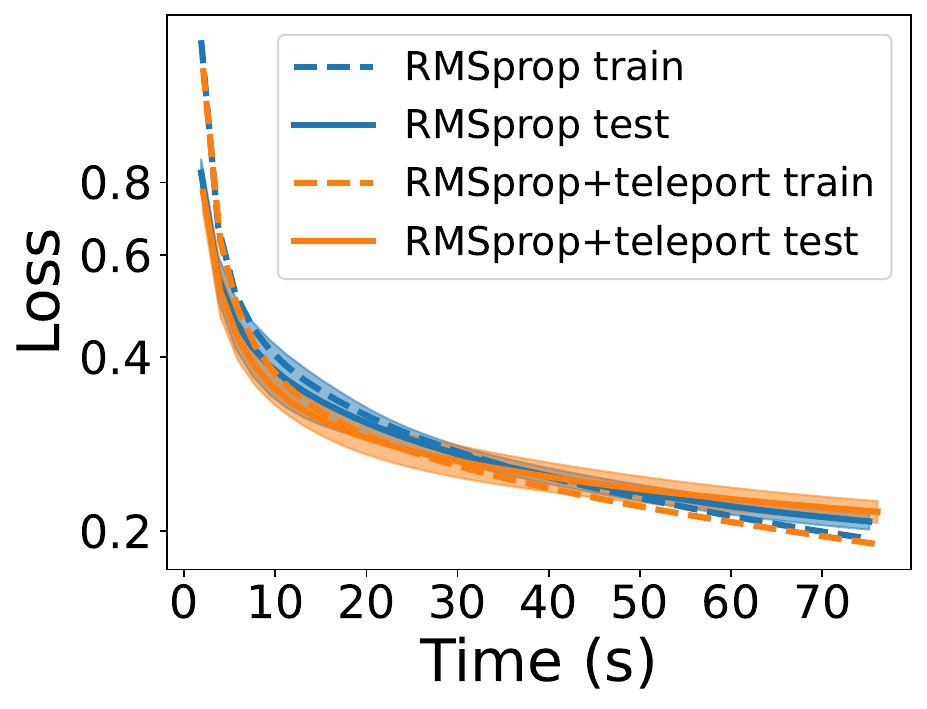}
\includegraphics[width=0.245\columnwidth]{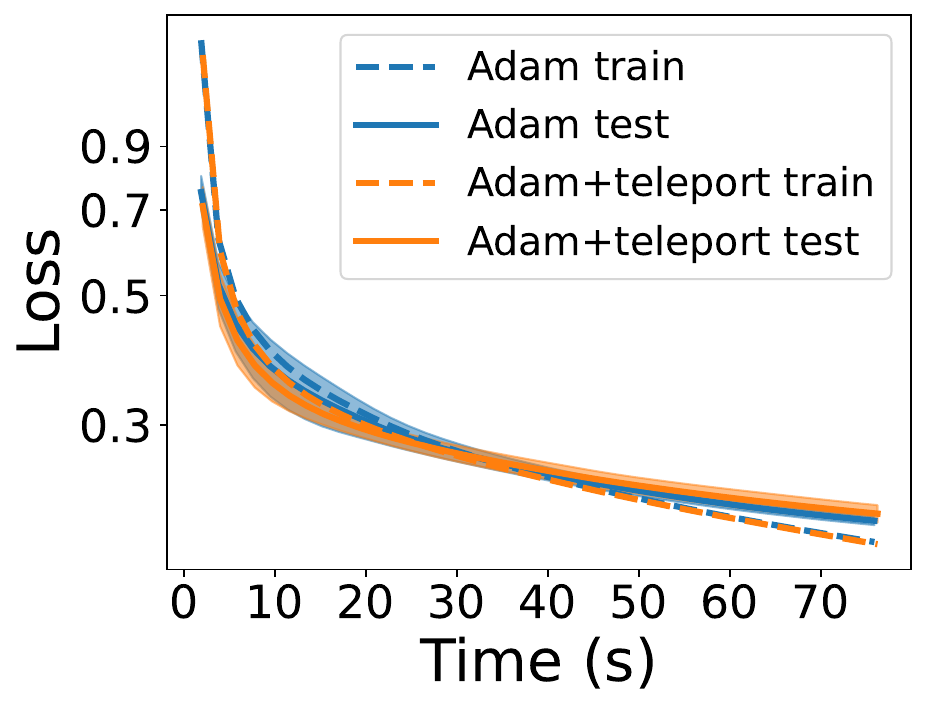}
\caption{Runtime comparison for integrating teleportation into various algorithms. Solid line represents average training loss, and dashed line represents average test loss. Shaded areas are 1 standard deviation of the test loss across 5 runs. The  plots look almost identical to Figure \ref{fig:other-algorithms-loss-vs-epoch}, indicating that the cost of teleportation is negligible compared to gradient descents. }
\label{fig:other-algorithms-loss-vs-time}
\end{figure}



\end{document}